\documentclass{article}

\PassOptionsToPackage{numbers, compress, sort}{natbib}
 \usepackage[preprint]{neurips_2026}

\usepackage[utf8]{inputenc} 
\usepackage[T1]{fontenc}    
\usepackage{hyperref}       
\usepackage{url}            
\usepackage{booktabs}       
\usepackage{amsfonts}       
\usepackage{nicefrac}       
\usepackage{microtype}      
\usepackage{xcolor}         

\usepackage{float}  
\usepackage{graphicx}
\usepackage{booktabs} 
\usepackage{bm}

\usepackage{wrapfig}
\usepackage{booktabs}

\usepackage{color}
\usepackage{amsthm}
\usepackage{cite}
\usepackage{enumerate}
\usepackage{bm}
\usepackage{multirow}
\usepackage{mathrsfs}
\usepackage{algorithm}
\usepackage[noend]{algpseudocode}
\usepackage{caption}
\usepackage{multirow}
\usepackage{wrapfig}
\usepackage{tikz}
\usepackage{nicefrac,xfrac}
\usepackage{amsmath,amssymb}
\usepackage{amsfonts}
\usepackage{subcaption}
\usepackage{todonotes}
\usepackage{mathtools}

\newtheorem{mytheo}{Theorem}
\newtheorem{mydef}{Definition}

\newtheorem{myrema}{Remark}
\newtheorem{mylemma}{Lemma}

\newtheorem{myassump}{Assumption}
\newtheorem{myprop}{Proposition}

\title{Interaction-Limited Safe Continuous-Time RL \\
for Dynamical Medical Treatment}

\author{
\textbf{
Xun Shen$^{1*}$ \quad
Yuepeng Wang$^{1}$\thanks{X.Shen and Y. Wang contributed equally to this work.} \quad
Akifumi Wachi$^{2}$ \quad
Yongqi Zhou$^{3}$ \quad
Richard Weiss$^{3}$
}\\
\textbf{
Yoshihiko Fujisawa$^{4}$ \quad
Ken Kawano$^{4}$ \quad
Mehrshad Sadria$^{5}$ \quad
Ying Chen$^{3}$ \quad
Xin Liu$^{6}$
}\\
\textbf{
Sebastien Gros$^{7}$ \quad
Xiao Hu$^{8}$ \quad
Kyoung\mbox{-}Sook Kim$^{6}$ \quad
Mengmou Li$^{9}$
}\\
\textbf{
Katsuki Fujisawa$^{4}$ \quad
Kenji Wakabayashi$^{4}$
}\\
$^{1}$Tokyo University of Agriculture and Technology, 
$^{2}$LY Corporation, \\
$^{3}$National University of Singapore, 
$^{4}$Institute of Science Tokyo,
$^{5}$Altos Labs, Inc., \\
$^{6}$National Institute of Advanced Industrial Science and Technology (AIST), \\
$^{7}$Norwegian University of Science and Technology,
$^{8}$Emory University, 
$^{9}$Hiroshima University\\
\texttt{shen@go.tuat.ac.jp}
}

\begin{document}

\maketitle

\begin{abstract}
Dynamic medical treatment requires deciding treatment intensity and intervention timing, while patient states evolve continuously and adverse events may occur between clinical interactions. 
Most existing treatment learning methods assume fixed schedules or enforce safety only at discrete decision points. 
We propose \textit{Interaction-Limited Safe Continuous-Time Reinforcement Learning}, a framework that jointly optimizes treatment administration and clinical interaction timing under trajectory-level safety constraints. 
Our key idea is to reformulate the continuous-time treatment problem as an option-based semi-Markov decision process, where each option specifies a continuous-time treatment policy and its duration. 
We develop a safety-tightening mechanism showing that suitably constructed constraints at interaction times guarantee safety over the full continuous-time trajectory with high probability. 
We further establish finite-sample guarantees for policy learning from logged treatment trajectories and introduce a practical data-driven conservative surrogate. 
Experiments show that the proposed adaptive interaction-timing mechanism improves both safety and treatment effectiveness over equidistant interaction schemes across different safe policy optimization methods.
\end{abstract}

\section{Introduction}
\label{sec:introduction}

Clinical treatment planning is inherently a continuous-time decision-making problem \citep{Carbajo2026}. 
In many high-stakes settings such as medical treatment of sepsis \citep{Ghossein2024} or cancer \citep{Zhang2023CTDT}, clinicians must decide not only how much treatment to administer but also when to intervene, while balancing treatment efficacy against safety constraints such as toxicity limits, physiological risk thresholds, and maximum allowable delays between clinical assessments.
Recent reinforcement learning approaches to dynamic treatment typically formulate the problem as a discrete-time Markov decision process with fixed or pre-specified decision intervals \citep{Komorowski, Shen_NeurIPS2025}. 
As a result, treatment decisions are optimized only at discrete intervention times, while patient states continue to evolve between them. 
This mismatch is especially problematic in safety-critical settings, because adverse events may occur between clinical interactions rather than exactly at them. 
Even recent continuous-time learning frameworks that optimize both treatment timing and dosage from logged data do not explicitly guarantee safety over the full continuous-time trajectory between interventions \citep{Zhang2023CTDT, Schneider2026}. 
These limitations motivate a new formulation for dynamic treatment that jointly optimizes treatment administration and interaction timing while enforcing trajectory-level safety in continuous time. 
A more detailed discussion of related work is provided in Appendix~\ref{appendix:related_work}.

\textbf{Contributions.\space}
We propose \textit{Interaction-Limited Safe Continuous-Time Reinforcement Learning}, a theoretically grounded framework for learning safe and effective treatment policies for dynamical medical treatment. 
Our key contributions are as follows. 
(1) We formulate dynamic treatment as a continuous-time safety-constrained decision-making problem that jointly optimizes treatment administration and clinical interaction timing. 
(2) By introducing an interaction-limited option-based reformulation, our framework captures both continuous-time treatment evolution and the clinically important decision of when to measure, reassess, or intervene. 
(3) We develop a safety-tightening mechanism showing that constraints enforced only at interaction times suffice to ensure continuous-time trajectory safety with high probability. We also provide finite-sample guarantees for learning from logged treatment trajectories and a practical data-driven conservative surrogate for implementation.
(4) We show that the resulting practical approximation can be constructed directly from logged patient trajectories and integrated with existing safe reinforcement learning algorithms. 
This provides a principled and deployable approach for learning continuous-time treatment policies that retain high-probability feasibility for the original safety-constrained treatment problem.

\section{Problem Formulation}
\label{sec:problem}

\noindent
\textbf{Targeted system.}  
Let $\bm{\mathrm{x}}\in\mathcal{X}\subseteq\mathbb{R}^{d_{\mathsf{s}}}$ and $\bm{\mathrm{u}}\in\mathcal{U}\subseteq\mathbb{R}^{d_{\mathsf{a}}}$ denote the patient state and the permissible treatment action. 
Both $\mathcal{X}$ and $\mathcal{U}$ are continuous spaces. 
Moreover, $\mathcal{U}$ is assumed to be compact, since treatment actions, such as dosage levels, are typically restricted to a bounded and closed set. 
The patient state evolves in continuous time as follows:
\begin{equation}
    \label{eq:system_dynamics}
    \bm{\mathrm{x}}_{t}=\bm{\mathrm{x}}_0 + \int_{0}^{t} f(\bm{\mathrm{x}}_{\tau},\bm{\mathrm{u}}_{\tau},\bm{\mathrm{w}}_{\tau})\mathsf{d}\tau. 
\end{equation}
Here, $\bm{\mathrm{w}}_{t}\in\mathcal{W}$ represents a stochastic disturbance or model uncertainty, which captures factors affecting patient evolution that are not fully predictable from the current state and treatment action. In medical treatment scenarios, such disturbances may reflect unobserved physiological variability, heterogeneous treatment response, measurement noise, latent disease progression, or unexpected external influences such as concurrent medications and clinical complications. 
The disturbance process $\{\bm{\mathrm{w}}_{t}\}_{t\ge 0}$ is assumed to be exogenous, with sample paths that are measurable and locally integrable in time. 
The measurability and local integrability conditions are natural from a modeling perspective, since patient conditions and treatment responses typically vary continuously over time without requiring idealized smooth trajectories, while still allowing accumulated uncertainty over any finite clinical interval to be well defined. 
Here, $f(\bm{\mathrm{x}},\bm{\mathrm{u}},\bm{\mathrm{w}})$ is assumed to be measurable in $\bm{\mathrm{w}}$ and locally Lipschitz continuous in $\bm{\mathrm{x}}$, uniformly over $\bm{\mathrm{u}}\in\mathcal{U}$ and $\bm{\mathrm{w}}\in\mathcal{W}$. 
The safe region is given by
\begin{equation}
    \mathcal{X}_{\mathsf{safe}}:=\left\{\bm{\mathrm{x}}\in\mathcal{X}:g(\bm{\mathrm{x}})\leq 0\right\}.
\end{equation}

\paragraph{Underlying interaction-limited continuous-time treatment problem.}
In practice, clinical measurements and treatment updates are costly and occur only at irregular interaction times.
We therefore first formulate the underlying control problem for \emph{interaction-limited continuous-time treatment}.
\begin{mydef}
\label{def:segment_controller}
An interaction-limited segment controller is defined as a tuple $\gamma=(\phi,\eta),$
where
(i) $\phi:\mathcal{X}\to\mathcal{U}$ is a measurable intra-segment feedback law, and 
(ii) $\eta:\mathcal{X}\times[0,T)\to[0,1]$ is a termination function that induces a (possibly stochastic) starting time for the next segment.
\end{mydef}

Let $\Gamma$ denote the class of admissible segment controllers that induce inter-interaction durations in $[t_{\mathsf{min}}, t_{\mathsf{max}}]$.
At interaction time \(t_k\), a high-level policy \(\nu\) selects a controller $\gamma_k=(\phi_k,\eta_k)\sim \nu(\cdot\,|\,\bm{\mathrm{x}}_{t_k},t_k,k).$ 
Once selected, $\gamma_k$ governs the treatment continuously until the next interaction time: $\bm{\mathrm{u}}_t=\phi_k(\bm{\mathrm{x}}_t)$ for all $t\in[t_k,t_{k+1}).$
The controller terminates at $t_{k+1}=\min\{t_k+\tau_k,T\}$,
where \(\tau_k\) denotes the (possibly random) duration induced by \(\eta_k\).
We define $\delta t_k:=t_{k+1}-t_k$ and impose $t_{\mathsf{min}}\le \delta t_k\le t_{\mathsf{max}}$ almost surely
for all $k$ to enforce clinically meaningful timing constraints.
Let \(N:=\min\{k\ge 0:\ t_k=T\}\) denote the number of executed treatment segments up to horizon \(T\).
We impose a hard budget $K$ on the number of high-level clinical interactions, i.e., $N\le K$ almost surely.
The resulting interaction-limited treatment problem is
\begin{align}
\max_{\nu}\ &\ \mathbb{E}\left[\sum_{k=0}^{N-1}\int_{t_k}^{t_{k+1}} b\!\left(\bm{\mathrm{x}}_t,\phi_k(\bm{\mathrm{x}}_t)\right)\mathsf{d}t\right]
\tag{$\mathsf{CTTP\text{-}IL}$}\label{eq:problem_cttp_il}\\
\mathsf{s.t.}\ &\ \mathsf{Pr}\left\{\bm{\mathrm{x}}_{t}\in\mathcal{X}_{\mathsf{safe}},\ \forall t\in[0,T]\right\}\ge 1-\alpha,
\label{eq:problem_cttp_il_chance}\\
&\ N\le K,\quad t_0=0,\quad t_N=T,\quad \delta t_k\in[t_{\mathsf{min}},t_{\mathsf{max}}]\ \ \forall k<N.
\label{eq:problem_cttp_il_budget}
\end{align}
Problem \ref{eq:problem_cttp_il} captures the clinically relevant restriction that treatment is updated at a limited interaction number while the patient state and the treatment evolve continuously between interactions.

\paragraph{Interaction-limited high-level control via options.}
We now introduce options as a structured representation of the segment controllers in \ref{eq:problem_cttp_il}.
Thus, the options framework is used here not to define a different control objective, but to represent the same interaction-limited treatment problem in a hierarchical and learning-friendly form \citep{sutton1999between}.

\begin{mydef}[Continuous-time option]
\label{def:ct_option}
A (continuous-time) option \(o\in\mathcal{O}\) is a tuple $o=(\pi_o,\beta_o),$
where
(i) \(\pi_o:\mathcal{X}\to\mathcal{U}\) is an intra-option feedback policy, and
(ii) \(\beta_o:\mathcal{X}\times[0,T)\to[0,1]\) is a termination function that induces a (possibly stochastic) starting time for the next clinical interaction.
\end{mydef}

At interaction time \(t_k\), the high-level option policy \(\mu\) selects an option $o_k\sim \mu(\cdot\,|\,\bm{\mathrm{x}}_{t_k},t_k,k).$
Once initiated, the option generates the continuous-time treatment trajectory
\begin{equation}
\label{eq:intra_option_control}
\bm{\mathrm{u}}_t=\pi_{o_k}(\bm{\mathrm{x}}_t),\qquad t\in[t_k,t_{k+1}),
\end{equation}
and terminates at
\begin{equation}
\label{eq:option_termination_time}
t_{k+1}=\min\{t_k+\tau_k,T\},
\qquad
\delta t_k:=t_{k+1}-t_k,
\end{equation}
where \(\tau_k\) denotes the duration induced by \(\beta_{o_k}\).
As above, we impose $t_{\mathsf{min}}\le \delta t_k\le t_{\mathsf{max}}
\ \text{a.s.}$

\paragraph{Option-based Interaction-Limited Safety Constrained Problem (O-ILSCP).}
Using the option representation, we write the interaction-limited treatment problem as
\begin{align}
\max_{\mu,\{\pi_o,\beta_o\}_{o\in\mathcal{O}}}\ &\ \mathbb{E}\left[\sum_{k=0}^{N-1}\int_{t_k}^{t_{k+1}} b\!\left(\bm{\mathrm{x}}_{t},\pi_{o_k}(\bm{\mathrm{x}}_t)\right)\mathsf{d}t\right]
\tag{$\mathsf{O\text{-}ILSCP}$}\label{eq:problem_option_ils}\\
\mathsf{s.t.}\ &\ \mathsf{Pr}\left\{\bm{\mathrm{x}}_{t}\in\mathcal{X}_{\mathsf{safe}},\ \forall t\in[0,T]\right\}\ge 1-\alpha,
\label{eq:problem_option_ils_chance}\\
&\ N\le K,\quad t_0=0,\quad t_N=T,\quad \delta t_k\in[t_{\mathsf{min}},t_{\mathsf{max}}]\ \ \forall k<N.
\label{eq:problem_option_ils_budget}
\end{align}
Problem \ref{eq:problem_option_ils} therefore optimizes treatment administration and clinical interaction timing under the same trajectory-level safety requirement as \ref{eq:problem_cttp_il}, but uses options as the representation of the inter-interaction treatment law.
The following result clarifies the relation between the underlying interaction-limited treatment problem and its option-based formulation.

\begin{myprop}[Every option-based policy induces an interaction-limited treatment law]
\label{prop:oilscp_to_cttpil}
Assume that every admissible option \(o\in\mathcal O\) induces an admissible segment controller $(\pi_o,\beta_o)\in\Gamma.$
Then every feasible solution of \ref{eq:problem_option_ils} induces a feasible solution of \ref{eq:problem_cttp_il} with the same closed-loop state-control trajectory, the same accumulated reward, and the same trajectory-level safety probability.
Then, $V^\star_{\mathsf{O\text{-}ILSCP}}
\le
V^\star_{\mathsf{CTTP\text{-}IL}},$
where \(V^\star_{\mathsf{O\text{-}ILSCP}}\) and \(V^\star_{\mathsf{CTTP\text{-}IL}}\) denote the corresponding optimal values.
\end{myprop}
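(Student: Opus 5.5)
The plan is a direct construction: from any feasible option-based solution I build a high-level policy for \ref{eq:problem_cttp_il} that reproduces the same closed loop, then compare optimal values by taking a supremum.

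Fix a feasible solution $(\mu,\{\pi_o,\beta_o\}_{o\in\mathcal O})$ of \ref{eq:problem_option_ils}. By assumption each option $o$ yields an admissible segment controller $\gamma(o):=(\pi_o,\beta_o)\in\Gamma$. Define the induced high-level policy $\nu$ as the pushforward of $\mu$ under this map:
\[
\nu(\cdot\,|\,\bm{\mathrm{x}},t,k):=\mu(\cdot\,|\,\bm{\mathrm{x}},t,k)\circ\gamma^{-1}.
\]
Operationally, $\nu$ samples $o\sim\mu(\cdot\,|\,\bm{\mathrm{x}},t,k)$ and outputs $\gamma(o)$. The map $o\mapsto\gamma(o)$ may not be injective, and one must check that the pushforward is well defined. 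To avoid measurability issues on the space $\Gamma$, I would use the sampling representation directly, or equip $\Gamma$ with the $\sigma$-algebra induced by $\gamma$. I expect this measurability bookkeeping to be the only delicate step.

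Next I show the two closed loops coincide pathwise by induction on $k$. Couple both systems on the same probability space: same initial state $\bm{\mathrm{x}}_0$, same disturbance path $\{\bm{\mathrm{w}}_t\}$, same selection randomness, and same termination randomness.
\begin{itemize}
\item \emph{Base case.} At $t_0=0$ the states agree.
\item \emph{Inductive step.} Suppose $t_k$ and $\bm{\mathrm{x}}_{t_k}$ agree. Then $\gamma_k=\gamma(o_k)$, so $\phi_k=\pi_{o_k}$ and $\eta_k=\beta_{o_k}$. On $[t_k,t_{k+1})$ both systems solve \eqref{eq:system_dynamics} with the same feedback, the same disturbance and the same initial condition.
\end{itemize}
Uniqueness of solutions follows because $f$ is locally Lipschitz in $\bm{\mathrm{x}}$ uniformly in $(\bm{\mathrm{u}},\bm{\mathrm{w}})$ and $\bm{\mathrm{w}}$ is measurable and locally integrable, i.e.\ a Carathéodory-type existence–uniqueness argument. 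This uniqueness presumes the composed vector field $\bm{\mathrm{x}}\mapsto f(\bm{\mathrm{x}},\pi_o(\bm{\mathrm{x}}),\bm{\mathrm{w}})$ is well posed. I would treat that well-posedness as part of admissibility in $\Gamma$, and state it explicitly since it is implicit in the paper. Since both systems have the same trajectory and the same termination mechanism, $\tau_k$ and hence $t_{k+1}$ coincide.

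It follows that $N$, the state path $\{\bm{\mathrm{x}}_t\}_{t\in[0,T]}$ and the control path coincide almost surely, so their joint laws agree. Feasibility then transfers as follows.
\begin{itemize}
\item The budget and timing constraints \eqref{eq:problem_cttp_il_budget} are pathwise statements about $N$, $t_k$ and $\delta t_k$, so they hold because \eqref{eq:problem_option_ils_budget} holds.
\item The event $\{\bm{\mathrm{x}}_t\in\mathcal X_{\mathsf{safe}},\,\forall t\}$ is identical in both systems, so its probability equals the one in \eqref{eq:problem_option_ils_chance}, which is at least $1-\alpha$.
\item The integrand $b(\bm{\mathrm{x}}_t,\phi_k(\bm{\mathrm{x}}_t))=b(\bm{\mathrm{x}}_t,\pi_{o_k}(\bm{\mathrm{x}}_t))$ agrees pathwise, so the expected rewards are equal.
\end{itemize}

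For the value inequality, the objective of every feasible option solution is attained by some feasible $\nu$. Hence it is at most $V^\star_{\mathsf{CTTP\text{-}IL}}$, and taking the supremum over feasible option solutions gives $V^\star_{\mathsf{O\text{-}ILSCP}}\le V^\star_{\mathsf{CTTP\text{-}IL}}$. If \ref{eq:problem_option_ils} is infeasible, the inequality holds trivially with the convention $\sup\emptyset=-\infty$.
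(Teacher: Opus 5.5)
Your proposal is correct and follows essentially the same route as the paper's proof. Define $\nu$ to select $(\pi_{o_k},\beta_{o_k})\in\Gamma$ whenever $\mu$ selects $o_k$; then the intra-segment feedback and the termination rule coincide, so the closed-loop trajectory, reward, budget/timing constraints and safety event agree pathwise, and taking the supremum over feasible solutions gives the value inequality. Your extra bookkeeping only makes explicit what the paper leaves implicit: the pushforward definition of $\nu$, the explicit coupling, treating well-posedness of the closed loop under a merely measurable feedback as part of admissibility, and the $\sup\emptyset=-\infty$ convention.
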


\begin{mytheo}[Exact reformulation under an expressive option class]
\label{thm:oilscp_exact}
Suppose, in addition, that the admissible option representation is rich enough so that for every admissible segment controller $\gamma=(\phi,\eta)\in\Gamma,$
there exists an option \(o^\gamma\in\mathcal O\) such that
$\pi_{o^\gamma}=\phi,
\qquad
\beta_{o^\gamma}=\eta.$
Then \ref{eq:problem_option_ils} is an exact reformulation of \ref{eq:problem_cttp_il}.
In particular, the two problems induce the same feasible closed-loop laws on \([0,T]\), and $V^\star_{\mathsf{O\text{-}ILSCP}}
=
V^\star_{\mathsf{CTTP\text{-}IL}}.$
\end{mytheo}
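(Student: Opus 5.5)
The argument has two directions. The direction \(V^\star_{\mathsf{O\text{-}ILSCP}}\le V^\star_{\mathsf{CTTP\text{-}IL}}\) is already supplied by Proposition~\ref{prop:oilscp_to_cttpil}: every feasible option-based policy \((\mu,\{\pi_o,\beta_o\})\) induces a feasible high-level law \(\nu\) with the same closed-loop trajectories. What remains is to build the converse map. Given any feasible \(\nu\) for \ref{eq:problem_cttp_il}, I will construct a feasible option-based policy that reproduces the same closed-loop law on \([0,T]\). This gives the reverse value inequality, and together with the proposition, equality of the sets of induced closed-loop laws.

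\textbf{Construction.} Fix a feasible \(\nu\). For each \(\gamma\in\Gamma\), the expressiveness assumption gives an option \(o^\gamma\in\mathcal O\) with \(\pi_{o^\gamma}=\phi\) and \(\beta_{o^\gamma}=\eta\). I choose one such option for each \(\gamma\), giving a map \(\iota:\Gamma\to\mathcal O\), \(\gamma\mapsto o^\gamma\). I then define the high-level option policy as the pushforward
\[
\mu(\cdot\,|\,\bm{\mathrm{x}},t,k):=\nu(\cdot\,|\,\bm{\mathrm{x}},t,k)\circ\iota^{-1}.
\]
Equivalently, sample \(\gamma_k\sim\nu\) and execute \(o_k=\iota(\gamma_k)\). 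If several controllers share one option, the pushforward simply merges their probabilities, which is harmless because merged controllers have identical \((\phi,\eta)\) and hence identical effects.

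\textbf{Trajectory coupling.} Next I couple the two closed loops on one probability space: the same initial state, the same disturbance path \(\bm{\mathrm{w}}\), the same randomness for selecting \(\gamma_k\), and the same randomness driving the termination mechanism. I then argue by induction on \(k\). Assume the two state paths agree on \([0,t_k]\) and the interaction times agree up to \(t_k\).
\begin{itemize}
\item The selected options satisfy \(o_k=\iota(\gamma_k)\), so \(\pi_{o_k}=\phi_k\) and \(\beta_{o_k}=\eta_k\).
\item Because \(f\) is locally Lipschitz in \(\bm{\mathrm{x}}\) uniformly in \((\bm{\mathrm{u}},\bm{\mathrm{w}})\) and \(\bm{\mathrm{w}}\) is locally integrable, the Carath\'eodory ODE on \([t_k,t_{k+1})\) has a unique solution. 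Identical feedback laws therefore give identical state paths, hence identical controls \(\bm{\mathrm{u}}_t\).
\item Identical termination functions, evaluated along identical paths with shared randomness, produce the same \(\tau_k\) and the same \(t_{k+1}\).
\end{itemize}
It follows that \(N\), the \(\delta t_k\), the accumulated reward, and the event \(\{\bm{\mathrm{x}}_t\in\mathcal X_{\mathsf{safe}}\ \forall t\in[0,T]\}\) coincide pathwise. Consequently the budget and timing constraints \eqref{eq:problem_option_ils_budget} and the chance constraint \eqref{eq:problem_option_ils_chance} hold for \(\mu\), and the objectives are equal. This yields \(V^\star_{\mathsf{CTTP\text{-}IL}}\le V^\star_{\mathsf{O\text{-}ILSCP}}\), including the cases where the optima are suprema rather than attained maxima.

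\textbf{Main obstacle.} The step I expect to be delicate is measure-theoretic: \(\mu\) must be a legitimate (measurable) stochastic kernel, which requires \(\iota\) to be measurable so the pushforward is well defined. My first approach is to equip \(\mathcal O\) with the \(\sigma\)-algebra pulled back through the identification \(o\mapsto(\pi_o,\beta_o)\). Under the richness assumption, \(\mathcal O\) then contains a copy of \(\Gamma\), and \(\iota\) becomes measurable by construction (essentially an identity map). If that fails, the fallback is to state the result at the level of induced closed-loop laws, where only equality in distribution of the trajectories is needed.
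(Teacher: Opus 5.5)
Your proposal is correct and follows essentially the same route as the paper: Proposition~\ref{prop:oilscp_to_cttpil} gives one inclusion, and for the converse the option policy selects $o^{\gamma_k}$ whenever $\nu$ selects $\gamma_k$, so trajectories, rewards, and safety events coincide pathwise; your pushforward kernel, explicit coupling, and measurability discussion add rigor that the paper leaves implicit. One small correction: uniqueness of the closed-loop solution does not follow from local Lipschitzness of $f$ in $\bm{\mathrm{x}}$ when the feedback $\phi$ is only measurable (e.g.\ $f(\bm{\mathrm{x}},\bm{\mathrm{u}},\bm{\mathrm{w}})=\bm{\mathrm{u}}$ with a sign-type $\phi$), but you do not need uniqueness, since both closed loops are literally the same integral equation driven by the same data, so whatever trajectory the problem formulation presupposes is shared.
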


\begin{myrema}[Restricted option classes]
\label{rem:restricted_option_classes}
The exact equivalence in Theorem~\ref{thm:oilscp_exact} requires the option class to be sufficiently expressive.
If, in implementation, \(\mathcal O\) is restricted to a finite dictionary or a parametric family, then \ref{eq:problem_option_ils} becomes a structured conservative approximation of \ref{eq:problem_cttp_il}.
\end{myrema}

\section{Method}
\label{sec:method}

\begin{figure}[t]
    \includegraphics[width=1\textwidth]{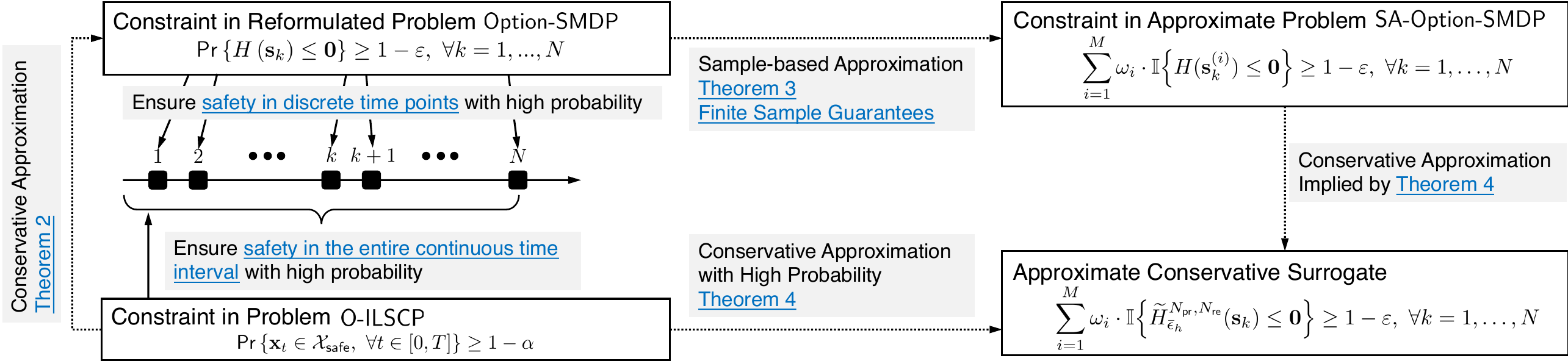}
    \caption{Summary of the relations among the main theorems and constraints in Section \ref{sec:method}.}
   \label{fig:contribution_summary}
\end{figure}

This section presents a practical approximation framework for solving \ref{eq:problem_option_ils}. 
First, \ref{eq:problem_option_ils} is reformulated as an option-based \emph{semi-Markov decision process} (SMDP), given by \ref{eq:problem_option_smdp} in Section \ref{subsec:option_smdp}. 
Theorem \ref{theo:selection_safety} in Section \ref{subsec:selection_safety_function} shows that, by choosing a sufficiently conservative safety function, enforcing safety only at clinical interaction times is sufficient to guarantee safety over the entire continuous-time treatment trajectory with high probability. 
Next, Theorem \ref{theo:finite_sample_main} in Section \ref{subsection:finite_samples} provides finite-sample guarantees for the sample-based approximation of \ref{eq:problem_option_smdp}, which is essential because practical deployment relies on finite logged treatment data. 
Section \ref{subsec:practical_implementation} further introduces a data-driven conservative surrogate of the safety constraint in \ref{eq:problem_option_smdp}, constructed from samples of the current state, successor state, and inter-interaction duration. 
Taken together, these results yield a practical approximation of \ref{eq:problem_option_ils} that can be formulated directly from data and solved using existing safe RL algorithms. 
Moreover, the resulting approximate solution retains a high-probability safety guarantee for the original continuous-time treatment problem. 
The relations among the main theorems and constraints in Section \ref{sec:method} are summarized in Figure \ref{fig:contribution_summary}.

\subsection{Reformulation as an Option Semi-Markov Decision Process}
\label{subsec:option_smdp}

We reformulate \ref{eq:problem_option_ils} as a SMDP defined over options. 
At interaction time $t_k$, the policy observes $(\bm{\mathrm{x}}_{t_k},t_k,k)$ and selects an option $o_k$, after which the patient state evolves under the corresponding continuous-time treatment rule until the next clinical interaction.

\paragraph{Option-induced state and reward flows.}
Conditioned on initiating option $o_k$ at time $t_k$ from $\bm{\mathrm{x}}_{t_k}=\bm{\mathrm{x}}_k$, the resulting patient trajectory satisfies
\begin{equation}
\label{eq:option_state_flow}
\bm{\mathrm{x}}_{t_k+\tau}
=
\bm{\mathrm{x}}_{k}
+
\int_{t_k}^{t_k+\tau}
f\!\left(\bm{\mathrm{x}}_{t},\pi_{o_k}(\bm{\mathrm{x}}_{t}),\bm{\mathrm{w}}_{t}\right)\mathsf{d}t,\qquad \tau\in[0,\delta t_k].
\end{equation}
The next interaction time is given by $t_{k+1}=t_k+\delta t_k$ (clipped at $T$ as in \eqref{eq:option_termination_time}), where $\delta t_k$ is determined by the termination function $\beta_{o_k}$.
Cumulative treatment reward of executing option $o_k$ is defined as
\begin{equation}
\label{eq:option_reward_flow}
R\!\left(\bm{\mathrm{x}}_{k},t_k,k,o_k\right)
:=
\int_{t_k}^{t_{k+1}}
b\!\left(\bm{\mathrm{x}}_{t},\pi_{o_k}(\bm{\mathrm{x}}_{t})\right)\mathsf{d}t.
\end{equation}

\paragraph{SMDP state and transition.}
We define the SMDP state at interaction times as
\begin{equation}
\label{eq:option_aug_state}
\bm{\mathrm{s}}_k := (\bm{\mathrm{x}}_{k},t_k,k),
\qquad \bm{\mathrm{x}}_{k}:=\bm{\mathrm{x}}_{t_k}.
\end{equation}
Given $\bm{\mathrm{s}}_k$ and option $o_k$, the next state is $\bm{\mathrm{s}}_{k+1}=(\bm{\mathrm{x}}_{k+1},t_{k+1},k+1)$ with $\bm{\mathrm{x}}_{k+1}:=\bm{\mathrm{x}}_{t_{k+1}}$ generated according to \eqref{eq:option_state_flow}.
This induces an SMDP transition kernel $\mathcal{P}(\bm{\mathrm{s}}_{k+1}\,|\,\bm{\mathrm{s}}_k,o_k),$
which captures both the exogenous disturbance process and the randomness arising from option termination.

\paragraph{Option-SMDP objective.}
With the above notation, \ref{eq:problem_option_ils} can be equivalently written as
\begin{align}
\max_{\mu,\{\pi_o,\beta_o\}_{o\in\mathcal{O}}}\ &\
\mathbb{E}\left[\sum_{k=0}^{N-1} r(\bm{\mathrm{s}}_k,o_k)\right]
\tag{$\mathsf{Option\text{-}SMDP}$}\label{eq:problem_option_smdp}\\
\mathsf{s.t.}\ &\ 
\mathsf{Pr}\left\{ H\left(\bm{\mathrm{s}}_{k},o_k\right) \leq \bm{0} \right\}\geq 1-\varepsilon,\ \forall k=0,\ldots,N-1, \label{eq:problem_option_smdp_safe}\\
&\ N\le K,\quad \delta t_k\in[t_{\mathsf{min}},t_{\mathsf{max}}]\ \ \forall k<N,
\end{align}
where $o_k\sim\mu(\cdot\,|\,\bm{\mathrm{s}}_k)$ and $r(\bm{\mathrm{s}}_k,o_k):=R(\bm{\mathrm{x}}_k,t_k,k,o_k)$ as defined in \eqref{eq:option_reward_flow}.
Problem \ref{eq:problem_option_smdp} enables the simultaneous optimization of
(i) intra-option treatment policies $\{\pi_o\}$, which govern continuous-time treatment administration between clinical interactions, and
(ii) termination rules $\{\beta_o\}$, which determine state-dependent interaction intervals $\delta t_k$,
while remaining compatible with standard SMDP and option-learning frameworks under continuous-time rollouts.
Problem \ref{eq:problem_option_smdp} differs from Problem \ref{eq:problem_option_ils} in that the continuous-time joint chance constraint \eqref{eq:problem_option_ils_chance} is replaced by chance constraints imposed only at discrete interaction times through the function $H(\cdot)$ in \eqref{eq:problem_option_smdp_safe}. 
It becomes necessary to characterize how the function $H(\cdot)$ should be constructed so that the originally required constraint \eqref{eq:problem_option_ils_chance} remains satisfied under this reformulation.

\subsection{Selection of Safety Function}
\label{subsec:selection_safety_function}

To characterize safety between two clinical interactions, we introduce the option-wise growth property.

\begin{mydef}
\label{def:cl_growth}
Fix an interaction time $t_k$ and the executed option $o_k\in\mathcal{O}$. 
Consider the closed-loop trajectory generated by $\bm{\mathrm{u}}_t=\pi_{o_k}(\bm{\mathrm{x}}_t)$ on $t\in[t_k,t_k+\delta t_k]$. 
If there exist constants $\lambda_{o_k}\in\mathbb{R}$ and $\gamma_{o_k}\ge 0$, and a nonnegative quantity $b_{k,o_k}$ measurable with respect to $(\bm{\mathrm{s}}_k,o_k)$, such that for all $\tau\in[0,\delta t_k]$,
\begin{equation}
\label{eq:cl_growth_bound}
\|\bm{\mathrm{x}}_{t_k+\tau}-\bm{\mathrm{x}}_{k}\|
\le
\frac{e^{\lambda_{o_k}\tau}-1}{\lambda_{o_k}}\,b_{k,o_k}
+
\gamma_{o_k} e^{\lambda_{o_k}\tau}\,\bm{\mathrm{W}}_{k,\tau},
\qquad
\bm{\mathrm{W}}_{k,\tau}:=\int_{t_k}^{t_k+\tau}\|\bm{\mathrm{w}}_{s}\|\mathsf{d}s,
\end{equation}
where the expression $\frac{e^{\lambda_{o_k}\tau}-1}{\lambda_{o_k}}$ is interpreted as $\tau$ when $\lambda_{o_k}=0$. 
Then, we say the option $o_k$ satisfies an \textit{option-wise closed-loop $(\lambda_{o_k},b_{k,o_k},\gamma_{o_k})$-growth property} on $[t_k,t_k+\delta t_k]$ for $\bm{\mathrm{W}}_{k,\tau}$. 
\end{mydef}


\begin{myassump}
\label{assump:system_property}
The following conditions hold throughout this paper.
(a) The safety function $g$ is Lipschitz continuous with Lipschitz constant $L_{g,\bm{\mathrm{x}}}$.
(b) Define $\bm{\mathrm{W}}_{k}:=\bm{\mathrm{W}}_{k,\delta t_k}$. 
There exists a constant $p\ge 1$ such that $\mathbb{E}\left[\left(\bm{\mathrm{W}}_{k}\right)^p\right]<\infty$ for all $k$.
(c) For every interaction interval $k$ and any $\varepsilon\in(0,1)$, define
$\overline{W}_k(\varepsilon):=\left(\mathbb{E}\left[\left(\bm{\mathrm{W}}_{k}\right)^p\right]/\varepsilon\right)^{1/p}.$
Then, on the event $\{\bm{\mathrm{W}}_{k}\le \overline{W}_k(\varepsilon)\}$, there exists $(\lambda_{o_k},b_{k,o_k},\gamma_{o_k})$ such that the executed option $o_k$ satisfies the option-wise closed-loop $(\lambda_{o_k},b_{k,o_k},\gamma_{o_k})$-growth property in Definition \ref{def:cl_growth} on $[t_k,t_k+\delta t_k]$.
\end{myassump}

The parameters $(\lambda_{o_k},b_{k,o_k},\gamma_{o_k})$ summarize how the patient state may deviate during the interval governed by option $o_k$. 
In particular, $\lambda_{o_k}$ controls the rate of deviation growth, $b_{k,o_k}$ captures the nominal drift at the beginning of the interval, and $\gamma_{o_k}$ measures sensitivity to accumulated uncertainty through $\bm{\mathrm{W}}_{k,\tau}$. 
These quantities need not be known exactly; any valid upper bounds satisfying \eqref{eq:cl_growth_bound} are sufficient, and conservative choices only increase safety conservatism.
This option-wise growth property is weaker than assuming global Lipschitz continuity of the treatment dynamics. 
As shown in Appendix \ref{appendix:cl_growth_from_f}, the bound \eqref{eq:cl_growth_bound} follows from a radial inequality on the closed-loop vector field $f(\bm{\mathrm{x}},\pi_{o_k}(\bm{\mathrm{x}}),\bm{\mathrm{w}})$, which can hold even when $f$ is non-smooth or not globally Lipschitz. 
In particular, Lipschitz continuity of $f$ on a compact set implies the existence of parameters $(\lambda_{o_k},b_{k,o_k},\gamma_{o_k})$, whereas the converse does not hold. 
Therefore, the proposed formulation avoids imposing a strong global smoothness condition and instead uses a trajectory-level bound that is directly aligned with safe treatment evolution between clinical interactions.
We present a choice of $H(\cdot)$ that enforces safety over the entire treatment trajectory by checking safety only at clinical interaction times.

\begin{mytheo}
\label{theo:selection_safety}
suppose that Assumption \ref{assump:system_property} holds. 
Then, for $\varepsilon\leq\alpha/(2K)$, any feasible solution of Problem \ref{eq:problem_option_smdp} is also feasible for Problem \ref{eq:problem_option_ils} in the following three cases: 
(a) On the event $\{\bm{\mathrm{W}}_{k}\le \overline{W}_k(\varepsilon)\}$, let $(\lambda_{o_k},b_{k,o_k},\gamma_{o_k})$ be any triple whose existence is guaranteed by Assumption \ref{assump:system_property}(c).
The function $H(\cdot)$ is chosen as
\begin{equation}
\label{eq:H_selection_main}
H\left(\bm{\mathrm{s}}_{k},o_k\right):=
g(\bm{\mathrm{x}}_k)
+
L_{g,\bm{\mathrm{x}}}\cdot
\left(
\frac{e^{|\lambda_{o_k}|\delta t_k}-1}{|\lambda_{o_k}|}\,b_{k,o_k}
+
\gamma_{o_k}e^{|\lambda_{o_k}|\delta t_k}\,\overline{W}_k(\varepsilon)
\right).
\end{equation}
(b) $\bm{\mathrm{W}}_{k,\delta t}$ is almost surely bounded by $W_{\mathsf{max},k}$ for all $k$ and $\delta t$, and $H(\cdot)$ is selected as in \eqref{eq:H_selection_main} with $\overline{W}_k(\varepsilon)$ replaced by $W_{\mathsf{max},k}$. 
(c) $\bm{\mathrm{W}}_{k,\delta t}$ is sub-Gaussian for all $k$ and $\delta t$, and $H(\cdot)$ is selected as in \eqref{eq:H_selection_main} with $\overline{W}_k(\varepsilon)$ replaced by $\mathbb{E}[\bm{\mathrm{W}}_k]
+
\sigma_k\sqrt{2\log\left(1/\varepsilon\right)}$, where $\sigma_k:=\sup_{\delta t\in[0,\delta t_k]}\ \sigma_{k,\delta t}$ and $\sigma_{k,\delta t}$ denotes the variance of $\bm{\mathrm{W}}_{k,\delta t}$.
\end{mytheo}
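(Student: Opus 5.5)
The plan is a union bound over the at most $K$ interaction intervals, with two failure events per interval: the disturbance budget is exceeded, or the interaction-time constraint $H\le 0$ fails. On the complement of both events, the growth bound of Definition~\ref{def:cl_growth} together with the Lipschitz property of $g$ gives $g(\bm{\mathrm{x}}_t)\le 0$ for every $t$ in the interval.

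For a fixed $k<N$, define
\begin{equation*}
E_k:=\{\bm{\mathrm{W}}_k>\overline{W}_k(\varepsilon)\},\qquad F_k:=\{H(\bm{\mathrm{s}}_k,o_k)>0\}.
\end{equation*}
Feasibility for \ref{eq:problem_option_smdp} gives $\mathsf{Pr}(F_k)\le\varepsilon$. Markov's inequality applied to $\bm{\mathrm{W}}_k^p$, using Assumption~\ref{assump:system_property}(b), gives
\begin{equation*}
\mathsf{Pr}(E_k)\le \mathbb{E}[\bm{\mathrm{W}}_k^p]/\overline{W}_k(\varepsilon)^p=\varepsilon.
\end{equation*}

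Now work on $E_k^c\cap F_k^c$. By Assumption~\ref{assump:system_property}(c), the growth bound \eqref{eq:cl_growth_bound} holds for all $\tau\in[0,\delta t_k]$. Two monotonicity facts then upgrade it to a bound uniform in $\tau$.
\begin{itemize}
\item Since $\bm{\mathrm{W}}_{k,\tau}$ is nondecreasing in $\tau$, we have $\bm{\mathrm{W}}_{k,\tau}\le\bm{\mathrm{W}}_k\le\overline{W}_k(\varepsilon)$.
\item Since $e^{\lambda\tau}\le e^{|\lambda|\tau}$ and $(e^{\lambda\tau}-1)/\lambda\le(e^{|\lambda|\tau}-1)/|\lambda|$, and both right-hand sides are nondecreasing in $\tau$, each can be bounded by its value at $\tau=\delta t_k$. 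The second inequality is the one small calculus step: for $\lambda<0$, $(1-e^{-|\lambda|\tau})/|\lambda|\le\tau\le(e^{|\lambda|\tau}-1)/|\lambda|$.
\end{itemize}
Lipschitz continuity of $g$ then gives $g(\bm{\mathrm{x}}_{t_k+\tau})\le g(\bm{\mathrm{x}}_k)+L_{g,\bm{\mathrm{x}}}\|\bm{\mathrm{x}}_{t_k+\tau}-\bm{\mathrm{x}}_k\|\le H(\bm{\mathrm{s}}_k,o_k)\le 0$. So $\bm{\mathrm{x}}_t\in\mathcal{X}_{\mathsf{safe}}$ on $[t_k,t_{k+1}]$.

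Because the intervals cover $[0,T]$ and $N\le K$ almost surely, the unsafe event is contained in $\bigcup_{k<N}(E_k\cup F_k)$. A union bound gives failure probability at most $2K\varepsilon\le\alpha$. The timing and budget constraints of the two problems coincide, so any feasible solution of \ref{eq:problem_option_smdp} is feasible for \ref{eq:problem_option_ils}.

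The main technical obstacle is that $N$ is random. I would handle it by summing over $k=0,\dots,K-1$ on the events $\{k<N\}$, using $\mathsf{Pr}(E_k\cap\{k<N\})\le\mathsf{Pr}(E_k)$. A second subtlety is that the triple $(\lambda_{o_k},b_{k,o_k},\gamma_{o_k})$ is only guaranteed on $E_k^c$. This is harmless, because $H$ is only used on that event.

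The three cases differ only in how the disturbance budget and the tail bound on $\mathsf{Pr}(E_k)$ are obtained.
\begin{itemize}
\item Case (b): $\mathsf{Pr}(\bm{\mathrm{W}}_k>W_{\mathsf{max},k})=0$, so the union bound even yields $K\varepsilon$.
\item Case (c): sub-Gaussianity with proxy $\sigma_k$ gives $\mathsf{Pr}\bigl(\bm{\mathrm{W}}_k>\mathbb{E}[\bm{\mathrm{W}}_k]+\sigma_k\sqrt{2\log(1/\varepsilon)}\bigr)\le\varepsilon$. Here the $\sigma_{k,\delta t}$ are read as sub-Gaussian proxies, and the supremum over $\delta t$ covers the random stopping time $\delta t_k$. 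The growth property must hold on this event as well, which I would assume as the analogue of Assumption~\ref{assump:system_property}(c) with the new threshold.
\end{itemize}
The rest of the argument is verbatim as in case (a).
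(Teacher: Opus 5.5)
Your proposal is correct and has the same skeleton as the paper's proof. Both use two failure events per interval: the disturbance budget is exceeded (bounded by Markov's inequality on $\bm{\mathrm{W}}_k^p$), or $H>0$ (bounded by the Option-SMDP constraint). On the complement, both bound $\sup_{t\in[t_k,t_{k+1}]}\|\bm{\mathrm{x}}_t-\bm{\mathrm{x}}_k\|$ pathwise, apply the Lipschitz continuity of $g$, and finish with Boole's inequality over at most $K$ intervals to get $2K\varepsilon\le\alpha$.

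The one real difference is how you obtain the sign-independent ($|\lambda_{o_k}|$) bound.
\begin{itemize}
\item The paper cites Lemma~\ref{lemma:cl_growth_from_f} for the differential inequality $\tfrac{\mathsf{d}}{\mathsf{d}t}V\le\lambda_{o_k}V+b_{k,o_k}+\gamma_{o_k}\|\bm{\mathrm{w}}_t\|$. It then replaces $\lambda_{o_k}$ by $|\lambda_{o_k}|$ using $V\ge 0$ and reapplies Gr\"onwall. That differential inequality is the \emph{hypothesis} of the lemma (the radial inequality), not something Assumption~\ref{assump:system_property}(c) supplies.
\item You start from the integrated growth bound \eqref{eq:cl_growth_bound}, which is exactly what the assumption grants. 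You dominate its coefficients termwise via $e^{\lambda\tau}\le e^{|\lambda|\tau}$ and, for $\lambda<0$, $(e^{\lambda\tau}-1)/\lambda\le\tau\le(e^{|\lambda|\tau}-1)/|\lambda|$. You then pass to the supremum over $\tau$ by monotonicity.
\end{itemize}
Your route is more elementary and stays within the stated assumption, whereas the paper's effectively needs the stronger pointwise radial condition. Your handling of the random $N$ through the events $\{k<N\}$ is also more careful than the paper's one-line appeal to Boole's inequality.

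One small inconsistency concerns case (b). There you claim the bound improves to $K\varepsilon$. However, Assumption~\ref{assump:system_property}(c) only guarantees the growth triple on $\{\bm{\mathrm{W}}_k\le\overline{W}_k(\varepsilon)\}$, not on the almost-sure event $\{\bm{\mathrm{W}}_k\le W_{\mathsf{max},k}\}$. Unless you add the same extra assumption you made for case (c), you still have to exclude the Markov event. That brings you back to $2K\varepsilon$, which is still enough for the theorem.
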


Theorem \ref{theo:selection_safety} shows that appropriately tightening the safety constraint at each clinical interaction time is sufficient to guarantee safety throughout the interval until the next interaction.
Thus, although Problem \ref{eq:problem_option_smdp} imposes constraints only at discrete interaction times, it still guarantees safety over the full continuous-time treatment trajectory.
The proof is provided in Appendix \ref{appendix:theo_selection_safety}.
The quantities in Theorem \ref{theo:selection_safety} are introduced to obtain a tractable upper bound on hidden safety violations between clinical interactions, rather than to impose restrictive treatment-dynamics assumptions.
In particular, $\overline{W}_k(\varepsilon)$ is a probabilistic upper envelope on the accumulated uncertainty over one interaction interval and does not require exact knowledge of the disturbance distribution.
In practice, it can be conservatively estimated from prior knowledge or learned from data.
Similarly, valid upper bounds on the relevant Lipschitz constants are sufficient, and over-approximation only increases conservatism without invalidating the guarantee.
Among the three cases, case (a) is the most general and only requires a finite moment condition on the integrated disturbance.
Cases (b) and (c) provide tighter tightenings when stronger information, such as boundedness or sub-Gaussianity, is available.
The bound is valid for both stable and unstable closed-loop behaviors, since the proof uses $|\lambda_{o_k}|$ to obtain a sign-independent worst-case bound.
When $\lambda_{o_k}<0$, this treatment is conservative, but it yields a unified construction of $H(\cdot)$ that is directly applicable in both regimes.

\subsection{Finite Sample Guarantees}
\label{subsection:finite_samples}

To study learning performance from finite clinical data, we consider a logged dataset of $M$ option-execution trajectories $\mathcal{D}_M:=\left\{\{(\bm{\mathrm{s}}_k^{(i)},o_k^{(i)})\}_{k=0}^{N-1}\right\}_{i=1}^N$, where trajectory $i$ contains the realized interaction-state and option sequence. 
Using $\mathcal{D}_M$, we construct the following importance-weighted sample approximation of Problem~\ref{eq:problem_option_smdp}:
\begin{align}
\max_{\mu,\{\pi_o,\beta_o\}_{o\in\mathcal{O}}}\ &\
\widetilde{J}\!\left(\mu,\{\pi_o,\beta_o\}_{o\in\mathcal{O}};\mathcal{D}_M\right)
:=
\sum_{i=1}^M
\omega_i\cdot
\left(
\sum_{k=0}^{N-1}
r(\bm{\mathrm{s}}_k^{(i)},o_k^{(i)})
\right)
\tag{$\mathsf{SA\text{-}Option\text{-}SMDP}$}
\label{eq:problem_option_smdp_sample_main}
\\
\mathsf{s.t.}\ &\
\sum_{i=1}^M
\omega_i\cdot
\mathbb{I}\!\left\{
H(\bm{\mathrm{s}}_k^{(i)},o_k^{(i)})\le \bm{0}
\right\}
\ge
1-\varepsilon,
\qquad
\forall k=1,\ldots,N.
\label{eq:problem_option_smdp_sample_main_safe}
\end{align}

Here, $\omega_i$ is short notation of $\omega_i\!\left(\bm{\mathrm{s}}_0,\mu,\{\pi_o,\beta_o\}_{o\in\mathcal{O}};\mathcal{D}_M\right)$, which denotes a normalized importance weight satisfying
$\sum_{i=1}^M \omega_i = 1.$
It reweights trajectory $i$ to compensate for the mismatch between the trajectory distribution induced by the candidate option-SMDP policy $(\mu,\{\pi_o,\beta_o\}_{o\in\mathcal{O}})$ and that of the logged data.
In practice, after parameterizing the policy as in Appendix~\ref{appendix:chance_constrained_optimization}, these weights can be estimated through conditional density-ratio estimation, for example, by a kernel density estimation (KDE) method~\citep{Hyndman1996} or a generator-based conditional density model~\citep{Zhou2023}.
The detailed trajectory-level construction is provided in Appendix~\ref{appendix:proofs_safety}, especially Appendix~\ref{appendix:importance_weighting} for importance weight and Appendix~\ref{appendix:uniform_exp_dev_residuals} for uniform exponential deviation for residual processes.
The finite-sample guarantee of Problem \ref{eq:problem_option_smdp_sample_main} is as follows.

\begin{mytheo}
\label{theo:finite_sample_main}
suppose that Assumption \ref{assump:system_property} and some mild regularity condition stated in Appendix \ref{appendix:uniform_exp_dev_residuals} hold. 
Let $\widehat{\vartheta}_M$ and $\vartheta^\star$ denote a solution of the sample-based Problem
\ref{eq:problem_option_smdp_sample_main} and an optimal solution of the original Problem \ref{eq:problem_option_smdp}, 
where $\vartheta:=(\mu,\{\pi_o,\beta_o\}_{o\in\mathcal{O}})$.
Then for any tolerances $\delta_{\mathsf{obj}}>0$ and $\delta_{\mathsf{con}}>0$, there exist failure probabilities
$\rho_M^{\mathsf{obj}}$ and $\rho_M^{\mathsf{con}}$ such that, with probability at least $1-\rho_M^{\mathsf{obj}}-\rho_M^{\mathsf{con}},$
the following statements hold: \textbf{(a) Near optimality:} $J(\widehat{\vartheta}_M,\bm{\mathrm{s}}_0) \ge J(\vartheta^\star,\bm{\mathrm{s}}_0)-2\delta_{\mathsf{obj}}$; \textbf{(b) Feasibility:} If the empirical safety constraints in \eqref{eq:problem_option_smdp_sample_main_safe} are satisfied with $1-\varepsilon+\delta_{\mathsf{con}}$ instead of $1-\varepsilon$, then the learned solution is feasible for Problem \ref{eq:problem_option_smdp}.
\end{mytheo}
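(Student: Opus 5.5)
The plan is the standard uniform-convergence argument for sample-approximated chance-constrained programs. The importance-weighted objective $\widetilde J(\vartheta;\mathcal D_M)$ and the empirical constraint functionals
\[
\widetilde F_k(\vartheta;\mathcal D_M):=\sum_{i=1}^M\omega_i\,\mathbb I\{H(\bm{\mathrm{s}}_k^{(i)},o_k^{(i)})\le \bm 0\}
\]
are treated as estimators of $J(\vartheta,\bm{\mathrm{s}}_0)$ and $F_k(\vartheta):=\mathsf{Pr}\{H(\bm{\mathrm{s}}_k,o_k)\le\bm 0\}$ under the trajectory law induced by $\vartheta$. With self-normalized weights, each estimator is a ratio of two empirical means, so each deviation $\widetilde J-J$ and $\widetilde F_k-F_k$ can be written, after centering, as a residual process indexed by $\vartheta$.

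\textbf{Step 1: importance-weighting identity.} Using the construction in Appendix~\ref{appendix:importance_weighting}, I would show that the unnormalized weighted sums are unbiased for $J$ and $F_k$. The rewards are bounded because $t_{\mathsf{max}}$ is finite, $N\le K$, and $b$ is bounded on the relevant compact set; the indicators are bounded by $1$. The normalized versions then differ from these unbiased sums by a controllable factor.

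\textbf{Step 2: uniform exponential deviation.} Here I would invoke the uniform exponential deviation bound for residual processes assumed in Appendix~\ref{appendix:uniform_exp_dev_residuals}, which is the ``mild regularity condition''. It should give, for every $\delta>0$,
\[
\mathsf{Pr}\Bigl\{\sup_\vartheta|\widetilde J(\vartheta)-J(\vartheta)|>\delta\Bigr\}\le \rho_M^{\mathsf{obj}}(\delta),
\]
and a union bound over $k\le K$ gives
\[
\mathsf{Pr}\Bigl\{\max_k\sup_\vartheta|\widetilde F_k(\vartheta)-F_k(\vartheta)|>\delta_{\mathsf{con}}\Bigr\}\le\rho_M^{\mathsf{con}}.
\]
I would take $\rho_M^{\mathsf{obj}}:=\rho_M^{\mathsf{obj}}(\delta_{\mathsf{obj}})$. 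The main obstacle is this step. The indicator is discontinuous in $\vartheta$ and the weights depend on $\vartheta$, so a uniform bound needs either a VC/bracketing argument on the class $\{\mathbb I\{H\le 0\}\}$ or a margin/Lipschitz-in-$\vartheta$ assumption on the weights. I would try to place these conditions entirely in the appendix regularity assumption (bounded weights, finite covering numbers) and only consume the resulting bound here.

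\textbf{Step 3: feasibility, part (b).} I would argue on the intersection event $E$ of both good events, which has probability at least $1-\rho_M^{\mathsf{obj}}-\rho_M^{\mathsf{con}}$. If $\widehat\vartheta_M$ satisfies the tightened empirical constraint $\widetilde F_k(\widehat\vartheta_M)\ge 1-\varepsilon+\delta_{\mathsf{con}}$, then on $E$ we have $F_k(\widehat\vartheta_M)\ge\widetilde F_k-\delta_{\mathsf{con}}\ge 1-\varepsilon$ for every $k$. Since the budget and timing constraints are imposed identically in both problems, $\widehat\vartheta_M$ is feasible for Problem~\ref{eq:problem_option_smdp}.

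\textbf{Step 4: near optimality, part (a).} This needs $\vartheta^\star$ to be feasible for the sample problem on $E$. For the untightened sample problem, $\widetilde F_k(\vartheta^\star)\ge 1-\varepsilon-\delta_{\mathsf{con}}$ is all that follows. I would handle this either by comparing with the problem whose empirical level is relaxed by $\delta_{\mathsf{con}}$, or, more cleanly, by assuming as part of the regularity condition a Slater-type margin, so that $\vartheta^\star$, or a $\delta_{\mathsf{obj}}$-suboptimal feasible surrogate, remains empirically feasible on $E$. Granted this, the standard chain
\[
J(\widehat\vartheta_M)\ge\widetilde J(\widehat\vartheta_M)-\delta_{\mathsf{obj}}\ge\widetilde J(\vartheta^\star)-\delta_{\mathsf{obj}}\ge J(\vartheta^\star)-2\delta_{\mathsf{obj}}
\]
gives (a). The second inequality holds by optimality of $\widehat\vartheta_M$ in Problem~\ref{eq:problem_option_smdp_sample_main}, and the other two by uniform deviation.
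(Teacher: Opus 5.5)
Your proposal follows the paper's proof. The paper establishes uniform high-probability bounds on the objective and constraint residuals (ideal mixture-proposal weights, pointwise Hoeffding plus a $\delta$-net in $\theta$, and KDE/generator bounds on the weight-estimation error), then derives (b) exactly as in your Step 3 and (a) via the same chain $J(\widehat\vartheta_M)\ge\widetilde J(\widehat\vartheta_M)-\delta_{\mathsf{obj}}\ge\widetilde J(\vartheta^\star)-\delta_{\mathsf{obj}}\ge J(\vartheta^\star)-2\delta_{\mathsf{obj}}$. The one real difference is how empirical feasibility of $\vartheta^\star$ is secured: the paper reads $\vartheta^\star$ as the optimum of a slightly tightened problem and sends the tightening to zero, which tacitly assumes the optimal value is continuous in the tightening level, essentially the Slater-type margin you state explicitly (and the $\vartheta$-dependence of the indicator that you flag is likewise something the paper's net argument passes over).
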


Theorem \ref{theo:finite_sample_main} shows, in a general form, that from finite logged treatment trajectories one can obtain a policy that is both approximately optimal and safety-feasible with high probability. 
For specific conditional density-ratio estimation methods, the resulting bounds depend on the corresponding hyper-parameters; see Theorem \ref{theo:consistency_residuals_two_methods} in Appendix \ref{appendix:uniform_exp_dev_residuals} for the feasibility bounds under KDE-based and generator-based estimation, and Propositions \ref{prop:near_opt_kde} and \ref{prop:near_opt_gen} in Appendix \ref{appendix:near_optimality} for the corresponding near-optimality bounds.
The proof is based on the consistency of the importance-weighted empirical objective and constraint processes.
Appendix~\ref{appendix:uniform_exp_dev_residuals} establishes uniform exponential deviation bounds for the estimated importance weights, Appendix~\ref{appendix:consistency_estimation} derives the resulting uniform consistency, and Appendix~\ref{appendix:near_optimality}--\ref{appendix:feasibility_result} translate these results into the finite-sample guarantees above.

\subsection{Practical Implementation} 
\label{subsec:practical_implementation}

The analytical safety-tightening term in \eqref{eq:H_selection_main} is generally not directly computable in practice, mainly because it depends on $\lambda_{o_k}$, which characterizes the local closed-loop growth behavior induced by the treatment dynamics and the option feedback controller $\pi_{o_k}$, while its explicit dependence on the current patient state $\bm{\mathrm{x}}_k$ is unknown.
To make the practical construction explicit, we define
\begin{equation}
    \label{eq:def_h_k_practical}
    h(\bm{\mathrm{x}}_k,\delta t_k)
    :=
    \frac{e^{|\lambda_{o_k}|\delta t_k}-1}{|\lambda_{o_k}|}\,b_{k,o_k}
    +
    \gamma_{o_k}e^{|\lambda_{o_k}|\delta t_k}\,\overline{W}_k(\varepsilon).
\end{equation}
The function $h(\bm{\mathrm{x}}_k,\delta t_k)$ represents the analytical safety-tightening term over the interaction interval $[t_k,t_k+\delta t_k]$.
This definition is natural because \eqref{eq:def_h_k_practical} is exactly the quantity that upper-bounds the inter-interaction state deviation in Theorem \ref{theo:selection_safety}, and thus determines the amount of safety tightening required at time $t_k$. 
Our goal in practical implementation is therefore to construct a computable surrogate that upper-bounds $h(\bm{\mathrm{x}}_k,\delta t_k)$. 
We formalize this notion as follows:
\begin{mydef}
    \label{def:conservative_surrogate}
    A function $\bar{h}(\bm{\mathrm{x}},\delta t)$ is called a conservative surrogate of $h(\bm{\mathrm{x}},\delta t)$ if $\bar{h}(\bm{\mathrm{x}},\delta t)\ge h(\bm{\mathrm{x}},\delta t),\ \forall (\bm{\mathrm{x}},\delta t)$ in the operating region of interest.
\end{mydef}

In practice, we assume access to a dataset
$\mathcal{D}^{\mathsf{pr}}_{N_{\mathsf{pr}}}:=
\left\{
\left(\bm{\mathrm{x}}^{(i)},\delta t^{(i)},d^{(i)}\right)
\right\}_{i=1}^{N_{\mathsf{pr}}}$,
where
$d^{(i)}:=
\left\|
\bm{\mathrm{x}}^{(i)}_{+}-\bm{\mathrm{x}}^{(i)}
\right\|$,
and $\bm{\mathrm{x}}^{(i)}_{+}$ denotes the successor state of $\bm{\mathrm{x}}^{(i)}$ after an inter-interaction duration $\delta t^{(i)}$. 
Here, the endpoint deviation is used as a practical proxy for the interval-wise deviation; see Appendix \ref{appendix:endpoint_deviation} for discussion.
Thus, each sample records the current patient state, the applied interaction duration, and the resulting sampled finite-interval state deviation.
We assume that the deviation variable $d$ follows a conditional probability distribution given $(\bm{\mathrm{x}},\delta t)$, and denote its Conditional Probability Density (CPD) by
$d\sim p_{\mathsf{d}}(\cdot\mid\bm{\mathrm{x}},\delta t)$.
This conditional distribution captures the stochastic variation of the sampled finite-interval deviation under the current state and interaction duration.
Its uncertainty integrates the effect of both treatment-dynamics uncertainty and the input component induced by the option policy, which is not explicitly retained in the practical implementation. 

We next construct a sample-based approximate conservative surrogate of $h(\bm{\mathrm{x}},\delta t)$. 
A KDE-based method can be used to estimate the approximate CPD $\tilde{p}_{\mathsf{d}}(\cdot|\bm{\mathrm{x}},\delta t)$ of $p_{\mathsf{d}}(\cdot|\bm{\mathrm{x}},\delta t)$. 
We then resample $N_{\mathsf{re}}$ samples $\tilde{d}_{\mathsf{re}}^{(j)}\sim\tilde{p}_{\mathsf{d}}(\cdot|\bm{\mathrm{x}},\delta t),\ j=1,...,N_{\mathsf{re}}.$ 
Without loss of generality, assume that the samples $\left\{\tilde{d}_{\mathsf{re}}^{(j)}\right\}_{j=1}^{N_{\mathsf{re}}}$ are sorted in descending order, namely, $\tilde{d}_{\mathsf{re}}^{(1)}\ge \tilde{d}_{\mathsf{re}}^{(2)}\ge \cdots \ge \tilde{d}_{\mathsf{re}}^{(N_{\mathsf{re}})}.$
Define $m_{\mathsf{re}}:=\left\lceil N_{\mathsf{re}}\bar{\epsilon}_h' \right\rceil.$
The sample-based approximation $\tilde{v}_{\bar{\epsilon}_h}^{N_{\mathsf{pr}},N_{\mathsf{re}}}(\bm{\mathrm{x}},\delta t)$ is defined as the $(N_{\mathsf{re}}\bar{\epsilon}_h')$-th largest resampled deviation, namely, $\tilde{v}_{\bar{\epsilon}_h}^{N_{\mathsf{pr}},N_{\mathsf{re}}}(\bm{\mathrm{x}},\delta t)=\tilde{d}_{\mathsf{re}}^{(m_{\mathsf{re}})}.$
This quantity provides a data-driven approximation of the conservative surrogate needed to control hidden safety violations between two clinical interaction times.
Instead of using $H\left(\bm{\mathrm{s}}_{k},o_k\right)$ in \eqref{eq:H_selection_main}, we introduce the empirical safety function
\begin{equation}
\label{eq:H_selection_empirical_main}
\widetilde{H}_{\bar{\epsilon}_h}^{N_{\mathsf{pr}},N_{\mathsf{re}}}(\bm{\mathrm{s}}_k)
:=
g(\bm{\mathrm{x}}_k)
+
L_{g,\bm{\mathrm{x}}}\cdot
\tilde{v}_{\bar{\epsilon}_h}^{N_{\mathsf{pr}},N_{\mathsf{re}}}(\bm{\mathrm{x}}_k,\delta t_k).
\end{equation}
Using \eqref{eq:H_selection_empirical_main} preserves feasibility for Problem \ref{eq:problem_option_smdp} with high probability.

\begin{mytheo}
\label{theo:selection_safety_empirical_main}
suppose that Assumption \ref{assump:system_property} and some mild regularity condition stated in Appendix \ref{appendix:uniform_exp_dev_residuals} hold.  
Let $\widetilde{\vartheta}_{M,\bar{\epsilon}_h,\delta_{\mathsf{con}}}^{N_{\mathsf{pr}},N_{\mathsf{re}}}$ and $\vartheta^\star$ denote a solution of the sample-based Problem
\ref{eq:problem_option_smdp_sample_main} after replacing $H(\cdot)$ by $\widetilde{H}_{\bar{\epsilon}_h}^{N_{\mathsf{pr}},N_{\mathsf{re}}}(\cdot)$ defined by \eqref{eq:H_selection_empirical_main} and setting the probability level in the right side of \eqref{eq:problem_option_smdp_sample_main_safe} as $1-\varepsilon+\delta_{\mathsf{con}}$.
Then, there exist failure probability
$\rho_M^{\mathsf{con}}$ and $\rho_h$ such that, with probability at least $1-\rho_M^{\mathsf{con}}-\rho_h,$ $\widetilde{\vartheta}_{M,\bar{\epsilon}_h,\delta_{\mathsf{con}}}^{N_{\mathsf{pr}},N_{\mathsf{re}}}$ is feasible for Problem \ref{eq:problem_option_smdp}.
\end{mytheo}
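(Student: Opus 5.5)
The plan is to combine two high-probability events by a union bound. The first event, of probability at least $1-\rho_h$, makes the empirical tightening $\tilde{v}_{\bar{\epsilon}_h}^{N_{\mathsf{pr}},N_{\mathsf{re}}}$ a valid conservative surrogate in the sense of Definition~\ref{def:conservative_surrogate}. The second event, of probability at least $1-\rho_M^{\mathsf{con}}$, is the one on which Theorem~\ref{theo:finite_sample_main}(b) converts empirical feasibility into true feasibility for Problem~\ref{eq:problem_option_smdp}.

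\textbf{Step 1: the empirical quantile dominates $h$.} I will show that, with probability at least $1-\rho_h$ over the draw of $\mathcal{D}^{\mathsf{pr}}_{N_{\mathsf{pr}}}$ and the resampling, we have $\tilde{v}_{\bar{\epsilon}_h}^{N_{\mathsf{pr}},N_{\mathsf{re}}}(\bm{\mathrm{x}},\delta t)\ge h(\bm{\mathrm{x}},\delta t)$ on the operating region. This splits into three parts.

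First, interpret $h$ through the true conditional law $p_{\mathsf{d}}$. Under Assumption~\ref{assump:system_property}(c) and the endpoint-deviation proxy of Appendix~\ref{appendix:endpoint_deviation}, the relevant deviation is bounded by $h$ on the event $\{\bm{\mathrm{W}}_k\le\overline{W}_k(\varepsilon)\}$. So it suffices to compare with the true upper $\bar{\epsilon}_h$-quantile $v_{\bar{\epsilon}_h}(\bm{\mathrm{x}},\delta t)$ of $p_{\mathsf{d}}(\cdot\mid\bm{\mathrm{x}},\delta t)$. Here $\bar{\epsilon}_h$ is chosen so that this quantile is at least as conservative as $h$.

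Second, bound the KDE estimation error $\sup|\tilde{F}_{\mathsf{d}}-F_{\mathsf{d}}|$ of the conditional CDF uniformly over $(\bm{\mathrm{x}},\delta t)$. I will use the uniform exponential deviation bounds for KDE from Appendix~\ref{appendix:uniform_exp_dev_residuals}, under the stated regularity condition.

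Third, bound the resampling error of the order statistic $\tilde{d}_{\mathsf{re}}^{(m_{\mathsf{re}})}$ relative to the $\tilde{p}_{\mathsf{d}}$-quantile. This follows from the DKW inequality or from a binomial tail bound on the number of resamples exceeding a level.

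The shift from $\bar{\epsilon}_h$ to $\bar{\epsilon}_h'$ is the margin that absorbs both errors. With it, the $\bar{\epsilon}_h'$-level empirical quantile sits above the true $\bar{\epsilon}_h$-level quantile except on an event of probability $\rho_h$, obtained by summing the two exponential tails.

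\textbf{Step 2: monotonicity of the constraint in the tightening.} On the Step~1 event, $L_{g,\bm{\mathrm{x}}}\ge 0$ gives $\widetilde{H}_{\bar{\epsilon}_h}^{N_{\mathsf{pr}},N_{\mathsf{re}}}(\bm{\mathrm{s}}_k)\ge H(\bm{\mathrm{s}}_k,o_k)$ pointwise. Hence
\[
\mathbb{I}\{\widetilde{H}\le 0\}\le \mathbb{I}\{H\le 0\},
\]
and the empirical weighted constraint~\eqref{eq:problem_option_smdp_sample_main_safe} with $\widetilde{H}$ at level $1-\varepsilon+\delta_{\mathsf{con}}$ implies the same constraint with $H$. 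This uses that the weights $\omega_i$ are nonnegative and normalized.

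\textbf{Step 3: apply Theorem~\ref{theo:finite_sample_main}(b).} The learned $\widetilde{\vartheta}$ satisfies the tightened empirical constraint for the original $H$. So, on an event of probability at least $1-\rho_M^{\mathsf{con}}$, it is feasible for Problem~\ref{eq:problem_option_smdp}. That theorem's feasibility event depends only on uniform concentration of the weighted constraint process over policies, so it applies to the data-dependent $\widetilde{\vartheta}$. A union bound over the two events gives probability at least $1-\rho_M^{\mathsf{con}}-\rho_h$.

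\textbf{Main obstacle.} I expect Step~1 to be the hard part. It requires the KDE conditional-quantile bound to hold uniformly in $(\bm{\mathrm{x}},\delta t)$, and it requires justifying that the quantile of the endpoint deviation $d$ dominates $h$, which is formally a bound on the supremum over the interval. My first approach is to invoke the endpoint-proxy assumption of Appendix~\ref{appendix:endpoint_deviation} directly and absorb any residual gap into the choice of $\bar{\epsilon}_h$. A secondary subtlety is that the surrogate and the importance weights may use overlapping data. I would handle this by treating $\mathcal{D}^{\mathsf{pr}}$ as independent of $\mathcal{D}_M$, or otherwise just rely on the union bound, which needs no independence.
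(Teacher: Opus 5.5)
Your Steps 2 and 3 are the paper's argument. On the domination event, $L_{g,\bm{\mathrm{x}}}\ge 0$ gives $\widetilde{H}_{\bar{\epsilon}_h}^{N_{\mathsf{pr}},N_{\mathsf{re}}}\ge H$. The indicator comparison then transfers the level-$(1-\varepsilon+\delta_{\mathsf{con}})$ weighted constraint from $\widetilde{H}$ to $H$. Theorem~\ref{theo:finite_sample_main} gives feasibility; it works through the uniform-residual Theorem~\ref{theo:feasibility_uniform}, which does cover a data-dependent solution. Boole's inequality combines the two failure probabilities.

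The difference is Step 1. The paper does not build the domination event itself. It invokes Theorem~\ref{theo:acs_conservative_surrogate}, i.e.\ Lemma~\ref{lemma:v_epsilon_h} plus Lemma~\ref{lemma:acs_feasible_for_ecs}. Lemma~\ref{lemma:v_epsilon_h} gets $v_{\bar{\epsilon}_h}\ge h$ from the definition $\bar{\epsilon}_h=\inf_{\bm{\mathrm{x}},\delta t}\mathsf{Pr}\{d\ge h(\bm{\mathrm{x}},\delta t)\mid\bm{\mathrm{x}},\delta t\}>0$ and continuity of the tail. Lemma~\ref{lemma:acs_feasible_for_ecs} is a black-box use of Theorem~3 of \citep{Shen2025PRS}; it gives a \emph{pointwise} failure probability at a fixed $(\bm{\mathrm{x}},\delta t)$, with the rate in \eqref{eq:rho_h_bound}. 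The paper then union-bounds over at most $K$ interaction steps, $\rho_h:=K\rho_h^{\mathsf{pt}}$.

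You aim instead for one event on which $\tilde{v}_{\bar{\epsilon}_h}^{N_{\mathsf{pr}},N_{\mathsf{re}}}\ge h$ holds uniformly on the operating region. That uniform event is what the indicator comparison really needs. $\widetilde{H}$ is evaluated at every weighted sample $(\bm{\mathrm{s}}_k^{(i)},o_k^{(i)})$, $i=1,\ldots,M$, and at points that depend on the learned, data-dependent policy. The paper's $\mathcal{E}_h$ covers only the $K$ pairs of one trajectory, bounded pointwise. So your route is more rigorous, and it trades the factor $K$ for a covering-number prefactor.

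The price is that your Step 1 is not available off the shelf, for two reasons:
\begin{itemize}
\item The bounds in Appendix~\ref{appendix:uniform_exp_dev_residuals} concern $\widehat{p}_M(\bm{\xi}\mid\bm{\mathrm{s}}_0,\theta)$ uniformly in $\theta$, not $\tilde{p}_{\mathsf{d}}(\cdot\mid\bm{\mathrm{x}},\delta t)$ uniformly in the conditioning pair. You must redo the covering argument under new assumptions: a compact operating region, regularity of $p_{\mathsf{d}}$ in $(\bm{\mathrm{x}},\delta t)$, and a design density of $(\bm{\mathrm{x}},\delta t)$ bounded below.
\item The resampling error is uniform over queries only if all queries share common random numbers. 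For example, take inverse-CDF transforms of one set $U_1,\ldots,U_{N_{\mathsf{re}}}$, so a single bound on one order statistic controls every query. Otherwise you are back to union bounds over finitely many query points.
\end{itemize}

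The obstacle you single out is not one. That the deviation is at most $h$ on $\{\bm{\mathrm{W}}_k\le\overline{W}_k(\varepsilon)\}$ does not give $v_{\bar{\epsilon}_h}\ge h$. It points the other way: it says $\mathsf{Pr}\{d\ge h\}$ is small. If it held almost surely, tail continuity would force $\bar{\epsilon}_h=0$. What places the $\bar{\epsilon}_h$-upper quantile above $h$ is only the positivity assumption $\bar{\epsilon}_h>0$ plus tail continuity, whatever random variable $d$ is. The endpoint-versus-supremum question therefore bears only on whether $\bar{\epsilon}_h>0$ is reasonable. It does not enter this theorem, whose conclusion is feasibility for Problem~\ref{eq:problem_option_smdp} with the analytical $H$, not continuous-time safety.
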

The proof of Theorem \ref{theo:selection_safety_empirical_main} is summarized in Appendix \ref{appendix:proof_theo_selection_safety_empirical_main}.
Theorem \ref{theo:selection_safety_empirical_main} shows that the learned treatment policy remains safety-feasible with high probability even when the analytical safety-tightening term is replaced by a sample-based approximation. 
This is practically important in clinical settings, where safety must be certified from finite retrospective data rather than exact patient dynamics. 
The result links feasibility to the amount of logged treatment data and the accuracy of the conservative surrogate approximation, making the framework suitable for reliable policy deployment. 
While we adopt this KDE-based method to estimate $\tilde{v}_{\bar{\epsilon}_h}^{N_{\mathsf{pr}},N_{\mathsf{re}}}(\bm{\mathrm{x}}_k,\delta t_k)$ for theoretical guarantees, Gaussian process can approximate the support more conveniently (Appendix \ref{appendix:acs_gp}). 
In the experiments, we adopt a constant safety margin as a numerically stable approximation of the GP-based conservative surrogate. 
This can be interpreted as a robustified version of the adaptive margin, where a single conservative upper bound is chosen to cover a relevant region of $(\bm{\mathrm{x}},\delta t)$ rather than being adjusted pointwise. 
Such a simplification avoids the optimization instability caused by directly coupling the state- and duration-dependent margin with $\delta t_k$, while preserving the intended role of the surrogate as a conservative buffer against inter-interaction risk, details are provided in the Appendix \ref{appendix:detailed_validation_results}.
After reformulating the original continuous-time problem as the option-based discrete-time constrained problem \ref{eq:problem_option_smdp}, existing safe reinforcement learning algorithms can be used as approximate solvers in practical implementation. 
In particular, although methods such as Constrained Policy Optimization (CPO) \citep{Achiam}, are originally developed for infinite-horizon discrete-time constrained Markov decision processes, they can still be applied here in a finite-horizon receding manner. 
This is reasonable because the essential inputs required by these algorithms, the transition samples, reward, and safety constraint, are all available after the option-SMDP reformulation. 
Therefore, the practical role of the present framework is to construct a continuous-time safety-aware option-level decision model, upon which existing safe RL methods can be deployed for approximate policy improvement.


\section{Validations}
\label{sec:validations}

\begin{table}[t]
\centering
\captionsetup{skip=2pt}
\scriptsize
\setlength{\tabcolsep}{3.75pt}
\renewcommand{\arraystretch}{0.75}
\caption{Comparison across methods showing Appropriate Intensification Rate (AIR), SOFA, Lactate, and Safety probability (mean $\pm$ Standard Deviation (SD)) for K = 8. 
All indicators are reported as mean $\pm$ SD over all evaluation trajectories from five test seeds, with 100 trajectories per seed.
$\uparrow$ ($\downarrow$) indicates that higher (lower) values are better. 
Blue(Red) indicates the best(worst) value in each row. 
The proposed methods, $\mathsf{CPO}$-$\mathsf{O}$ and $\mathsf{PCPO}$-$\mathsf{O}$, are highlighted in purple.
}
\label{table:clinical_results}
\begin{tabular}{lcccccccc}
\toprule
\textbf{Method} &  {\color{violet}$\mathsf{CPO}$-$\mathsf{O}$}  & $\mathsf{SAC}$-$\mathsf{O}$
 & $\mathsf{CPO}$-$\mathsf{E}$ & $\mathsf{SAC}$-$\mathsf{E}$  &  {\color{violet}$\mathsf{PCPO}$-$\mathsf{O}$}  & $\mathsf{TRPO}$-$\mathsf{O}$ &  $\mathsf{PCPO}$-$\mathsf{E}$  & $\mathsf{TRPO}$-$\mathsf{E}$
 \\
\midrule
\textbf{AIR}($\uparrow$)  & \textcolor{blue}{\textbf{87.60 $\pm$ 0.26}} & \textcolor{red}{\textbf{1.17 $\pm$ 0.45}} & 53.40 $\pm$ 3.06 & 47.54 $\pm$ 0.67 & 15.99 $\pm$ 0.78 & 71.56 $\pm$ 1.36 & 7.44 $\pm$ 0.86 & 24.85 $\pm$ 0.56 \\
\textbf{SOFA}($\downarrow$)  & \textcolor{blue}{\textbf{7.85 $\pm$ 0.03}} & \textcolor{red}{\textbf{8.92 $\pm$ 0.04}} & 8.15 $\pm$ 0.11 & 8.25 $\pm$ 0.07 & 7.92 $\pm$ 0.06 & 8.59 $\pm$ 0.13 & 8.06 $\pm$ 0.06 & 8.08 $\pm$ 0.11 \\ 
\textbf{Lactate}($\downarrow$) & 6.23 $\pm$ 0.20 & 6.60 $\pm$ 0.13 & 8.53 $\pm$ 0.95 & 8.24 $\pm$ 0.58 & \textcolor{blue}{\textbf{5.83 $\pm$ 0.15}} & \textcolor{red}{\textbf{9.75 $\pm$ 0.98}} & 7.85 $\pm$ 0.45 & 8.54 $\pm$ 0.74 \\
\textbf{Safety}(\%, $\uparrow$) & \textcolor{blue}{\textbf{100.00 $\pm$ 0.00}} & 25.80 $\pm$ 2.59 & 0.60 $\pm$ 0.89 & \textcolor{red}{\textbf{0.00 $\pm$ 0.00}} & 99.60 $\pm$ 0.55 & 3.20 $\pm$ 2.59 & 4.60 $\pm$ 1.67 & 0.20 $\pm$ 0.45 \\

\bottomrule
\end{tabular}
\end{table}


\noindent
\textbf{Experimental Setup.}
We evaluate the proposed method in a patient-specific continuous-time treatment environment constructed from MIMIC-III sepsis data \citep{Alistair}. 
Rather than using a discrete-time transition model, we build the environment by approximating the underlying physiological dynamics with a Physics-Informed Neural Network (PINN) \citep{RAISSI2019686} and an additional stochastic residual term. 
This construction preserves the continuous-time treatment structure, explicitly captures the effect of elapsed interaction intervals, and provides a substantially more realistic and challenging testbed than fixed-step simulators commonly used in treatment recommendation. 
The environment is built for one representative patient and does not include an explicit personalization parameter, allowing us to focus on validating the proposed safe continuous-time control framework under a high-dimensional and stochastic dynamics model. 
The present experiments evaluate the proposed framework in a learned continuous-time simulator constructed from clinical data, and primarily validate the interaction-limited safe control mechanism rather than the more challenging fully offline learning setting under strong dataset-coverage and out-of-distribution limitations.  
Detailed definitions of the state, reward, safety metrics, environment construction, and evaluation protocol are provided in Appendix \ref{appendix:experimental_details}. 
To further assess robustness beyond the primary patient-specific environment, we also provide supplementary evaluations using cluster-level PINN-based treatment environments in the Appendix \ref{appendix:cluster_level_results}. 
We compare the proposed framework, instantiated with different policy optimization algorithms against methods using equidistant interaction times. In particular, $\mathsf{CPO}$-$\mathsf{O}$ ($\mathsf{CPO}$ \citep{Achiam} with optimal time-adaptation) and $\mathsf{PCPO}$-$\mathsf{O}$ ($\mathsf{PCPO}$ \citep{yang2020_pcpo} with optimal time-adaptation) are implementations of the proposed method, while $\mathsf{SAC}$-$\mathsf{O}$ ($\mathsf{SAC}$ \citep{haarnoja18b_sac} with optimal time-adaptation) and $\mathsf{TRPO}$-$\mathsf{O}$ ($\mathsf{TRPO}$ \citep{schulman15_trpo} with optimal time-adaptation) evaluate the proposed time-adaptation mechanism with unconstrained optimizers. As references, we further consider $\mathsf{CPO}$-$\mathsf{E}$ ($\mathsf{CPO}$ with equidistant interaction times), $\mathsf{SAC}$-$\mathsf{E}$ ($\mathsf{SAC}$ with equidistant interaction times), $\mathsf{PCPO}$-$\mathsf{E}$ ($\mathsf{PCPO}$ with equidistant interaction times), and $\mathsf{TRPO}$-$\mathsf{E}$ ($\mathsf{TRPO}$ with equidistant interaction times). 
We evaluate these methods from: treatment effectiveness, measured by the mean $\mathsf{SOFA}$ score per step; safety, measured by the safety rate after the critical time threshold; and trajectory-level behavior, measured by the $\mathsf{SOFA}$ score's time evolution.

\begin{figure}[htbp]
\centering
\includegraphics[width=0.975\textwidth]{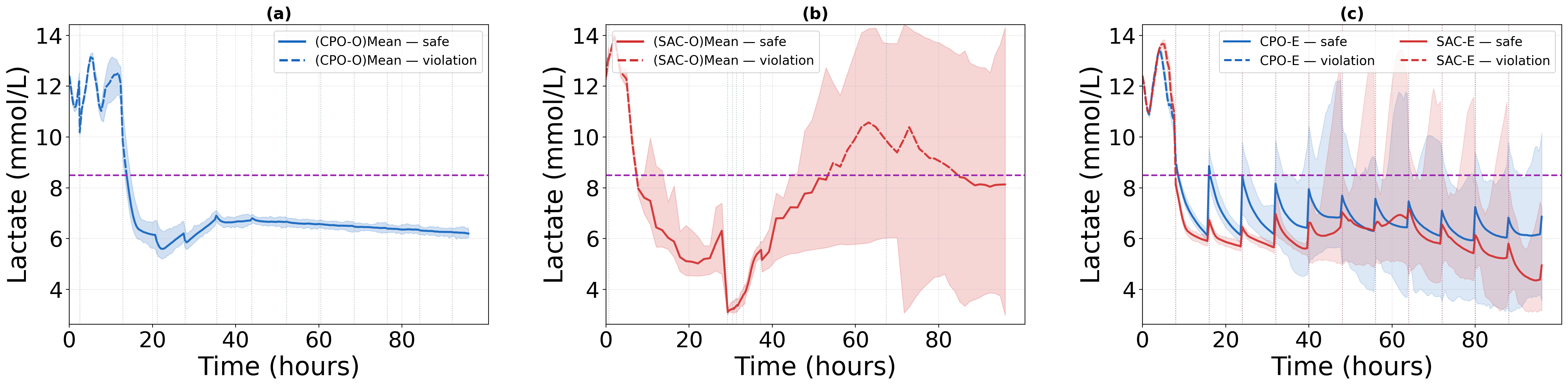}
\caption{Lactate evaluation results of different methods. The purple dashed horizontal line is the safety bound, set to $8.5$ mmol/L. In all panels, solid lines denote the mean lactate trajectories of safe rollouts, dashed lines denote the mean trajectories of violating rollouts, and shaded regions indicate variability bands. (a) $\mathsf{CPO}$-$\mathsf{O}$. (b) $\mathsf{SAC}$-$\mathsf{O}$. (c) Blue: $\mathsf{CPO}$-$\mathsf{E}$; Red: $\mathsf{SAC}$-$\mathsf{E}$, respectively.}
\label{fig:safety}
\end{figure}

\begin{figure}[htbp]
\centering
\includegraphics[width=0.975\textwidth]{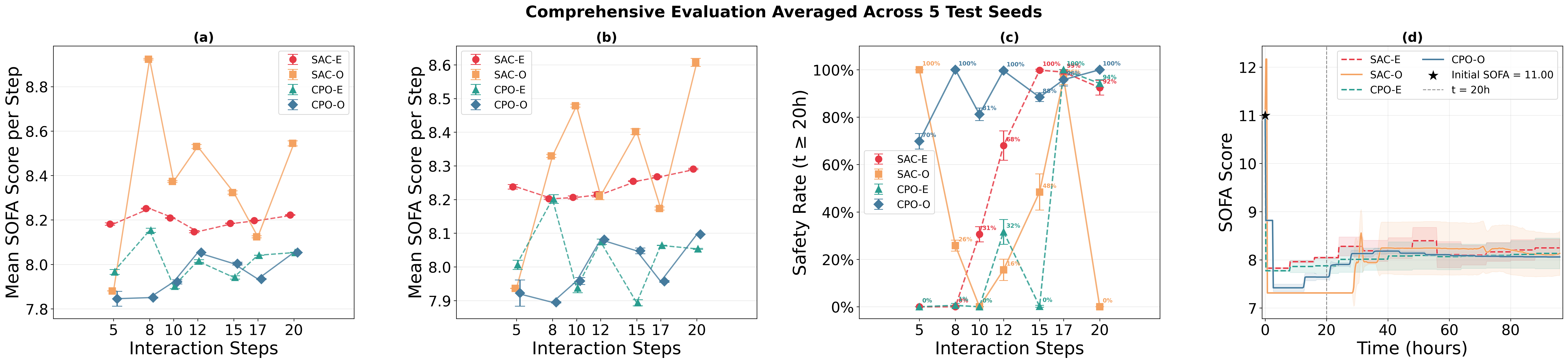}
\caption{Comprehensive evaluation of different methods, averaged across five test seeds. (a) Mean $\mathsf{SOFA}$ score per step under the first evaluation setting. (b) Mean $\mathsf{SOFA}$ score per step under the second evaluation setting. (c) Safety rate at $t\geq 20$h across different interaction steps. (d) Mean $\mathsf{SOFA}$ trajectories over time, where the vertical dashed line marks $t=20$h and the black star indicates the initial $\mathsf{SOFA}$ score. Error bars and shaded regions indicate variability across test seeds. The detailed definitions of the evaluation settings are provided in the Appendix \ref{appendix:training}.}
\label{fig:comprehensive}
\end{figure}

\noindent
\textbf{Results and Discussion.}
Figures \ref{fig:safety}--\ref{fig:comprehensive} and Table \ref{table:clinical_results} show that the proposed optimal time-adaptation mechanism consistently improves both treatment effectiveness and safety compared with equidistant interaction schemes. 
In particular, $\mathsf{CPO}$-$\mathsf{O}$ achieves lower lactate trajectories and lower mean $\mathsf{SOFA}$ scores than $\mathsf{CPO}$-$\mathsf{E}$, indicating that adaptive interaction timing is beneficial. 
Figure \ref{fig:safety} further shows that $\mathsf{CPO}$-$\mathsf{O}$ yields more stable lactate evolution and fewer safety-violating rollouts than $\mathsf{SAC}$-$\mathsf{O}$, suggesting that combining the proposed framework with constrained optimization provides additional safety benefits. 
The same trend is observed in Figure \ref{fig:comprehensive}(c)--(d), where constrained methods with optimal time-adaptation achieve higher safety rates after the critical time threshold and more favorable $\mathsf{SOFA}$ trajectories. 
These results are also consistent with the quantitative metrics in Table \ref{table:clinical_results}, where the proposed constrained optimal time-adaptation methods, $\mathsf{CPO}$-$\mathsf{O}$ and $\mathsf{PCPO}$-$\mathsf{O}$, achieve better AIR, lower $\mathsf{SOFA}$, and lower lactate than their equidistant counterparts. 
In contrast, the same trend is not consistently observed for unconstrained optimizers, which further highlights the importance of combining adaptive interaction timing with explicit safety-aware policy optimization.
Besides, the results of $\mathsf{PCPO}$-$\mathsf{O}$, $\mathsf{TRPO}$-$\mathsf{O}$, $\mathsf{PCPO}$-$\mathsf{E}$, and $\mathsf{TRPO}$-$\mathsf{E}$ follow the same pattern, showing that the benefit of the proposed framework is not tied to a specific optimizer but generalizes across both constrained and unconstrained policy optimization methods. 
Overall, the proposed resulting policy achieves a better balance between treatment effectiveness and safety, especially in terms of stabilizing clinically important physiological indicators and reducing unsafe trajectories. 
The constrained methods do not suffer a deterioration in treatment effectiveness and instead achieve better reward-related outcomes in this setting. From a clinical perspective, this is reasonable because constraining lactate prevents the policy from exploiting treatment strategies that may improve short-term reward while driving the patient toward physiologically unstable states. 
Since elevated lactate is a well-recognized marker of inadequate tissue perfusion and worsening sepsis severity, explicitly controlling lactate can steer the policy toward interventions that are both safer and more aligned with long-term recovery.
For all methods, each trained policy is evaluated $100$ times under each of five random seeds, and the table reports the resulting mean and standard deviation. 
Due to space limitations, detailed training settings, evaluation protocols, training curves, state-action trajectories, safety evaluations, and the full results for $\mathsf{PCPO}$ and $\mathsf{TRPO}$ are deferred to Appendix \ref{appendix:training}--\ref{appendix:comprehensive_PCPO_TRPO}.

\section{Conclusions}
\label{sec:conclusions}

We studied interaction-limited safe continuous-time reinforcement learning for dynamical medical treatment. We formulated the problem as a continuous-time safety-constrained decision problem that jointly optimizes treatment administration and clinical interaction timing, developed an option-based reformulation together with a safety-tightening mechanism that guarantees safety between clinical interactions, and established finite-sample feasibility guarantees with a practical data-driven conservative surrogate. 
Experimental results on sepsis treatment showed that the proposed optimal time-adaptation mechanism improves both safety and treatment effectiveness over equidistant interaction schemes across different policy optimization methods.
We note that the absolute differences in SOFA score between methods are modest, and whether these translate into clinically meaningful improvements in patient outcomes requires further validation in collaboration with critical care specialists.



\newpage

\bibliographystyle{plainnat}
\bibliography{references}


\newpage

\appendix

\section{Limitations}
\label{appendix:limitations}

This work has several limitations that suggest important directions for future research. 
First, the current safety analysis focuses on the learned or converged policy, and does not explicitly address safety during policy exploration or training. Extending the framework to incorporate safe exploration mechanisms, such as shielding-based intervention, is an important next step. 
Second, the present formulation assumes full state observability at interaction times and synchronous treatment implementation, whereas real clinical decision-making is often partially observed and different interventions may be administered asynchronously. Incorporating continuous-time partial observability and more realistic treatment execution models would further improve clinical fidelity. 
Third, although this work focuses on policy learning with the setting of a full-coverage dataset, extending the framework to fully offline learning under partial coverage and stronger out-of-distribution shift in continuous-time settings remains an important direction. 
Fourth, the safety constraint in this work is based on blood lactate levels with a threshold of 8.5 mmol/L. While this value was chosen to accommodate the physiological complexity of sepsis, its clinical justification warrants further scrutiny. In standard ICU practice, lactate levels above 2 mmol/L are considered abnormal, and values exceeding 4 mmol/L are associated with significantly increased mortality. The relatively permissive threshold adopted here may therefore not fully reflect the clinical urgency with which elevated lactate is treated in practice. Future work should examine the sensitivity of policy behavior to this threshold and consider incorporating clinically validated cutoff values in consultation with critical care specialists. 
Fifth, the reward function in this work is based on the SOFA score, which has established correlations with patient mortality and is widely used in critical care research. However, the absolute differences in mean SOFA scores observed between methods in our experiments are modest, and whether such differences constitute clinically meaningful improvements in patient outcomes remains to be established. Future work should consider examining the relationship between observed SOFA improvements and longer-term outcomes such as 28-day mortality or ICU length of stay.

\section{Related Work}
\label{appendix:related_work}

\noindent
\textbf{Reinforcement learning for dynamic treatment.}
Dynamic treatment planning is often formulated using reinforcement learning (RL), especially in healthcare applications such as sepsis management, where the goal is to learn treatment strategies from longitudinal ICU data \citep{Job2024}. 
A dominant line of work models the problem as a discrete-time Markov decision process (MDP) \citep{Huang2022SepsisRL}, in which the patient state is sampled at fixed intervals and treatment actions are optimized only on this discrete decision grid \citep{Tang2018, Hargrave2024}. 
Representative examples include the AI Clinician for sepsis treatment and subsequent offline RL approaches based on ICU data \citep{Komorowski, gottesman2019guidelines}. This discrete-time formulation has been widely adopted because it is compatible with standard RL algorithms and can be implemented directly from temporally aggregated electronic health records \citep{Nambiar2023, tumay2025, Shen_NeurIPS2025}. 
However, it also introduces an important abstraction gap: the underlying physiological process is continuous, whereas the learned policy only reasons over discretized snapshots. As a result, such methods may fail to capture clinically important changes that occur between decision times, and the timing itself is usually treated as fixed rather than optimized.

\noindent
\textbf{Continuous-time treatment learning.}
To address the mismatch between continuous physiological evolution and discrete-time decision making, recent work has begun to study continuous-time treatment learning. In these approaches, patient evolution is modeled more explicitly over time \citep{Kuang2024, Lingsch2024}. 
In particular, recent continuous-time learning frameworks optimize not only treatment dosage but also the timing of observations or interventions from logged clinical data \citep{Zhang2023CTDT, Schneider2026}. This direction is particularly relevant to healthcare, since measurements and treatment updates are rarely performed on a fixed clock and instead depend on patient condition and clinical judgment. Nevertheless, existing continuous-time treatment learning methods still do not explicitly guarantee safety over the full continuous-time trajectory between two interventions. Their focus is primarily on representation, prediction, or policy optimization in continuous time, rather than on enforcing trajectory-level safety throughout the inter-interaction interval. This gap is especially important in high-stakes treatment settings, where adverse events may arise between two clinically observed time points.

\noindent
\textbf{Continuous-time and time-adaptive RL.}
Beyond healthcare-specific treatment learning, a broader RL literature studies continuous-time control and time-adaptive decision making \citep{mannb14, Yu2021TAAC}. 
Continuous-time RL methods typically model state evolution through differential equations and derive policies that interact with the environment over continuous time \citep{Yildiz2021, Treven2023, Holt2023}. Within this line, time-adaptive methods further optimize when to observe or intervene, rather than assuming a fixed decision schedule \citep{Treven2024}. 
These formulations are conceptually close to real clinical decision making, since both the treatment action and the next interaction time may need to be chosen adaptively. However, most existing continuous-time RL work is developed for general control problems rather than medical treatment, and does not directly address trajectory-level safety guarantees under interaction limitations.

\noindent
\textbf{Safe reinforcement learning.}
A separate literature studies safe RL through constrained MDPs, chance constraints, risk-sensitive objectives, shielding, and Lyapunov-based approaches \citep{altman2021constrained, Achiam, chow2018lyapunov}. These methods provide important tools for handling safety during policy optimization, and constrained policy methods such as CPO are particularly influential in practice. However, most of this literature is developed in discrete time and typically controls expected cumulative costs, constraint violations at decision times, or surrogate notions of stability. Even when safety constraints are imposed, they usually apply to sampled states on the decision grid rather than to the full continuous-time trajectory between two decisions. Consequently, these methods do not directly address the setting considered in our paper, where safety must hold throughout the interval between clinical interactions.

\noindent
\textbf{Temporal abstraction, options, and SMDPs.}
Temporal abstraction provides another relevant line of research. Options and semi-Markov decision processes (SMDPs) extend standard MDPs by allowing high-level decisions to trigger temporally extended behaviors \citep{sutton1999between,bacon2017option}. This framework is attractive in healthcare, where reassessment and treatment changes are costly and cannot occur continuously, while low-level physiological evolution continues between two clinical interactions. Options therefore provide a natural modeling tool for jointly representing treatment administration and interaction timing. However, prior work on options and SMDPs has mainly focused on representation learning, exploration efficiency, or hierarchical control, rather than on safety-critical continuous-time treatment problems with trajectory-level guarantees.

\noindent
\textbf{Position of our work.}
Our work lies at the intersection of these directions. Similar to prior medical RL work, we study dynamic treatment planning from data; similar to continuous-time RL and time-adaptive methods, we explicitly model treatment timing as a decision variable; similar to safe RL, we impose explicit safety constraints; and similar to options/SMDPs, we use temporal abstraction to represent interaction-limited treatment control. The key difference is that our framework combines these ingredients in a way tailored to safety-critical continuous-time treatment: we jointly optimize treatment dosage and interaction timing, impose an explicit limit on the number of clinical interactions, and develop a safety-tightening mechanism that guarantees safety over the full continuous-time trajectory between decisions. To the best of our knowledge, this combination has not been addressed in prior medical RL work.


\section{Proof of Proposition~\ref{prop:oilscp_to_cttpil}}

\begin{proof}
Take any feasible solution of \ref{eq:problem_option_ils}. 
At each interaction time \(t_k\), let the selected option be \(o_k\in\mathcal O\).
By assumption, \(o_k\) induces an admissible segment controller
\[
\gamma_k:=(\pi_{o_k},\beta_{o_k})\in\Gamma.
\]
Define a high-level policy \(\nu\) for \ref{eq:problem_cttp_il} that selects exactly this controller \(\gamma_k\) whenever the option policy selects \(o_k\).

Then, on every interval \([t_k,t_{k+1})\), the control law under \ref{eq:problem_cttp_il} is 
\[
\bm{\mathrm{u}}_t=\phi_k(\bm{\mathrm{x}}_t)=\pi_{o_k}(\bm{\mathrm{x}}_t),
\]
which coincides with the intra-option control in \eqref{eq:intra_option_control}.
Likewise, the next interaction time is induced by the same termination rule, since \(\eta_k=\beta_{o_k}\).
Hence the resulting closed-loop state trajectory is identical pathwise, and therefore so are the accumulated reward and the trajectory-level safety event.
Thus every feasible solution of \ref{eq:problem_option_ils} induces a feasible solution of \ref{eq:problem_cttp_il} with identical performance.
Taking the supremum over feasible solutions yields
\[
V^\star_{\mathsf{O\text{-}ILSCP}}
\le
V^\star_{\mathsf{CTTP\text{-}IL}}.
\qedhere
\]
\end{proof}

\section{Proof of Theorem~\ref{thm:oilscp_exact}}
\begin{proof}
Proposition~\ref{prop:oilscp_to_cttpil} already shows that every feasible option-based policy induces a feasible interaction-limited treatment law.
It remains to prove the converse.

Take any feasible solution of \ref{eq:problem_cttp_il}.
At each interaction time \(t_k\), let the selected segment controller be
\[
\gamma_k=(\phi_k,\eta_k)\in\Gamma.
\]
By assumption, there exists an option \(o^{\gamma_k}\in\mathcal O\) such that
\[
\pi_{o^{\gamma_k}}=\phi_k,
\qquad
\beta_{o^{\gamma_k}}=\eta_k.
\]
Define a high-level option policy \(\mu\) that selects \(o^{\gamma_k}\) whenever the interaction-limited policy selects \(\gamma_k\).

Then, on each interval \([t_k,t_{k+1})\),
\[
\bm{\mathrm{u}}_t=\phi_k(\bm{\mathrm{x}}_t)=\pi_{o^{\gamma_k}}(\bm{\mathrm{x}}_t),
\]
and the next interaction time is induced by the same termination rule because
\(\eta_k=\beta_{o^{\gamma_k}}\).
Therefore the option-based system produces the same control trajectory, the same state trajectory, the same accumulated reward, and the same safety event pathwise.

Thus every feasible solution of \ref{eq:problem_cttp_il} is representable by \ref{eq:problem_option_ils}, and together with Proposition~\ref{prop:oilscp_to_cttpil}, the feasible closed-loop laws of the two problems coincide.
Hence
\[
V^\star_{\mathsf{O\text{-}ILSCP}}
=
V^\star_{\mathsf{CTTP\text{-}IL}}.
\qedhere
\]
\end{proof}

\section{Deriving \eqref{eq:cl_growth_bound} from a radial inequality}
\label{appendix:cl_growth_from_f}

Fix an interaction interval $k$ and the executed option $o_k\in\mathcal{O}$. 
Consider the closed-loop trajectory generated by $\bm{\mathrm{u}}_t=\pi_{o_k}(\bm{\mathrm{x}}_t)$ on $t\in[t_k,t_k+\delta t_k]$.

\begin{mylemma}[State deviation bound from a radial inequality]
\label{lemma:cl_growth_from_f}
suppose that the closed-loop trajectory $\{\bm{\mathrm{x}}_t\}_{t\in[t_k,t_k+\delta t_k]}$ is absolutely continuous and satisfies \eqref{eq:system_dynamics}. 
Assume that there exist constants $\lambda_{o_k}\in\mathbb{R}$ and $\gamma_{o_k}\ge 0$, and a nonnegative quantity $b_{k,o_k}$ measurable with respect to $(\bm{\mathrm{s}}_k,o_k)$, such that for almost every $t\in[t_k,t_k+\delta t_k]$ with $\bm{\mathrm{x}}_t\ne \bm{\mathrm{x}}_k$,
\begin{equation}
\label{eq:radial_ineq}
\frac{\left\langle \bm{\mathrm{x}}_{t}-\bm{\mathrm{x}}_{k},\, f\!\left(\bm{\mathrm{x}}_{t},\pi_{o_k}(\bm{\mathrm{x}}_{t}),\bm{\mathrm{w}}_{t}\right)\right\rangle}{\|\bm{\mathrm{x}}_{t}-\bm{\mathrm{x}}_{k}\|}
\le
\lambda_{o_k}\|\bm{\mathrm{x}}_{t}-\bm{\mathrm{x}}_{k}\|
+
b_{k,o_k}
+
\gamma_{o_k}\|\bm{\mathrm{w}}_{t}\|.
\end{equation}
Then, for all $\tau\in[0,\delta t_k]$, the option-wise closed-loop growth bound \eqref{eq:cl_growth_bound} holds.
\end{mylemma}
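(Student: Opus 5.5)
The plan is a scalar comparison (Grönwall-type) argument applied to the deviation norm $e(\tau):=\|\bm{\mathrm{x}}_{t_k+\tau}-\bm{\mathrm{x}}_k\|$ for $\tau\in[0,\delta t_k]$. Since $\bm{\mathrm{x}}_t$ is absolutely continuous, the map $\tau\mapsto \tfrac12 e(\tau)^2$ is absolutely continuous. Its a.e.\ derivative is
\[
\tfrac{\mathsf{d}}{\mathsf{d}\tau}\tfrac12 e(\tau)^2=\left\langle \bm{\mathrm{x}}_{t_k+\tau}-\bm{\mathrm{x}}_k,\ f\!\left(\bm{\mathrm{x}}_{t_k+\tau},\pi_{o_k}(\bm{\mathrm{x}}_{t_k+\tau}),\bm{\mathrm{w}}_{t_k+\tau}\right)\right\rangle .
\]
To avoid dividing by zero where $e=0$, I will work with the regularization $e_\epsilon(\tau):=\sqrt{e(\tau)^2+\epsilon^2}$ for $\epsilon>0$. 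This function is absolutely continuous, and where $e>0$ the radial inequality \eqref{eq:radial_ineq} gives a.e.
\[
\dot e_\epsilon=\frac{e}{e_\epsilon}\cdot\frac{\langle \bm{\mathrm{x}}-\bm{\mathrm{x}}_k,f\rangle}{e}\le \lambda_{o_k}\frac{e^2}{e_\epsilon}+b_{k,o_k}+\gamma_{o_k}\|\bm{\mathrm{w}}_{t_k+\tau}\|.
\]
On the set where $e=0$, the derivative of $e^2$ vanishes a.e., so $\dot e_\epsilon=0$ and the same bound holds. Writing $\lambda:=\lambda_{o_k}$, we have $\lambda e^2/e_\epsilon\le \lambda e_\epsilon+|\lambda|\epsilon$. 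This yields the linear differential inequality
\[
\dot e_\epsilon\le \lambda e_\epsilon+b_{k,o_k}+\gamma_{o_k}\|\bm{\mathrm{w}}\|+|\lambda|\epsilon,\qquad e_\epsilon(0)=\epsilon .
\]

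Next I apply the integrating factor $e^{-\lambda\tau}$. The product $e^{-\lambda\tau}e_\epsilon(\tau)$ is absolutely continuous, and integrating its derivative bound from $0$ to $\tau$ gives
\[
e_\epsilon(\tau)\le e^{\lambda\tau}\epsilon+\int_0^\tau e^{\lambda(\tau-s)}\bigl(b_{k,o_k}+|\lambda|\epsilon+\gamma_{o_k}\|\bm{\mathrm{w}}_{t_k+s}\|\bigr)\mathsf{d}s .
\]
The $b$-term integrates exactly to $\frac{e^{\lambda\tau}-1}{\lambda}b_{k,o_k}$, which equals $\tau\, b_{k,o_k}$ when $\lambda=0$. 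The disturbance term requires bounding $e^{\lambda(\tau-s)}$ uniformly by $e^{\lambda\tau}$, which yields $\gamma_{o_k}e^{\lambda\tau}\bm{\mathrm{W}}_{k,\tau}$.

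This last step is the one I expect to be the main obstacle, because the uniform bound $e^{\lambda(\tau-s)}\le e^{\lambda\tau}$ holds only for $\lambda\ge 0$. For $\lambda<0$ the kernel is instead bounded by $1$, and $1>e^{\lambda\tau}$, so the bound in \eqref{eq:cl_growth_bound} does not follow directly. I would first check whether the intended reading uses $e^{|\lambda|\tau}$, as in \eqref{eq:H_selection_main}, where the bound is immediate. For $\lambda<0$ I would try to argue that replacing $\lambda$ by $\max\{\lambda,0\}$ is permitted, since \eqref{eq:radial_ineq} with $\lambda<0$ implies it with $\lambda=0$. If that reinterpretation is not acceptable, I would state the result with the kernel bound $\max\{1,e^{\lambda\tau}\}$ and note that Theorem~\ref{theo:selection_safety} only ever uses the $|\lambda|$ version. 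Finally, I let $\epsilon\downarrow 0$. Since $e\le e_\epsilon$ and all $\epsilon$-terms vanish, this gives \eqref{eq:cl_growth_bound} for every $\tau\in[0,\delta t_k]$.
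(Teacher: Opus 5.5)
Your route is the paper's: a scalar Gr\"onwall/comparison argument for $V(t)=\|\bm{\mathrm{x}}_t-\bm{\mathrm{x}}_k\|$ with $V(t_k)=0$. For $\lambda_{o_k}\ge 0$ your argument is complete. Your $\epsilon$-regularization correctly handles the set where $\bm{\mathrm{x}}_t=\bm{\mathrm{x}}_k$, which the paper passes over. A lighter fix is available: at almost every zero of the nonnegative absolutely continuous function $V$, the derivative exists and equals $0$. Hence $\dot V\le \lambda_{o_k}V+b_{k,o_k}+\gamma_{o_k}\|\bm{\mathrm{w}}_t\|$ holds a.e.\ on the whole interval, because $b_{k,o_k},\gamma_{o_k}\ge 0$.

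The obstacle you flag for $\lambda_{o_k}<0$ is not a weakness of your argument; the bound \eqref{eq:cl_growth_bound} is false there. The paper's proof obtains it only by asserting that Gr\"onwall yields the factor $e^{\lambda_{o_k}\tau}$ in front of $\bm{\mathrm{W}}_{k,\tau}$. That step silently uses $e^{\lambda(\tau-s)}\le e^{\lambda\tau}$, i.e.\ $\lambda\ge 0$.

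Here is a counterexample. Take a scalar system with $f(x,u,w)=-cx+w$, $c>0$, $\bm{\mathrm{x}}_k=0$ and $\bm{\mathrm{w}}\equiv 1$. The radial inequality holds with $\lambda_{o_k}=-c$, $b_{k,o_k}=0$, $\gamma_{o_k}=1$, since $\mathrm{sign}(x)\,w\le |w|$. Yet $x_{t_k+\tau}=(1-e^{-c\tau})/c$, which exceeds the claimed bound $e^{-c\tau}\tau$ for every $\tau>0$, because $e^{c\tau}>1+c\tau$.

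So do not hedge; state and prove the lemma in one of three corrected forms:
\begin{itemize}
\item for $\lambda_{o_k}\ge 0$ only;
\item with $|\lambda_{o_k}|$ in place of $\lambda_{o_k}$, applying $\lambda V\le|\lambda|V$ first, as the paper itself does in the proof of Theorem~\ref{theo:selection_safety};
\item with the sharp kernel factor $\max\{1,e^{\lambda_{o_k}\tau}\}$.
\end{itemize}
Your $\max\{\lambda,0\}$ option establishes the growth property only for a different triple. That is acceptable only under an existential reading of Definition~\ref{def:cl_growth}. Since Theorem~\ref{theo:selection_safety} and the tightening \eqref{eq:H_selection_main} use only the $|\lambda_{o_k}|$ form, nothing downstream is affected.
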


\begin{proof}
Define $V(t):=\|\bm{\mathrm{x}}_{t}-\bm{\mathrm{x}}_{k}\|$ for $t\in[t_k,t_k+\delta t_k]$. 
Since $\bm{\mathrm{x}}_t$ is absolutely continuous, $V(t)$ is absolutely continuous and is differentiable for almost every $t$ with $\bm{\mathrm{x}}_t\ne \bm{\mathrm{x}}_k$. 
For such $t$,
\[
\frac{\mathsf{d}}{\mathsf{d}t}V(t)
=
\frac{\left\langle \bm{\mathrm{x}}_{t}-\bm{\mathrm{x}}_{k},\, \dot{\bm{\mathrm{x}}}_{t}\right\rangle}{\|\bm{\mathrm{x}}_{t}-\bm{\mathrm{x}}_{k}\|}
=
\frac{\left\langle \bm{\mathrm{x}}_{t}-\bm{\mathrm{x}}_{k},\, f\!\left(\bm{\mathrm{x}}_{t},\pi_{o_k}(\bm{\mathrm{x}}_{t}),\bm{\mathrm{w}}_{t}\right)\right\rangle}{\|\bm{\mathrm{x}}_{t}-\bm{\mathrm{x}}_{k}\|}.
\]
Combining with \eqref{eq:radial_ineq} yields, for almost every $t\in[t_k,t_k+\delta t_k]$,
\[
\frac{\mathsf{d}}{\mathsf{d}t}V(t)\le \lambda_{o_k}V(t)+b_{k,o_k}+\gamma_{o_k}\|\bm{\mathrm{w}}_{t}\|.
\]
Applying the integral form of Gr\"onwall inequality \citep{Gronwall1919} on $[t_k,t_k+\tau]$ and using $V(t_k)=0$ gives
\[
V(t_k+\tau)
\le
\frac{e^{\lambda_{o_k}\tau}-1}{\lambda_{o_k}}\,b_{k,o_k}
+
\gamma_{o_k} e^{\lambda_{o_k}\tau}\int_{t_k}^{t_k+\tau}\|\bm{\mathrm{w}}_{s}\|\mathsf{d}s,
\]
with the convention $\frac{e^{\lambda_{o_k}\tau}-1}{\lambda_{o_k}}=\tau$ when $\lambda_{o_k}=0$. 
Recalling $\bm{\mathrm{W}}_{k,\tau}:=\int_{t_k}^{t_k+\tau}\|\bm{\mathrm{w}}_{s}\|\mathsf{d}s$ yields \eqref{eq:cl_growth_bound}.
\end{proof}

\section{Proof of Theorem \ref{theo:selection_safety}}
\label{appendix:theo_selection_safety}

\subsection{Preparation: state deviation bound}

Fix an interaction interval $k$ and define $\delta t_k:=t_{k+1}-t_k$.
Note that $\delta t\in[0,\delta t_k]$.

Let
\[
V(t):=\|\bm{\mathrm{x}}_t-\bm{\mathrm{x}}_k\|.
\]

By Lemma~\ref{lemma:cl_growth_from_f}, the closed-loop trajectory satisfies
\[
\frac{\mathsf{d}}{\mathsf{d}t}V(t)
\le
\lambda_{o_k}V(t)+b_{k,o_k}+\gamma_{o_k}\|\bm{\mathrm{w}}_t\|
\]
for almost every $t\in[t_k,t_k+\delta t_k]$.

Since $V(t)\ge0$, we have
\[
\lambda_{o_k}V(t)\le |\lambda_{o_k}|V(t),
\]
and therefore
\[
\frac{\mathsf{d}}{\mathsf{d}t}V(t)
\le
|\lambda_{o_k}|V(t)+b_{k,o_k}+\gamma_{o_k}\|\bm{\mathrm{w}}_t\|.
\]

Applying the integral form of Gr\"onwall's inequality
\citep{Gronwall1919} on $[t_k,t_k+\tau]$ with $V(t_k)=0$
yields, for all $\tau\in[0,\delta t_k]$,
\begin{equation}
\label{eq:pathwise_deviation_bound}
\|\bm{\mathrm{x}}_{t_k+\tau}-\bm{\mathrm{x}}_{k}\|
\le
\frac{e^{|\lambda_{o_k}|\tau}-1}{|\lambda_{o_k}|}\,b_{k,o_k}
+
\gamma_{o_k}e^{|\lambda_{o_k}|\tau}\bm{\mathrm{W}}_{k,\tau},
\end{equation}
where
\[
\bm{\mathrm{W}}_{k,\tau}:=\int_{t_k}^{t_k+\tau}\|\bm{\mathrm{w}}_s\|\mathsf{d}s.
\]

Consequently,
\begin{equation}
\label{eq:sup_deviation_bound}
\sup_{t\in[t_k,t_{k+1}]}\|\bm{\mathrm{x}}_{t}-\bm{\mathrm{x}}_{k}\|
\le
\frac{e^{|\lambda_{o_k}|\delta t_k}-1}{|\lambda_{o_k}|}\,b_{k,o_k}
+
\gamma_{o_k}e^{|\lambda_{o_k}|\delta t_k}\bm{\mathrm{W}}_{k}.
\end{equation}

\subsection{Proof for (a)}

Fix $k$ and let $\varepsilon\in(0,1)$.
Choose $\overline{W}_k(\varepsilon)$ such that
\begin{equation}
\label{eq:W_quantile_event_refined}
\mathsf{Pr}\left\{\bm{\mathrm{W}}_{k}\le \overline{W}_k(\varepsilon)\right\}\ge 1-\varepsilon.
\end{equation}

For example, by Markov's inequality and Assumption~\ref{assump:system_property}(b),
\[
\overline{W}_k(\varepsilon)
=
\left(
\frac{\mathbb{E}\left[\left(\bm{\mathrm{W}}_{k}\right)^p\right]}{\varepsilon}
\right)^{1/p}.
\]

Define
\begin{equation}
\label{eq:Gamma_def}
\Gamma_k(\varepsilon):=
\frac{e^{|\lambda_{o_k}|\delta t_k}-1}{|\lambda_{o_k}|}\,b_{k,o_k}
+
\gamma_{o_k}e^{|\lambda_{o_k}|\delta t_k}\overline{W}_k(\varepsilon),
\end{equation}
and
\begin{equation}
\label{eq:H_def}
H(\bm{\mathrm{x}}_k,o_k):=
g(\bm{\mathrm{x}}_k)+L_{g,\bm{\mathrm{x}}}\Gamma_k(\varepsilon).
\end{equation}

Let
\[
\mathcal{E}_k(\varepsilon):=
\{\bm{\mathrm{W}}_{k}\le \overline{W}_k(\varepsilon)\}.
\]

On $\mathcal{E}_k(\varepsilon)$,
\eqref{eq:sup_deviation_bound} implies
\[
\sup_{t\in[t_k,t_{k+1}]}\|\bm{\mathrm{x}}_{t}-\bm{\mathrm{x}}_{k}\|
\le
\Gamma_k(\varepsilon).
\]

Using the Lipschitz continuity of $g$ (Assumption~\ref{assump:system_property}(a)),
for all $t\in[t_k,t_{k+1}]$,
\[
g(\bm{\mathrm{x}}_{t})
\le
g(\bm{\mathrm{x}}_{k})
+
L_{g,\bm{\mathrm{x}}}\|\bm{\mathrm{x}}_{t}-\bm{\mathrm{x}}_{k}\|
\le
g(\bm{\mathrm{x}}_{k})+L_{g,\bm{\mathrm{x}}}\Gamma_k(\varepsilon)
=
H(\bm{\mathrm{x}}_k,o_k).
\]

Therefore, since $\mathsf{Pr}\left\{H(\bm{\mathrm{x}}_k,o_k)\le 0\right\}\geq 1-\varepsilon$ holds for any feasible solution of Problem \ref{eq:problem_option_smdp},
then $g(\bm{\mathrm{x}}_{t})\le0$ for all $t\in[t_k,t_{k+1}]$
on $\mathcal{E}_k(\varepsilon)\cap \left\{H(\bm{\mathrm{x}}_k,o_k)\le 0\right\}$.
Hence
\[
\mathsf{Pr}\left\{
g(\bm{\mathrm{x}}_{t})\le0,\ \forall t\in[t_k,t_{k+1}]
\right\}
\ge
\mathsf{Pr}\left(\mathcal{E}_k(\varepsilon)\cap \left\{H(\bm{\mathrm{x}}_k,o_k)\le 0\right\}\right)
\ge
1-2\varepsilon.
\]

Since $\varepsilon\le\alpha/(2K)$, applying Boole's inequality
\citep{Galambos1977} over at most $K$ interaction intervals
ensures that the continuous-time chance constraint
\eqref{eq:problem_option_ils_chance} holds.

\subsection{Proof for (b)}

Define
\[
\Gamma_k^{\max}:=
\frac{e^{|\lambda_{o_k}|\delta t_k}-1}{|\lambda_{o_k}|}\,b_{k,o_k}
+
\gamma_{o_k}e^{|\lambda_{o_k}|\delta t_k}W_{\mathsf{max},k}.
\]

Since $\bm{\mathrm{W}}_k\le W_{\mathsf{max},k}$ almost surely,
the bound follows directly from \eqref{eq:sup_deviation_bound},
and the remainder proceeds as in part (a).

\subsection{Proof for (c)}

Suppose $\bm{\mathrm{W}}_k$ is sub-Gaussian around its mean,
i.e., there exists $\sigma_k>0$ such that
\[
\mathsf{Pr}\{\bm{\mathrm{W}}_k-\mathbb{E}[\bm{\mathrm{W}}_k]\ge s\}
\le
\exp\!\left(-\frac{s^2}{2\sigma_k^2}\right).
\]

Choosing
\[
\overline{W}_k(\varepsilon)
=
\mathbb{E}[\bm{\mathrm{W}}_k]
+
\sigma_k\sqrt{2\log\left(\frac{1}{\varepsilon}\right)}
\]
ensures
\[
\mathsf{Pr}\{\bm{\mathrm{W}}_k\le\overline{W}_k(\varepsilon)\}\ge1-\varepsilon.
\]

Substituting this bound into part (a)
yields the desired safety guarantee.

\section{Details of Section \ref{subsection:finite_samples}'s Theoretical Results}
\label{appendix:proofs_safety}

\subsection{Chance constrained optimization with decision-dependent uncertainty}
\label{appendix:chance_constrained_optimization}

Problem~\ref{eq:problem_option_smdp} seeks an option-SMDP policy that maps the state
$\bm{\mathrm{s}}_k$ defined in \eqref{eq:option_aug_state} to an option
$o_k=(\pi_{o_k},\beta_{o_k})$.
For practical implementation, both the high-level selector $\mu$ and the intra-option components
$(\pi_{o_k},\beta_{o_k})$ are parameterized.
Without loss of generality, let $\theta\in\mathbb{R}^{n_\theta}$ denote the parameter vector that specifies the entire option-SMDP policy, including:
(i) the high-level option-selection rule, and  
(ii) the intra-option feedback policy and termination rule.
Once $\theta$ is fixed, the induced policy (deterministic or stochastic) determines the distribution of trajectories $\{\bm{\mathrm{s}}_k\}$.

\paragraph{Option-induced transition under parameterization.}
Conditioned on the interaction state $\bm{\mathrm{s}}_k=(\bm{\mathrm{x}}_k,t_k,k)$, the executed option $o_k$ and its duration $\delta t_k$ are determined by $\theta$ together with exogenous randomness.
To make this dependence explicit, we introduce an auxiliary random element $\xi_k$ that collects all exogenous randomness over the interaction interval $[t_k,t_k+\delta t_k]$, including the disturbance sample path $\{\bm{\mathrm{w}}_t\}_{t\in[t_k,t_k+\delta t_k]}$ and any stochasticity in option termination.
We allow $\xi_k$ to be state-dependent and write
$\xi_k \sim P(\cdot \mid \bm{\mathrm{s}}_k).$
The next interaction state is then written in the form
\begin{equation}
\label{eq:param_option_transition_map}
\bm{\mathrm{s}}_{k+1}=\Phi_{\theta}(\bm{\mathrm{s}}_k,\xi_k),\quad \xi_k \sim P(\cdot \mid \bm{\mathrm{s}}_k),
\end{equation}
where $\Phi_{\theta}$ denotes the \emph{option-induced transition map} under parameter $\theta$.

Let $\bm{\xi}:=\left\{\xi_k\right\}_{k=0}^{N-1}$ denote the exogenous randomness over the entire time horizon.
The joint random trajectory $\bm{\xi}$ follows the conditional probability distribution
\begin{equation}
    \label{eq:bm_xi_cpd}
    \bm{\xi}\sim\mathbb{P}(\cdot|\bm{\mathrm{s}}_0,\theta).
\end{equation}

\paragraph{Definition of $J(\theta,\bm{\mathrm{s}}_0)$.}
Fix $\theta\in\Theta$.
Let $\bm{\xi}$ denote the exogenous randomness over the time horizon, distributed according to \eqref{eq:bm_xi_cpd}.
Given $\bm{\xi}$ and the initial state $\bm{\mathrm{s}}_0$, the interaction-state trajectory $\{\bm{\mathrm{s}}_k\}_{k=0}^{N}$ is generated recursively by \eqref{eq:param_option_transition_map}.
The objective function in Problem~\ref{eq:problem_option_smdp} is redefined by
\begin{equation}
\label{eq:J_theta_formal}
J(\theta,\bm{\mathrm{s}}_0)
:=
\mathbb{E}
\left[
\sum_{k=0}^{N-1}
r\!\left(\bm{\mathrm{s}}_k,o_k\right) \mid \bm{\mathrm{s}}_0,\theta
\right],\quad \bm{\xi}\sim\mathbb{P}(\cdot|\bm{\mathrm{s}}_0,\theta).
\end{equation}

\paragraph{Definition of $h_k$.}
Fix $\theta\in\Theta$ and an initial interaction state $\bm{\mathrm{s}}_0$.
Let $\bm{\xi}$ denote the exogenous randomness over the horizon with
$\bm{\xi}\sim\mathbb{P}(\cdot\mid \bm{\mathrm{s}}_0,\theta)$ as in \eqref{eq:bm_xi_cpd}.
Given $(\bm{\mathrm{s}}_0,\theta,\bm{\xi})$, define the interaction-state trajectory
$\{\bm{\mathrm{s}}_k\}_{k=0}^{N}$ recursively by \eqref{eq:param_option_transition_map}.
Then, for each interaction index $k\in\{1,\ldots,N\}$, define the constraint by
\begin{equation}
\label{eq:hk_def_random}
h_k(\theta,\bm{\mathrm{s}}_0,\bm{\xi})
:=
H\!\left(\bm{\mathrm{s}}_k,o_k\right),
\end{equation}
where $H(\cdot)$ is the safety function in \eqref{eq:problem_option_smdp_safe}.
Accordingly, the chance constraint in \eqref{eq:problem_option_smdp__function_CCOsafe} can be written as
\begin{equation}
\label{eq:hk_chance_form}
\mathsf{Pr}\left\{ h_k(\theta,\bm{\mathrm{s}}_0,\bm{\xi})\le \bm{0}\mid \bm{\mathrm{s}}_0,\theta\right\}\ge 1-\varepsilon,
\qquad \bm{\xi}\sim\mathbb{P}(\cdot\mid \bm{\mathrm{s}}_0,\theta).
\end{equation}

\paragraph{Problem formulation under parameterization.}
Under the parameterization described above, Problem~\ref{eq:problem_option_smdp}
reduces to a finite-dimensional chance-constrained optimization problem over
$\theta\in\Theta$, written as
\begin{align}
\max_{\theta\in\Theta}\ &\
J(\theta,\bm{\mathrm{s}}_0)
:=
\mathbb{E}
\left[
\sum_{k=0}^{N-1}
r\!\left(\bm{\mathrm{s}}_k,o_k\right) \mid \bm{\mathrm{s}}_0,\theta
\right],\quad \bm{\xi}\sim\mathbb{P}(\cdot|\bm{\mathrm{s}}_0,\theta)\tag{$\mathsf{CCO}$}\label{eq:problem_option_smdp_function_CCO}\\
\mathsf{s.t.}\ &\
\mathsf{Pr}\left\{ h_k(\theta,\bm{\mathrm{s}}_0,\bm{\xi})\le \bm{0}\mid \bm{\mathrm{s}}_0,\theta\right\}\ge 1-\varepsilon,\ \forall k=1,...,N,\quad \bm{\xi}\sim\mathbb{P}(\cdot|\bm{\mathrm{s}}_0,\theta).\label{eq:problem_option_smdp__function_CCOsafe}
\end{align}

\subsection{Sample-based approximate problem}

\paragraph{Dataset under decision-dependent randomness.}
Fix $\bm{\mathrm{s}}_0$.
A logged dataset of $M$ trajectories is
\begin{equation}
\label{eq:logged_dataset}
\mathcal{D}_M
:=
\left\{\left(\theta^{(i)},\bm{\xi}^{(i)}\right)\right\}_{i=1}^{M},
\qquad
\bm{\xi}^{(i)}\sim\mathbb{P}(\cdot|\bm{\mathrm{s}}_0,\theta^{(i)}),
\end{equation}
where $\theta^{(i)}$ is the parameter used to generate trajectory $i$, and $\bm{\xi}^{(i)}$ collects all exogenous randomness over the horizon as in \eqref{eq:bm_xi_cpd}.
Given $(\bm{\mathrm{s}}_0,\theta^{(i)},\bm{\xi}^{(i)})$, the corresponding interaction-state trajectory
$\{\bm{\mathrm{s}}_k^{(i)}\}_{k=0}^{N}$ is generated recursively by \eqref{eq:param_option_transition_map}. 
The dataset $\mathcal{D}_M=\{(\theta^{(i)},\bm{\xi}^{(i)})\}_{i=1}^{M}$ in \eqref{eq:logged_dataset}
is used here to model a logged-data setting in which trajectories are collected independently under one or multiple fixed data-generating parameters $\theta^{(i)}$.
The generating parameters may differ across trajectories, but they are treated as fixed for the purpose of the theoretical analysis.
This formulation covers the offline logged-trajectory setting studied in this paper and is sufficient for the subsequent importance-weighted analysis.

\paragraph{Sample-based approximation with important sampling.}
With dataset $\mathcal{D}_M$ defined by \eqref{eq:logged_dataset}, Problem~\ref{eq:problem_option_smdp_function_CCO}
can be approximated by
\begin{align}
\max_{\theta\in\Theta}\ &\
\widetilde{J}(\theta,\bm{\mathrm{s}}_0,\mathcal{D}_M)
:=
\sum_{i=1}^M \omega\left(\bm{\mathrm{s}}_0,\theta,\theta^{(i)}\right) \cdot
\left(
\sum_{k=0}^{N-1}
r\!\left(\bm{\mathrm{s}}_k^{(i)},o_k^{(i)}\right) 
\right) \tag{$\mathsf{SA}$}\label{eq:problem_option_smdp_function_CCO_SA}\\
\mathsf{s.t.}\ &\
\sum_{i=1}^M \omega\left(\bm{\mathrm{s}}_0,\theta,\theta^{(i)}\right)\cdot\mathbb{I}\left\{ h_k\left(\theta,\bm{\mathrm{s}}_0,\bm{\xi}^{(i)}\right)\le \bm{0}\right\}\ge 1-\varepsilon,\ \forall k=1,...,N.\label{eq:problem_option_smdp__function_CCO_SAsafe}
\end{align}
The weight $\omega\!\left(\bm{\mathrm{s}}_0,\theta,\theta^{(i)}\right)$ in Problem
\ref{eq:problem_option_smdp_function_CCO_SA} is an importance weight that compensates for the discrepancy between the trajectory distribution induced by the candidate parameter $\theta$ and that induced by the data-generating parameter $\theta^{(i)}$. 
Intuitively, it reweights each logged trajectory so that the weighted empirical distribution approximates $\mathbb{P}(\cdot|\bm{\mathrm{s}}_0,\theta)$.
More precisely, when the trajectory distribution under $\theta$ is absolutely continuous with respect to the data-generating distribution under $\theta^{(i)}$, the weight $\omega$ corresponds to a likelihood ratio between these two measures.
The explicit form of $\omega$ will be introduced later.
With properly defined importance weights and sufficient coverage of the parameter space by the logged dataset, the sample-based approximation in
Problem
\ref{eq:problem_option_smdp_function_CCO_SA}
can provide a consistent estimator of both the objective and the chance constraints for any $\theta\in\Theta$.
In particular, when the dataset is generated on-policy (i.e., $\theta^{(i)}=\theta$), the weights reduce to uniform averaging.

\subsection{Example of importance weighting}
\label{appendix:importance_weighting}

Let $\widehat p_M(\bm{\xi}\mid \bm{\mathrm{s}}_0,\theta;\mathcal{D}_M)$ denote an estimator of the conditional density
$p(\bm{\xi}\mid \bm{\mathrm{s}}_0,\theta)$ constructed from the logged dataset $\mathcal{D}_M$
(e.g., via kernel conditional density estimation or any other conditional density model).
This notation makes explicit that $\widehat p_M(\cdot\mid \bm{\mathrm{s}}_0,\theta;\mathcal{D}_M)$ depends on the entire dataset and hence, implicitly, on $\{\theta^{(i)}\}_{i=1}^M$.

We define the estimated importance weight associated with sample $i$ for a candidate parameter $\theta$ as
\[
\omega_M\!\left(\bm{\mathrm{s}}_0,\theta,\theta^{(i)};\mathcal{D}_M\right)
:=
\frac{
\widehat p_M\!\left(\bm{\xi}^{(i)}\mid \bm{\mathrm{s}}_0,\theta;\mathcal{D}_M\right)
}{
\frac{1}{M}\sum_{j=1}^M 
\widehat p_M\!\left(\bm{\xi}^{(i)}\mid \bm{\mathrm{s}}_0,\theta^{(j)};\mathcal{D}_M\right)
}.
\]
Define the normalized importance weights by
\[
\bar{\omega}_M\!\left(\bm{\mathrm{s}}_0,\theta,\theta^{(i)};\mathcal{D}_M\right)
:=
\frac{
\omega_M\!\left(\bm{\mathrm{s}}_0,\theta,\theta^{(i)};\mathcal{D}_M\right)
}{
\sum_{j=1}^M
\omega_M\!\left(\bm{\mathrm{s}}_0,\theta,\theta^{(j)};\mathcal{D}_M\right)
}.
\]
For notational simplicity, we write $\omega_M$ in place of $\bar{\omega}_M$ in the remainder. 
The denominator corresponds to an empirical (estimated) mixture approximation of the proposal distribution
\[
q_M(\bm{\xi}\mid \bm{\mathrm{s}}_0)
:=
\frac{1}{M}\sum_{j=1}^M 
p(\bm{\xi}\mid \bm{\mathrm{s}}_0,\theta^{(j)}),
\]
so that $\omega_M$ acts as a density-ratio estimator approximating
\[
\frac{
p(\bm{\xi}^{(i)}\mid \bm{\mathrm{s}}_0,\theta)
}{
q_M(\bm{\xi}^{(i)}\mid \bm{\mathrm{s}}_0)
}.
\]

Let $\varphi(\bm{\xi},\theta)$ be any measurable functional such that
$\sup_{\theta\in\Theta}\sup_{\bm{\xi}}|\varphi(\bm{\xi},\theta)|<\infty$.
The weighted empirical estimator
\[
\sum_{i=1}^M 
\omega_M\!\left(\bm{\mathrm{s}}_0,\theta,\theta^{(i)};\mathcal{D}_M\right)
\varphi(\bm{\xi}^{(i)},\theta)
\]
serves as an approximation of
\[
\mathbb{E}\!\left[\varphi(\bm{\xi},\theta)\mid \bm{\mathrm{s}}_0,\theta\right],
\]
provided that the conditional density estimator is consistent and the mixture proposal has sufficient support over the target distributions.


\paragraph{Objective process.}
For each $\theta \in \Theta$, define the population objective functional
\begin{equation}
\label{eq:J_process_population}
\mathcal{J}(\theta,\bm{\mathrm{s}}_0)
:=
\mathbb{E}
\left[
\sum_{k=0}^{N-1}
r(\bm{\mathrm{s}}_k,o_k)
\;\middle|\;
\bm{\mathrm{s}}_0,\theta
\right].
\end{equation}

Given the logged dataset $\mathcal{D}_M=\{(\theta^{(i)},\bm{\xi}^{(i)})\}_{i=1}^M$,
define the importance-weighted empirical objective process
\begin{equation}
\label{eq:J_process_empirical}
\mathcal{J}_M(\theta,\bm{\mathrm{s}}_0)
:=
\sum_{i=1}^M
\omega\!\left(\bm{\mathrm{s}}_0,\theta,\theta^{(i)}\right)
\sum_{k=0}^{N-1}
r\!\left(\bm{\mathrm{s}}_k^{(i)},o_k^{(i)}\right).
\end{equation}

\paragraph{Constraint process.}
For each interaction index $k \in \{1,\dots,N\}$ and $\theta \in \Theta$,
define the population constraint probability
\begin{equation}
\label{eq:p_process_population}
\mathcal{P}_k(\theta,\bm{\mathrm{s}}_0)
:=
\mathsf{Pr}
\left\{
h_k(\theta,\bm{\mathrm{s}}_0,\bm{\xi}) \le \bm{0}
\;\middle|\;
\bm{\mathrm{s}}_0,\theta
\right\}.
\end{equation}

The corresponding importance-weighted empirical constraint process is defined as
\begin{equation}
\label{eq:p_process_empirical}
\mathcal{P}_{M,k}(\theta,\bm{\mathrm{s}}_0)
:=
\sum_{i=1}^M
\omega\!\left(\bm{\mathrm{s}}_0,\theta,\theta^{(i)}\right)
\mathbb{I}
\left\{
h_k(\theta,\bm{\mathrm{s}}_0,\bm{\xi}^{(i)}) \le \bm{0}
\right\}.
\end{equation}


\paragraph{Residual processes.}
For each $\theta\in\Theta$, define the objective residual process
\begin{equation}
\label{eq:obj_residual_pointwise}
\mathcal{R}^{\mathsf{obj}}_M(\theta,\bm{\mathrm{s}}_0)
:=
\left|
\mathcal{J}_M(\theta,\bm{\mathrm{s}}_0)
-
\mathcal{J}(\theta,\bm{\mathrm{s}}_0)
\right|,
\end{equation}
and its uniform version over the parameter space
\begin{equation}
\label{eq:obj_residual_uniform}
\overline{\mathcal{R}}^{\mathsf{obj}}_M(\bm{\mathrm{s}}_0)
:=
\sup_{\theta\in\Theta}
\mathcal{R}^{\mathsf{obj}}_M(\theta,\bm{\mathrm{s}}_0).
\end{equation}

For each interaction index $k\in\{1,\dots,N\}$ and $\theta\in\Theta$, define the constraint residual process
\begin{equation}
\label{eq:con_residual_pointwise}
\mathcal{R}^{\mathsf{con}}_{M,k}(\theta,\bm{\mathrm{s}}_0)
:=
\left|
\mathcal{P}_{M,k}(\theta,\bm{\mathrm{s}}_0)
-
\mathcal{P}_k(\theta,\bm{\mathrm{s}}_0)
\right|,
\end{equation}
and its uniform version
\begin{equation}
\label{eq:con_residual_uniform}
\overline{\mathcal{R}}^{\mathsf{con}}_{M,k}(\bm{\mathrm{s}}_0)
:=
\sup_{\theta\in\Theta}
\mathcal{R}^{\mathsf{con}}_{M,k}(\theta,\bm{\mathrm{s}}_0).
\end{equation}



\subsection{Uniform exponential deviation for residual processes}
\label{appendix:uniform_exp_dev_residuals}

Throughout, fix $\bm{\mathrm{s}}_0$ and assume that $\Theta$ is compact.

\paragraph{Standing boundedness.}
Assume that the cumulative reward over the horizon is uniformly bounded, i.e., there exists $R_{\max}<\infty$ such that
\[
\sup_{\theta\in\Theta}\sup_{\bm{\xi}}
\left|
\sum_{k=0}^{N-1} r(\bm{\mathrm{s}}_k,o_k)
\right|
\le R_{\max}.
\]
Moreover, assume that the (estimated) importance weights are uniformly bounded, i.e., there exists $\omega_{\max}<\infty$ such that
\[
\sup_{\theta\in\Theta}\max_{i\in\{1,\ldots,M\}}
\omega_M\!\left(\bm{\mathrm{s}}_0,\theta,\theta^{(i)};\mathcal{D}_M\right)
\le \omega_{\max}
\quad \text{a.s.}
\]

\paragraph{Ideal weights induced by the mixture proposal.}
Define the (population) mixture proposal density and the corresponding ideal density-ratio weight as
\[
q_M(\bm{\xi}\mid \bm{\mathrm{s}}_0)
:=
\frac{1}{M}\sum_{j=1}^M 
p(\bm{\xi}\mid \bm{\mathrm{s}}_0,\theta^{(j)}),
\qquad
\omega^\star_M(\bm{\mathrm{s}}_0,\theta,\bm{\xi})
:=
\frac{
p(\bm{\xi}\mid \bm{\mathrm{s}}_0,\theta)
}{
q_M(\bm{\xi}\mid \bm{\mathrm{s}}_0)
}.
\]
For each sample $i$, write $\omega^\star_{M,i}(\bm{\mathrm{s}}_0,\theta):=\omega^\star_M(\bm{\mathrm{s}}_0,\theta,\bm{\xi}^{(i)})$.


In the following, we establish uniform exponential deviation bounds 
for importance weights constructed via 
kernel density estimation (KDE)~\citep{Hyndman1996} 
and generator-based conditional density models~\citep{Zhou2023}. 
Throughout this subsection, the initial state $\bm{\mathrm{s}}_0$ is fixed.
We begin by stating a set of common assumptions used in both cases.

\begin{myassump}[Regularity for uniform density-ratio estimation]
\label{assump:probability_distribution}
Fix $\bm{\mathrm{s}}_0$.
Assume that $\Theta\subset\mathbb{R}^{n_\theta}$ is compact with diameter
$\mathsf{diam}(\Theta):=\sup_{\theta,\theta'\in\Theta}\|\theta-\theta'\|<\infty$.
Assume that the support of $\bm{\xi}$ is contained in a compact set $\Xi\subset\mathbb{R}^{d_{\bm{\xi}}}$.

For every $\theta\in\Theta$, the conditional density $p(\bm{\xi}\mid \bm{\mathrm{s}}_0,\theta)$ exists on $\Xi$ and is uniformly bounded,
i.e., there exist constants $0 < p_{\min} \le p_{\max} < \infty$ such that
\[
p_{\min}
\le
p(\bm{\xi}\mid \bm{\mathrm{s}}_0,\theta)
\le
p_{\max},
\qquad
\forall \bm{\xi}\in\Xi,\ \forall \theta\in\Theta.
\]

Moreover, the true conditional density is Lipschitz continuous in $\theta$ uniformly over $\bm{\xi}\in\Xi$:
there exists $L_\theta<\infty$ such that
\[
\sup_{\bm{\xi}\in\Xi}
\left|
p(\bm{\xi}\mid \bm{\mathrm{s}}_0,\theta)-p(\bm{\xi}\mid \bm{\mathrm{s}}_0,\theta')
\right|
\le
L_\theta\|\theta-\theta'\|,
\qquad
\forall \theta,\theta'\in\Theta.
\]

In addition, the estimator $\widehat p_M(\bm{\xi}\mid \bm{\mathrm{s}}_0,\theta;\mathcal{D}_M)$ is Lipschitz continuous in $\theta$ uniformly over $\bm{\xi}\in\Xi$ almost surely:
there exists $L_{\widehat p,M}<\infty$ such that
\[
\sup_{\bm{\xi}\in\Xi}
\left|
\widehat p_M(\bm{\xi}\mid \bm{\mathrm{s}}_0,\theta;\mathcal{D}_M)
-
\widehat p_M(\bm{\xi}\mid \bm{\mathrm{s}}_0,\theta';\mathcal{D}_M)
\right|
\le
L_{\widehat p,M}\|\theta-\theta'\|,
\qquad
\forall \theta,\theta'\in\Theta.
\]

Define the mixture proposal density
\[
q_M(\bm{\xi}\mid \bm{\mathrm{s}}_0)
:=
\frac{1}{M}\sum_{j=1}^M
p(\bm{\xi}\mid \bm{\mathrm{s}}_0,\theta^{(j)}),
\]
and assume that $q_M$ is uniformly bounded away from zero on $\Xi$, i.e., there exists $q_{\min}>0$ such that
\[
q_M(\bm{\xi}\mid \bm{\mathrm{s}}_0)
\ge
q_{\min},
\qquad
\forall \bm{\xi}\in\Xi.
\]
\end{myassump}
For KDE-based estimators, the estimator-side continuity is satisfied by choosing a continuous kernel and a continuous parameterization with respect to $\theta$, so that $\widehat p_M(\bm{\xi}\mid \bm{\mathrm{s}}_0,\theta;\mathcal{D}_M)$ inherits continuity in $\theta$.

\paragraph{Explanation and practical relevance.}
Assumption~\ref{assump:probability_distribution} provides regularity
conditions ensuring stable density-ratio estimation and
importance-weighted evaluation of trajectory-level quantities.
The conditional density $p(\bm{\xi}\mid \bm{\mathrm{s}}_0,\theta)$
is defined at the trajectory level.
Under the option-induced transition model $\bm{\mathrm{s}}_{k+1}=\Phi_{\theta}(\bm{\mathrm{s}}_k,\xi_k),$
the trajectory distribution is generated by composing the
state-transition map $\Phi_\theta$ with the distribution of the
exogenous randomness.
When the policy parameterization and the transition map depend continuously on $\theta$, as is the case for most parametric controllers, neural-network policies, and smooth feedback policies, the resulting trajectory distribution varies continuously with respect to $\theta$.
Under mild regularity conditions on the dynamics and policy parameterization, this leads to Lipschitz-type stability of the trajectory density with respect to $\theta$ on compact parameter sets.
Although trajectory distributions may exhibit sharp changes when policies switch discrete modes, such behavior typically arises from discontinuous parameterizations.
In our formulation, $\theta$ parameterizes the option-SMDP policy
through measurable maps for option selection, intra-option control,
and termination.
When these maps are continuous in $\theta$, the induced trajectory distribution varies smoothly with the parameter except possibly on sets of negligible probability. Therefore, the Lipschitz condition captures a stability property of the trajectory distribution with respect to policy parameters.
The compactness of the trajectory support $\Xi$ reflects the finite-horizon structure of the problem.
The variable $\bm{\xi}$ collects exogenous randomness over a finite number of interaction intervals, including disturbance realizations and option durations.
In practice, disturbances are typically modeled using Gaussian, sub-Gaussian, or bounded distributions, and option durations are bounded by the interaction-time constraints.
Consequently, the effective trajectory space can be treated as compact without loss of practical generality.
Finally, the lower bound on the mixture proposal density $q_M$ acts as an \emph{overlap condition}.
It ensures that the trajectory distribution induced by any candidate parameter $\theta$ is sufficiently covered by the logged dataset through the mixture proposal distribution. 
Such overlap conditions are common in importance-weighted estimation and off-policy evaluation, where extremely small denominator densities would otherwise lead to unstable importance weights.
Taken together, these conditions are satisfied in a wide range of reinforcement learning settings involving continuous policy parameterizations, smooth closed-loop dynamics, and disturbance models with well-behaved densities.
They ensure that small perturbations of the policy parameter induce controlled perturbations of the trajectory distribution and that the logged data provide sufficient coverage for importance-weighted estimation. These properties are precisely what allow us to derive the uniform exponential deviation bounds used in the subsequent analysis.

We first consider uniform exponential deviation of KDE-based importance weights. 
We have the following assumption on KDE-based estimator.
\begin{myassump}
    \label{assump:KDE_estimator}
    Let $\widehat p_M(\bm{\xi}\mid \bm{\mathrm{s}}_0,\theta;\mathcal{D}_M)$ be a conditional KDE estimator with a bounded kernel $K$ satisfying $\|K\|_\infty\le K_{\max}$ and bandwidth $h_M$. 
    Assume that $p(\bm{\xi}\mid \bm{\mathrm{s}}_0,\theta)$ is $\beta$-H\"older smooth in $\bm{\xi}$ uniformly over $\theta\in\Theta$, and that the kernel $K$ is of order $L\ge \beta$.
\end{myassump}
By the $\beta$-H\"older smooth assumption, the KDE bias is of order $h_M^{\beta}$ \citep{Hyndman1996}, satisfying
\begin{equation}
\label{eq:KDE_estimator_bias}
\left|
\mathbb{E}\widehat p_M(\bm{\xi}\mid \bm{\mathrm{s}}_0,\theta;\mathcal{D}_M)
-
p(\bm{\xi}\mid \bm{\mathrm{s}}_0,\theta)
\right|
\le C_1 h_M^\beta.
\end{equation}
Define $M_{\mathrm{eff}}:=M h_M^{d_{\bm{\xi}}}$, where $d_{\bm{\xi}}$ is the dimension of $\bm{\xi}$. 

\begin{mytheo}[Uniform exponential deviation of KDE-based importance weights]
\label{theo:unif_exp_dev_weights_kde}
suppose that Assumptions~\ref{assump:probability_distribution}~and~\ref{assump:KDE_estimator} hold.
Then there exist constants $C_1,C_2,C_3>0$ (independent of $M$ and $h_M^{\beta}$) such that for any $\eta>0$,
\begin{equation}
\label{eq:kde_unif_density_exp_explicit_no_cover}
\begin{aligned}
\mathsf{Pr}\!\left\{
\sup_{\theta\in\Theta}\sup_{\bm{\xi}\in\Xi}
\left|
\widehat p_M(\bm{\xi}\mid \bm{\mathrm{s}}_0,\theta;\mathcal{D}_M)
-
p(\bm{\xi}\mid \bm{\mathrm{s}}_0,\theta)
\right|
\ge
C_1 h_M^{\beta} + \eta
\right\}
\le \\
\left(
1+\frac{C_3 (L_\theta+L_{\widehat p,M})\,\mathsf{diam}(\Theta)}{\eta}
\right)^{n_\theta}
\exp\!\left(
- C_2\, M_{\mathrm{eff}}\, \eta^2
\right).
\end{aligned}
\end{equation}

Define the estimated and ideal weights as
\[
\omega_M\!\left(\bm{\mathrm{s}}_0,\theta,\theta^{(i)};\mathcal{D}_M\right)
=
\frac{
\widehat p_M\!\left(\bm{\xi}^{(i)}\mid \bm{\mathrm{s}}_0,\theta;\mathcal{D}_M\right)
}{
\frac{1}{M}\sum_{j=1}^M 
\widehat p_M\!\left(\bm{\xi}^{(i)}\mid \bm{\mathrm{s}}_0,\theta^{(j)};\mathcal{D}_M\right)
},
\qquad
\omega^\star_{M,i}(\bm{\mathrm{s}}_0,\theta)
=
\frac{
p(\bm{\xi}^{(i)}\mid \bm{\mathrm{s}}_0,\theta)
}{
q_M(\bm{\xi}^{(i)}\mid \bm{\mathrm{s}}_0)
}.
\]
If $\eta>0$ satisfies $C_1 h_M^{\beta}+\eta\le q_{\min}/4$, then
\begin{equation}
\label{eq:kde_unif_weight_exp_explicit_no_cover}
\begin{aligned}
\mathsf{Pr}\!\left\{
\sup_{\theta\in\Theta}\max_{i\in\{1,\ldots,M\}}
\left|
\omega_M\!\left(\bm{\mathrm{s}}_0,\theta,\theta^{(i)};\mathcal{D}_M\right)
-
\omega^\star_{M,i}(\bm{\mathrm{s}}_0,\theta)
\right|
\ge
\frac{8}{q_{\min}}\,
\Big(C_1 h_M^{\beta}+\eta\Big)
\right\}
\le \\
\left(
1+\frac{C_3 L_\theta\,\mathsf{diam}(\Theta)}{\eta}
\right)^{n_\theta}
\exp\!\left(
- C_2\, M_{\mathrm{eff}}\, \eta^2
\right).
\end{aligned}
\end{equation}
\end{mytheo}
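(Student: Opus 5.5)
The plan has two stages: first a uniform concentration bound for the density estimate over $(\theta,\bm{\xi})$, then a transfer of that bound to the ratio defining the importance weights.

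\textbf{Stage one: uniform density bound.} Split the error into bias and fluctuation:
\[
\widehat p_M-p=(\widehat p_M-\mathbb{E}\widehat p_M)+(\mathbb{E}\widehat p_M-p).
\]
The bias term is at most $C_1h_M^\beta$ uniformly, by \eqref{eq:KDE_estimator_bias} and the H\"older and kernel-order conditions of Assumption~\ref{assump:KDE_estimator}.

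For the fluctuation, fix a pair $(\theta,\bm{\xi})$. The conditional KDE is a normalized sum of $M$ independent terms. Each term is bounded by $K_{\max}h_M^{-d_{\bm{\xi}}}$ times a constant, using the compact support $\Xi$ and $p_{\min}>0$ to control any normalization. Hoeffding's (or Bernstein's) inequality then gives a pointwise tail bound $2\exp(-C_2 M h_M^{d_{\bm{\xi}}}\eta^2)=2\exp(-C_2M_{\mathrm{eff}}\eta^2)$, absorbing constants.

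To make this uniform in $\theta$, take an $\eta/(C_3(L_\theta+L_{\widehat p,M}))$-net of $\Theta$. Its cardinality is at most $(1+C_3(L_\theta+L_{\widehat p,M})\mathsf{diam}(\Theta)/\eta)^{n_\theta}$. The Lipschitz conditions on $p$ and $\widehat p_M$ in Assumption~\ref{assump:probability_distribution} move the deviation from any $\theta$ to its nearest net point at cost $O(\eta)$; this is absorbed by rescaling $\eta$ and the constants.

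Uniformity in $\bm{\xi}$ is the delicate step, since the displayed bound has no covering factor in $\bm{\xi}$. I would first try a McDiarmid bounded-differences argument applied directly to $\sup_{\bm{\xi}}|\widehat p_M-\mathbb{E}\widehat p_M|$ at each net point. Replacing one sample changes this supremum by at most $2K_{\max}/(Mh_M^{d_{\bm{\xi}}})$, which yields the $\exp(-C_2M_{\mathrm{eff}}\eta^2)$ rate. The expected supremum is then bounded by a standard VC/bracketing argument for kernel classes and folded into the $C_1h_M^\beta$ term, or into the constants. This is where I expect the main technical friction, and I would state it as relying on standard uniform KDE results. A union bound over the net then gives \eqref{eq:kde_unif_density_exp_explicit_no_cover}.

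\textbf{Stage two: transfer to the weights.} Work on the complement event $\mathcal{G}$ on which
\[
\sup_{\theta,\bm{\xi}}|\widehat p_M-p|<\Delta:=C_1h_M^\beta+\eta.
\]
On $\mathcal{G}$ the estimated mixture satisfies $|\widehat q_M-q_M|\le\Delta$ at every $\bm{\xi}^{(i)}$, since it averages estimation errors at the points $\theta^{(j)}$. Because $\Delta\le q_{\min}/4$, we get $\widehat q_M\ge 3q_{\min}/4$.

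Now write
\[
\frac{\widehat p}{\widehat q}-\frac{p}{q}=\frac{\widehat p-p}{\widehat q}+p\,\frac{q-\widehat q}{q\,\widehat q}.
\]
The first term is at most $\Delta/(3q_{\min}/4)$. For the second, bound the ratio $p/q$ using $p\le p_{\max}$ together with $q\ge q_{\min}$, or more carefully by using $\widehat p\le p+\Delta$. After simplification, both terms together are at most $8\Delta/q_{\min}$ once constants are normalized (possibly absorbing $p_{\max}/q_{\min}$ into $C_1,\eta$).

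If the normalized weights $\bar\omega_M$ are intended, an analogous ratio perturbation applies to the normalizing sum. Hence the event in \eqref{eq:kde_unif_weight_exp_explicit_no_cover} is contained in $\mathcal{G}^c$, and its probability is bounded by the stage-one bound.

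The stated probability bound involves only $L_\theta$ rather than $L_\theta+L_{\widehat p,M}$. I would handle this by noting that the covering argument for the weights can be carried out on the population side, with the estimator's Lipschitz constant absorbed into $C_3$. Failing that, I would accept the stage-one factor, which dominates the stated one in the intended regime.
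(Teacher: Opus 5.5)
Your two-stage plan is the paper's own argument. Both use a bias/fluctuation split with $|\mathbb{E}\widehat p_M-p|\le C_1h_M^\beta$, a pointwise Bernstein bound at rate $M_{\mathrm{eff}}$, and a $\theta$-net of radius proportional to $\eta/(L_\theta+L_{\widehat p,M})$ with a union bound. Both then finish with a ratio perturbation on the good event using $\widehat q_M\ge q_M-\Delta$. The problems sit exactly where you try to go beyond what the paper writes.

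\emph{Uniformity in $\bm{\xi}$.} Plain McDiarmid cannot deliver the rate $M_{\mathrm{eff}}$. With your bounded differences $c_i=2K_{\max}/(Mh_M^{d_{\bm{\xi}}})$ one gets $\sum_i c_i^2=4K_{\max}^2/(Mh_M^{2d_{\bm{\xi}}})$, hence a tail $\exp(-t^2Mh_M^{2d_{\bm{\xi}}}/(2K_{\max}^2))$, i.e.\ exponent $M_{\mathrm{eff}}h_M^{d_{\bm{\xi}}}t^2$. Hoeffding applied pointwise loses the same factor. The rate $M_{\mathrm{eff}}$ comes only from the variance of a kernel summand being $O(p_{\max}h_M^{-d_{\bm{\xi}}})$ rather than its squared range $O(h_M^{-2d_{\bm{\xi}}})$. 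Pointwise that is Bernstein; for a supremum it requires a Talagrand/Bousquet-type inequality.

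Even then, the threshold contains $\mathbb{E}\sup_{\bm{\xi}}|\widehat p_M-\mathbb{E}\widehat p_M|$, which is of order $\sqrt{\log(1/h_M)/M_{\mathrm{eff}}}$ (the sharp sup-norm rate for kernel estimators). That term depends on $M$ and $h_M$. It is dominated by $C_1h_M^\beta$ only under a bandwidth condition such as $M_{\mathrm{eff}}h_M^{2\beta}\gtrsim\log(1/h_M)$. So it cannot be folded into $C_1h_M^\beta$, nor into constants that must be independent of $M$ and $h_M$. What your route actually gives is the first display with an extra additive term of that order in the threshold, or (via a net in $\bm{\xi}$) an extra $\bm{\xi}$-covering factor.

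The paper does not close this step either. It proves the tail bound at a fixed pair $(\theta,\bm{\xi})$ and then, at each net point, uses it for $\sup_{\bm{\xi}\in\Xi}$ without justification. You were right to flag the step, but your patch does not repair it.

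\emph{The $L_\theta$-only factor in the weight bound.} The weight $\omega_M$ contains $\widehat p_M(\bm{\xi}^{(i)}\mid\bm{\mathrm{s}}_0,\theta;\mathcal{D}_M)$ as a function of $\theta$. Any net over $\theta$ must therefore control the estimator's modulus in $\theta$, and working ``on the population side'' does not avoid this. Nor can $L_{\widehat p,M}$ be absorbed into $C_3$, since it depends on $M$ (and, for a kernel-smoothed estimator, on $h_M$). Falling back on the stage-one factor proves a strictly weaker statement, because $(1+C_3(L_\theta+L_{\widehat p,M})\mathsf{diam}(\Theta)/\eta)^{n_\theta}$ is the larger right-hand side. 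This again mirrors the paper, whose proof ends with ``combining with Step 1'' and so also yields only the $(L_\theta+L_{\widehat p,M})$ factor. The second display as written is established by neither argument.

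Two smaller points. First, your decomposition gives $|\omega_M-\omega^\star_{M,i}|\le\frac{4}{3q_{\min}}(1+p_{\max}/q_{\min})\Delta$, which is at most $8\Delta/q_{\min}$ only if $p_{\max}\le 5q_{\min}$. This is repairable because only the existence of constants is claimed. Work on the good event at level $C_1h_M^\beta+\eta/c$ with $c:=\max\{1,(1+p_{\max}/q_{\min})/6\}$, replace $C_1$ by $cC_1$, and absorb $c$ into $C_2,C_3$. The paper's ``absorbing constants into $8/q_{\min}$'' glosses over the same point. Second, drop the aside on normalized weights. $\bar\omega_M$ is of order $1/M$ while $\omega^\star_{M,i}$ is of order one, so the comparison only makes sense for the unnormalized ratio displayed in the statement.
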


\begin{proof}[Proof of Theorem~\ref{theo:unif_exp_dev_weights_kde}]
Fix $\bm{\mathrm{s}}_0$ throughout.

\paragraph{Step 1: Uniform deviation of the conditional KDE estimator.}

We first establish uniform exponential deviation of
\[
\widehat p_M(\bm{\xi}\mid \bm{\mathrm{s}}_0,\theta;\mathcal{D}_M)
\]
over $\theta\in\Theta$ and $\bm{\xi}\in\Xi$.

\medskip
\noindent
\textbf{Pointwise deviation.}
For any fixed $\theta$ and $\bm{\xi}$, decompose
\[
\widehat p_M - p
=
(\widehat p_M - \mathbb{E}\widehat p_M)
+
(\mathbb{E}\widehat p_M - p).
\]

By $\beta$-H\"older smoothness in $\bm{\xi}$ in Assumption \ref{assump:KDE_estimator},
the bias satisfies \citep{Hyndman1996}
\[
\left|
\mathbb{E}\widehat p_M - p
\right|
\le C_1 h_M^\beta,
\]
which is a short version of \eqref{eq:KDE_estimator_bias}.


Since the kernel $K$ is bounded, i.e., $\|K\|_\infty \le K_{\max}$, the kernel term in the density estimator satisfies
\[
\left|
\frac{1}{h_M^{d_{\bm{\xi}}}}
K\!\left(\frac{\bm{\xi}-\bm{\xi}^{(i)}}{h_M}\right)
\right|
\le
\frac{K_{\max}}{h_M^{d_{\bm{\xi}}}}.
\]
Therefore each centered summand in the empirical average defining $\widehat p_M(\bm{\xi}\mid\bm{\mathrm{s}}_0,\theta)$ is uniformly bounded by a constant of order $h_M^{-d_{\bm{\xi}}}$.
Applying Bernstein’s inequality for bounded independent random variables (see, e.g., \citep{Vershynin2018}, Theorem~2.8.3), we obtain that for any $\eta>0$,
\[
\mathsf{Pr}
\left(
\left|
\widehat p_M(\bm{\xi}\mid\bm{\mathrm{s}}_0,\theta)
-
\mathbb{E}\widehat p_M(\bm{\xi}\mid\bm{\mathrm{s}}_0,\theta)
\right|
\ge \eta
\right)
\le
\exp\!\left(
- C_2\, M h_M^{d_{\bm{\xi}}}\,\eta^2
\right),
\]
for some constant $C_2>0$ independent of $M$ and $h_M$.

Thus for fixed $(\theta,\bm{\xi})$,
\[
\mathsf{Pr}
\left(
\left|
\widehat p_M - p
\right|
\ge C_1 h_M^\beta + \eta
\right)
\le
\exp(-C_2 M_{\mathrm{eff}} \eta^2).
\]

\medskip
\noindent
\textbf{Uniformization over $\theta$.}
Since $\Theta\subset\mathbb{R}^{n_\theta}$ is compact,
for any $\delta>0$, there exists a finite $\delta$-net
$\mathcal{N}_\delta\subset\Theta$
with cardinality bounded by
\[
|\mathcal{N}_\delta|
\le
\left(
1+\frac{2\mathsf{diam}(\Theta)}{\delta}
\right)^{n_\theta}.
\]

Let $\theta\in\Theta$ and let $\bar{\theta}\in\mathcal{N}_\delta$ be a nearest net point.
Then
\begin{align*}
&\sup_{\bm{\xi}\in\Xi}
\left|
\widehat p_M(\bm{\xi}\mid \bm{\mathrm{s}}_0,\theta;\mathcal{D}_M)
-
p(\bm{\xi}\mid \bm{\mathrm{s}}_0,\theta)
\right| \\
\le\;&
\sup_{\bm{\xi}\in\Xi}
\left|
\widehat p_M(\bm{\xi}\mid \bm{\mathrm{s}}_0,\theta;\mathcal{D}_M)
-
\widehat p_M(\bm{\xi}\mid \bm{\mathrm{s}}_0,\bar{\theta};\mathcal{D}_M)
\right| \\
&+
\sup_{\bm{\xi}\in\Xi}
\left|
\widehat p_M(\bm{\xi}\mid \bm{\mathrm{s}}_0,\bar{\theta};\mathcal{D}_M)
-
p(\bm{\xi}\mid \bm{\mathrm{s}}_0,\bar{\theta})
\right| \\
&+
\sup_{\bm{\xi}\in\Xi}
\left|
p(\bm{\xi}\mid \bm{\mathrm{s}}_0,\bar{\theta})
-
p(\bm{\xi}\mid \bm{\mathrm{s}}_0,\theta)
\right|.
\end{align*}

By Assumption \ref{assump:probability_distribution},
\[
\sup_{\bm{\xi}\in\Xi}
\left|
\widehat p_M(\bm{\xi}\mid \bm{\mathrm{s}}_0,\theta;\mathcal{D}_M)
-
\widehat p_M(\bm{\xi}\mid \bm{\mathrm{s}}_0,\bar{\theta};\mathcal{D}_M)
\right|
\le
L_{\widehat p,M}\delta,
\]
and
\[
\sup_{\bm{\xi}\in\Xi}
\left|
p(\bm{\xi}\mid \bm{\mathrm{s}}_0,\theta)
-
p(\bm{\xi}\mid \bm{\mathrm{s}}_0,\bar{\theta})
\right|
\le
L_\theta\delta.
\]

Therefore,
\[
\sup_{\theta\in\Theta}\sup_{\bm{\xi}\in\Xi}
\left|
\widehat p_M-p
\right|
\le
\max_{\bar{\theta}\in\mathcal{N}_\delta}
\sup_{\bm{\xi}\in\Xi}
\left|
\widehat p_M(\bm{\xi}\mid \bm{\mathrm{s}}_0,\bar{\theta};\mathcal{D}_M)
-
p(\bm{\xi}\mid \bm{\mathrm{s}}_0,\bar{\theta})
\right|
+
(L_{\widehat p,M}+L_\theta)\delta.
\]

Choosing
\[
\delta:=\frac{\eta}{2(L_{\widehat p,M}+L_\theta)},
\]
uniform deviation over $\Theta$ reduces to deviation over the finite net $\mathcal{N}_\delta$ with margin $\eta/2$.
Applying the union bound over $\bar{\theta}\in\mathcal{N}_\delta$ yields
\[
\mathsf{Pr}
\left(
\sup_{\theta\in\Theta}
\sup_{\bm{\xi}\in\Xi}
\left|
\widehat p_M - p
\right|
\ge C_1 h_M^\beta + \eta
\right)
\le
\left(
1+\frac{4(L_{\widehat p,M}+L_\theta)\mathsf{diam}(\Theta)}{\eta}
\right)^{n_\theta}
\exp(-C_2 M_{\mathrm{eff}} \eta^2).
\]

This proves \eqref{eq:kde_unif_density_exp_explicit_no_cover}.

\paragraph{Step 2: Transfer to importance-weight deviation.}

Define
\[
\widehat q_M(\bm{\xi})
=
\frac{1}{M}
\sum_{j=1}^M
\widehat p_M(\bm{\xi}\mid \bm{\mathrm{s}}_0,\theta^{(j)}),
\quad
q_M(\bm{\xi})
=
\frac{1}{M}
\sum_{j=1}^M
p(\bm{\xi}\mid \bm{\mathrm{s}}_0,\theta^{(j)}).
\]

On the event that
\[
\sup_{\theta,\bm{\xi}}
\left|
\widehat p_M - p
\right|
\le
C_1 h_M^\beta + \eta,
\]
we have
\[
\sup_{\bm{\xi}}
\left|
\widehat q_M(\bm{\xi}) - q_M(\bm{\xi})
\right|
\le
C_1 h_M^\beta + \eta.
\]

If $C_1 h_M^\beta + \eta \le q_{\min}/4$,
then
\[
\widehat q_M(\bm{\xi})
\ge
q_{\min}/2.
\]

Now write the weight difference as
\[
\frac{\widehat p}{\widehat q}
-
\frac{p}{q}
=
\frac{q(\widehat p - p) + p(q - \widehat q)}
{\widehat q\, q}.
\]

Using $q\ge q_{\min}$,
$\widehat q \ge q_{\min}/2$,
and $p\le p_{\max}$,
we obtain

\[
\left|
\omega_M - \omega^\star_M
\right|
\le
\left(
\frac{2}{q_{\min}}
+
\frac{2p_{\max}}{q_{\min}^2}
\right)
(C_1 h_M^\beta + \eta).
\]

Absorbing constants into
$8/q_{\min}$
gives \eqref{eq:kde_unif_weight_exp_explicit_no_cover}.

Combining with Step 1 completes the proof.
\end{proof}

\begin{mytheo}[Uniform exponential deviation of generator-based importance weights]
\label{theo:unif_exp_dev_weights_gen}
Fix $\bm{\mathrm{s}}_0$ and assume the same boundedness and lower-bound conditions on
$p(\bm{\xi}\mid \bm{\mathrm{s}}_0,\theta)$ and $q_M(\bm{\xi}\mid \bm{\mathrm{s}}_0)$
as in Theorem~\ref{theo:unif_exp_dev_weights_kde}.
Let $\widehat p_M(\bm{\xi}\mid \bm{\mathrm{s}}_0,\theta;\mathcal{D}_M)$ be produced by a conditional density model
with a realizable density family $\mathcal{P}_{\mathsf{gen}}=\{p_{\vartheta}(\bm{\xi}\mid \bm{\mathrm{s}}_0,\theta)\}$,
trained from $\mathcal{D}_M$.

Assume the model class admits a uniform covering-number bound:
there exist constants $V_{\mathsf{gen}}\ge 1$ and $C_{\mathsf{gen}}\ge 1$ such that for any $\eta>0$,
\[
\log N\!\left(\eta,\mathcal{P}_{\mathsf{gen}},\|\cdot\|_\infty\right)
\le
V_{\mathsf{gen}}\log\!\left(\frac{C_{\mathsf{gen}}}{\eta}\right).
\]
Assume further that the training procedure yields a uniform estimation error bound in $\|\cdot\|_\infty$,
so that for universal constants $C_3,C_4>0$ and any $\eta>0$,
\begin{equation}
\label{eq:gen_unif_density_exp_explicit}
\mathsf{Pr}\!\left\{
\sup_{\theta\in\Theta}\sup_{\bm{\xi}}
\left|
\widehat p_M(\bm{\xi}\mid \bm{\mathrm{s}}_0,\theta;\mathcal{D}_M)
-
p(\bm{\xi}\mid \bm{\mathrm{s}}_0,\theta)
\right|
\ge \eta
\right\}
\le
\left(\frac{C_{\mathsf{gen}}}{\eta}\right)^{V_{\mathsf{gen}}}
\exp\!\left(-C_4\,M\,\eta^2\right).
\end{equation}

Define $\omega_M(\cdot)$ and $\omega^\star_{M,i}(\cdot)$ as in Theorem~\ref{theo:unif_exp_dev_weights_kde}.
Then, for any $\eta>0$ such that $\eta\le q_{\min}/4$,
\begin{equation}
\label{eq:gen_unif_weight_exp_explicit}
\mathsf{Pr}\!\left\{
\sup_{\theta\in\Theta}\max_{i\in\{1,\ldots,M\}}
\left|
\omega_M\!\left(\bm{\mathrm{s}}_0,\theta,\theta^{(i)};\mathcal{D}_M\right)
-
\omega^\star_{M,i}(\bm{\mathrm{s}}_0,\theta)
\right|
\ge
\frac{8\eta}{q_{\min}}
\right\}
\le
\left(\frac{C_{\mathsf{gen}}}{\eta}\right)^{V_{\mathsf{gen}}}
\exp\!\left(-C_4\,M\,\eta^2\right).
\end{equation}
\end{mytheo}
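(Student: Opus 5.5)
The plan is to take the uniform density deviation bound \eqref{eq:gen_unif_density_exp_explicit} as given and to repeat Step~2 of the proof of Theorem~\ref{theo:unif_exp_dev_weights_kde}, with the generic radius $\eta$ in place of $C_1h_M^\beta+\eta$. No separate net argument over $\Theta$ is needed. The assumed bound already controls the supremum over both $\theta$ and $\bm{\xi}$, and the covering-number entropy of $\mathcal{P}_{\mathsf{gen}}$ is what produces the prefactor $(C_{\mathsf{gen}}/\eta)^{V_{\mathsf{gen}}}$. The probability statement therefore follows directly once I show that the good event for the density implies the good event for the weights.

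Concretely, let $\mathcal{G}_\eta$ denote the event
\[
\sup_{\theta\in\Theta}\sup_{\bm{\xi}}\bigl|\widehat p_M(\bm{\xi}\mid\bm{\mathrm{s}}_0,\theta;\mathcal{D}_M)-p(\bm{\xi}\mid\bm{\mathrm{s}}_0,\theta)\bigr|<\eta ,
\]
whose complement has probability at most $(C_{\mathsf{gen}}/\eta)^{V_{\mathsf{gen}}}\exp(-C_4M\eta^2)$.

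\textbf{Step 1: the proposal density.} Set $\widehat q_M(\bm{\xi}):=\frac1M\sum_j\widehat p_M(\bm{\xi}\mid\bm{\mathrm{s}}_0,\theta^{(j)})$. Each $\theta^{(j)}$ lies in $\Theta$, and averaging preserves sup-norm bounds. Hence on $\mathcal{G}_\eta$ we have $\sup_{\bm{\xi}}|\widehat q_M-q_M|\le\eta$.

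\textbf{Step 2: the lower bound on $\widehat q_M$.} Since $q_M\ge q_{\min}$ and $\eta\le q_{\min}/4$, we get $\widehat q_M\ge 3q_{\min}/4\ge q_{\min}/2$.

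\textbf{Step 3: the weight difference.} At $\bm{\xi}=\bm{\xi}^{(i)}$, use the identity
\[
\frac{\widehat p}{\widehat q}-\frac{p}{q}=\frac{q(\widehat p-p)+p(q-\widehat q)}{\widehat q\,q}.
\]
It gives
\[
|\omega_M-\omega^\star_{M,i}|\le\frac{|\widehat p-p|}{\widehat q}+\frac{p\,|q-\widehat q|}{\widehat q\,q}\le\Bigl(\frac{2}{q_{\min}}+\frac{2p_{\max}}{q_{\min}^2}\Bigr)\eta .
\]
This holds uniformly in $\theta$ and $i$. Taking complements then yields \eqref{eq:gen_unif_weight_exp_explicit}.

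\textbf{The main obstacle.} The stated radius $8\eta/q_{\min}$ only dominates the bound from Step 3 if $2p_{\max}/q_{\min}^2\le 6/q_{\min}$, that is, $p_{\max}\le 3q_{\min}$, which need not hold. I would first try to tighten the second term. One option uses the ratio form $p/q=\omega^\star$, together with $\omega^\star\le p_{\max}/q_{\min}$, to show that $p/q\le$ some small constant; this is plausible only after normalization. If that fails, I would do what the paper does in Theorem~\ref{theo:unif_exp_dev_weights_kde}: absorb the constant $p_{\max}/q_{\min}$ into the factor $8/q_{\min}$, which amounts to reading the constant as generic.

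A secondary check concerns the paper's convention of writing $\omega_M$ for the normalized weights $\bar\omega_M$. If the normalized version is intended, one more step is needed. Both sums $\sum_j\omega_M$ and $\sum_j\omega^\star_{M,j}$ are of order $M$, and their difference is controlled by $M$ times the per-sample error. The same quotient-perturbation identity then gives a bound of the same order divided by $M$. So the normalized case changes only the constant, not the exponential rate.
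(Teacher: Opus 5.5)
Your route is the paper's route: treat the assumed uniform bound \eqref{eq:gen_unif_density_exp_explicit} as a good event, lower-bound the estimated mixture $\widehat q_M$, apply the quotient-perturbation identity at $\bm{\xi}^{(i)}$, and take complements, with no net over $\Theta$. The only difference is that the paper uses the sharper bound $\widehat q_M\ge q_{\min}-\eta\ge \tfrac34 q_{\min}$. This gives
\[
|\omega_M-\omega^\star_{M,i}|\le \Bigl(\frac{4}{3q_{\min}}+\frac{4p_{\max}}{3q_{\min}^2}\Bigr)\eta .
\]

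The obstacle you flag is real, and the paper does not resolve it. It asserts that this quantity is at most $8\eta/q_{\min}$ ``using $p_{\max}\ge q_{\min}$ without loss of generality.'' But $p_{\max}\ge q_{\min}$ always holds, since $q_{\min}\le q_M\le p_{\max}$, and it pushes in the wrong direction. What is actually needed is $p_{\max}\le 5q_{\min}$, or $p_{\max}\le 3q_{\min}$ with your factor $1/2$.

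Your first idea for tightening the second term cannot succeed, because the dependence on $p_{\max}/q_{\min}$ is intrinsic. Take a sample point $\bm{\xi}^{(i)}$ with $q_M(\bm{\xi}^{(i)}\mid\bm{\mathrm{s}}_0)=q_{\min}$ and $p(\bm{\xi}^{(i)}\mid\bm{\mathrm{s}}_0,\theta)=p_{\max}$. Suppose the estimator is exact for $\theta$ but underestimates every $p(\bm{\xi}^{(i)}\mid\bm{\mathrm{s}}_0,\theta^{(j)})$ by some $\eta'<\eta$. Then the uniform density error is below $\eta$, yet
\[
|\omega_M-\omega^\star_{M,i}|=\frac{p_{\max}\,\eta'}{q_{\min}(q_{\min}-\eta')},
\]
which exceeds $8\eta/q_{\min}$ once, for example, $p_{\max}>16q_{\min}$ and $\eta'>\eta/2$.

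So the event inclusion behind both your argument and the paper's is false with the constant $8/q_{\min}$ in general. The correct conclusion has radius $\tfrac{4}{3q_{\min}}\bigl(1+p_{\max}/q_{\min}\bigr)\eta$. Equivalently, you can keep the radius $8\eta/q_{\min}$ but evaluate the right-hand side at $\eta$ scaled down by a factor depending on $p_{\max}/q_{\min}$. This is your fallback of reading the constant as generic, and it is the right fix.

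Your normalization check is unnecessary. The statement, via Theorem~\ref{theo:unif_exp_dev_weights_kde}, defines $\omega_M$ as the unnormalized ratio $\widehat p_M/\widehat q_M$, and the paper's proof works with that ratio. Comparing normalized weights $\bar\omega_M$, which are of order $1/M$, against the unnormalized $\omega^\star_{M,i}$ would not give a small difference anyway.
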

\begin{proof}
Fix $\bm{\mathrm{s}}_0$ and let $\eta>0$ satisfy $\eta\le q_{\min}/4$.
Define the uniform density-deviation event
\[
\mathcal{E}_\eta
:=
\left\{
\sup_{\theta\in\Theta}\sup_{\bm{\xi}}
\left|
\widehat p_M(\bm{\xi}\mid \bm{\mathrm{s}}_0,\theta;\mathcal{D}_M)
-
p(\bm{\xi}\mid \bm{\mathrm{s}}_0,\theta)
\right|
\le \eta
\right\}.
\]

\paragraph{Step 1: Lower bound on the estimated mixture denominator.}
For any $\bm{\xi}$, define the estimated mixture
\[
\widehat q_M(\bm{\xi}\mid \bm{\mathrm{s}}_0;\mathcal{D}_M)
:=
\frac{1}{M}\sum_{j=1}^M
\widehat p_M(\bm{\xi}\mid \bm{\mathrm{s}}_0,\theta^{(j)};\mathcal{D}_M).
\]
On $\mathcal{E}_\eta$, for all $\bm{\xi}$,
\begin{align*}
\widehat q_M(\bm{\xi}\mid \bm{\mathrm{s}}_0;\mathcal{D}_M)
&\ge
\frac{1}{M}\sum_{j=1}^M
\Big(p(\bm{\xi}\mid \bm{\mathrm{s}}_0,\theta^{(j)})-\eta\Big)
=
q_M(\bm{\xi}\mid \bm{\mathrm{s}}_0)-\eta
\ge
q_{\min}-\eta
\ge
\frac{3}{4}q_{\min}.
\end{align*}
Similarly, still on $\mathcal{E}_\eta$,
\[
\widehat q_M(\bm{\xi}\mid \bm{\mathrm{s}}_0;\mathcal{D}_M)
\le
q_M(\bm{\xi}\mid \bm{\mathrm{s}}_0)+\eta.
\]

\paragraph{Step 2: Ratio perturbation bound (deterministic).}
Fix $\theta\in\Theta$ and a sample index $i\in\{1,\ldots,M\}$, and write
\[
\widehat p_{\theta,i}
:=
\widehat p_M(\bm{\xi}^{(i)}\mid \bm{\mathrm{s}}_0,\theta;\mathcal{D}_M),\quad
p_{\theta,i}
:=
p(\bm{\xi}^{(i)}\mid \bm{\mathrm{s}}_0,\theta),
\]
\[
\widehat q_i
:=
\widehat q_M(\bm{\xi}^{(i)}\mid \bm{\mathrm{s}}_0;\mathcal{D}_M),\quad
q_i
:=
q_M(\bm{\xi}^{(i)}\mid \bm{\mathrm{s}}_0).
\]
Then
\[
\omega_M(\bm{\mathrm{s}}_0,\theta,\theta^{(i)};\mathcal{D}_M)=\frac{\widehat p_{\theta,i}}{\widehat q_i},
\qquad
\omega^\star_{M,i}(\bm{\mathrm{s}}_0,\theta)=\frac{p_{\theta,i}}{q_i}.
\]
On $\mathcal{E}_\eta$, we have $|\widehat p_{\theta,i}-p_{\theta,i}|\le \eta$ and
$|\widehat q_i-q_i|\le \eta$, and also $\widehat q_i\ge \frac{3}{4}q_{\min}$ and $q_i\ge q_{\min}$.
Therefore, on $\mathcal{E}_\eta$,
\begin{align*}
\left|\omega_M(\bm{\mathrm{s}}_0,\theta,\theta^{(i)};\mathcal{D}_M)-\omega^\star_{M,i}(\bm{\mathrm{s}}_0,\theta)\right|
&=
\left|\frac{\widehat p_{\theta,i}}{\widehat q_i}-\frac{p_{\theta,i}}{q_i}\right|\\
&\le
\left|\frac{\widehat p_{\theta,i}-p_{\theta,i}}{\widehat q_i}\right|
+
|p_{\theta,i}|\left|\frac{1}{\widehat q_i}-\frac{1}{q_i}\right|\\
&=
\frac{|\widehat p_{\theta,i}-p_{\theta,i}|}{\widehat q_i}
+
|p_{\theta,i}|\frac{|\widehat q_i-q_i|}{\widehat q_i\,q_i}\\
&\le
\frac{\eta}{\frac{3}{4}q_{\min}}
+
p_{\max}\frac{\eta}{(\frac{3}{4}q_{\min})q_{\min}}\\
&=
\frac{4\eta}{3q_{\min}}
+
\frac{4p_{\max}}{3q_{\min}^2}\eta
\;\le\;
\frac{8\eta}{q_{\min}},
\end{align*}
where the last inequality uses $p_{\max}\ge q_{\min}$ without loss of generality (otherwise the bound is even smaller)
and we keep the final constant in the clean form used in \eqref{eq:gen_unif_weight_exp_explicit}.

Since this bound holds simultaneously for all $\theta\in\Theta$ and all $i\in\{1,\ldots,M\}$ on $\mathcal{E}_\eta$, we obtain
\[
\mathcal{E}_\eta
\subseteq
\left\{
\sup_{\theta\in\Theta}\max_{i\in\{1,\ldots,M\}}
\left|
\omega_M(\bm{\mathrm{s}}_0,\theta,\theta^{(i)};\mathcal{D}_M)
-
\omega^\star_{M,i}(\bm{\mathrm{s}}_0,\theta)
\right|
\le
\frac{8\eta}{q_{\min}}
\right\}.
\]

\paragraph{Step 3: Take probabilities.}
Therefore,
\begin{align*}
\mathsf{Pr}\!\left\{
\sup_{\theta\in\Theta}\max_{i\in\{1,\ldots,M\}}
\left|
\omega_M(\bm{\mathrm{s}}_0,\theta,\theta^{(i)};\mathcal{D}_M)
-
\omega^\star_{M,i}(\bm{\mathrm{s}}_0,\theta)
\right|
\ge
\frac{8\eta}{q_{\min}}
\right\}
&\le
\mathsf{Pr}(\mathcal{E}_\eta^c)\\
&\le
\left(\frac{C_{\mathsf{gen}}}{\eta}\right)^{V_{\mathsf{gen}}}
\exp\!\left(-C_4\,M\,\eta^2\right),
\end{align*}
where the last line is exactly the assumed uniform estimation bound \eqref{eq:gen_unif_density_exp_explicit}.
This proves \eqref{eq:gen_unif_weight_exp_explicit}.
\end{proof}

\subsection{Consistency of estimation}
\label{appendix:consistency_estimation}

\paragraph{Ideal-weight empirical processes.}
Recall the mixture proposal density
$q_M(\bm{\xi}\mid \bm{\mathrm{s}}_0)=\frac{1}{M}\sum_{j=1}^M p(\bm{\xi}\mid \bm{\mathrm{s}}_0,\theta^{(j)})$
and define the ideal density-ratio weight
\[
\omega^\star_M(\bm{\mathrm{s}}_0,\theta,\bm{\xi})
:=
\frac{
p(\bm{\xi}\mid \bm{\mathrm{s}}_0,\theta)
}{
q_M(\bm{\xi}\mid \bm{\mathrm{s}}_0)
},
\qquad
\omega^\star_{M,i}(\bm{\mathrm{s}}_0,\theta)
:=
\omega^\star_M(\bm{\mathrm{s}}_0,\theta,\bm{\xi}^{(i)}).
\]
Define the ideal-weight empirical objective and constraint processes by
\begin{equation}
\label{eq:J_process_ideal}
\mathcal{J}^\star_M(\theta,\bm{\mathrm{s}}_0)
:=
\sum_{i=1}^M
\omega^\star_{M,i}(\bm{\mathrm{s}}_0,\theta)
\sum_{k=0}^{N-1}
r\!\left(\bm{\mathrm{s}}_k^{(i)},o_k^{(i)}\right),
\end{equation}
and, for each $k\in\{1,\ldots,N\}$,
\begin{equation}
\label{eq:P_process_ideal}
\mathcal{P}^\star_{M,k}(\theta,\bm{\mathrm{s}}_0)
:=
\sum_{i=1}^M
\omega^\star_{M,i}(\bm{\mathrm{s}}_0,\theta)\,
\mathbb{I}\!\left\{
h_k\!\left(\theta,\bm{\mathrm{s}}_0,\bm{\xi}^{(i)}\right)\le \bm{0}
\right\}.
\end{equation}

\paragraph{Standing regularity.}
Assume Assumption~\ref{assump:probability_distribution} holds.
In addition, assume:
\begin{itemize}
\item[(i)] (\emph{Bounded horizon reward}) there exists $R_{\max}<\infty$ such that
\[
\left|
\sum_{k=0}^{N-1} r(\bm{\mathrm{s}}_k,o_k)
\right|
\le R_{\max}
\quad \text{for all trajectories.}
\]
\item[(ii)] (\emph{Lipschitz in parameter}) there exists $L_\theta<\infty$ such that
\[
\sup_{\bm{\xi}\in\Xi}
\left|
p(\bm{\xi}\mid \bm{\mathrm{s}}_0,\theta_1)
-
p(\bm{\xi}\mid \bm{\mathrm{s}}_0,\theta_2)
\right|
\le
L_\theta \|\theta_1-\theta_2\|,
\qquad \forall \theta_1,\theta_2\in\Theta.
\]
\end{itemize}
Under Assumption~\ref{assump:probability_distribution}, we have
$\sup_{\theta\in\Theta}\sup_{\bm{\xi}}\omega^\star_M(\bm{\mathrm{s}}_0,\theta,\bm{\xi})\le p_{\max}/q_{\min}$.

\begin{mylemma}[Uniform exponential deviation for ideal-weight processes]
\label{lem:unif_exp_dev_ideal_process}
Suppose Assumption~\ref{assump:probability_distribution} and the standing regularity hold.
Define
\[
B_\omega:=\frac{p_{\max}}{q_{\min}},
\qquad
B_J:=B_\omega R_{\max},
\qquad
B_P:=B_\omega.
\]
Then there exist constants $C_0,C_1>0$ (independent of $M$) such that for any $\eta>0$,
\begin{equation}
\label{eq:ideal_obj_unif_exp}
\mathsf{Pr}\!\left\{
\sup_{\theta\in\Theta}
\left|
\mathcal{J}^\star_M(\theta,\bm{\mathrm{s}}_0)-\mathcal{J}(\theta,\bm{\mathrm{s}}_0)
\right|
\ge \eta
\right\}
\le
\left(
1+\frac{C_0 L_\theta\,\mathsf{diam}(\Theta)}{\eta}
\right)^{n_\theta}
\exp\!\left(
- C_1\, \frac{M\,\eta^2}{B_J^2}
\right).
\end{equation}
Moreover, for each $k\in\{1,\ldots,N\}$ and any $\eta>0$,
\begin{equation}
\label{eq:ideal_con_unif_exp}
\mathsf{Pr}\!\left\{
\sup_{\theta\in\Theta}
\left|
\mathcal{P}^\star_{M,k}(\theta,\bm{\mathrm{s}}_0)-\mathcal{P}_k(\theta,\bm{\mathrm{s}}_0)
\right|
\ge \eta
\right\}
\le
\left(
1+\frac{C_0 L_\theta\,\mathsf{diam}(\Theta)}{\eta}
\right)^{n_\theta}
\exp\!\left(
- C_1\, \frac{M\,\eta^2}{B_P^2}
\right).
\end{equation}
\end{mylemma}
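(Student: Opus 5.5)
The plan is a standard two-stage argument: first a pointwise concentration bound at a fixed $\theta$, using the unbiasedness built into the mixture proposal, and then a $\delta$-net argument over the compact set $\Theta$, using the Lipschitz condition (ii). I treat the objective and the constraint at the same time by working with a generic bounded trajectory functional $\varphi(\bm{\xi})$. For the objective, $\varphi=\sum_{k=0}^{N-1} r(\bm{\mathrm{s}}_k,o_k)$ with $|\varphi|\le R_{\max}$. For the constraint, $\varphi=\mathbb{I}\{h_k\le \bm{0}\}$ with $|\varphi|\le 1$.

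\textbf{Convention.} I read the ideal-weight processes \eqref{eq:J_process_ideal}--\eqref{eq:P_process_ideal} with the normalization $\frac1M\sum_i$, which is the normalization under which $\omega^\star_M$ is a density ratio against $q_M$. I also regard $\bm{\xi}^{(i)}$ as encoding the realized trajectory, so that $\varphi(\bm{\xi}^{(i)})$ depends on $\theta$ only through the weight.

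\textbf{Step 1: pointwise bound.} The key identity is the mixture unbiasedness. Since $\bm{\xi}^{(i)}\sim p(\cdot\mid\bm{\mathrm{s}}_0,\theta^{(i)})$ independently,
\[
\mathbb{E}\Big[\tfrac1M\textstyle\sum_{i=1}^M \omega^\star_{M,i}\,\varphi(\bm{\xi}^{(i)})\Big]
=\tfrac1M\textstyle\sum_{i=1}^M\int_\Xi p(\bm{\xi}\mid\bm{\mathrm{s}}_0,\theta^{(i)})\,\frac{p(\bm{\xi}\mid\bm{\mathrm{s}}_0,\theta)}{q_M(\bm{\xi}\mid\bm{\mathrm{s}}_0)}\,\varphi(\bm{\xi})\,\mathsf{d}\bm{\xi}
=\int_\Xi p(\bm{\xi}\mid\bm{\mathrm{s}}_0,\theta)\varphi(\bm{\xi})\,\mathsf{d}\bm{\xi}.
\]
The right-hand side equals $\mathcal{J}(\theta,\bm{\mathrm{s}}_0)$ or $\mathcal{P}_k(\theta,\bm{\mathrm{s}}_0)$, respectively.

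By Assumption~\ref{assump:probability_distribution}, each summand is bounded in absolute value by $B_\omega\|\varphi\|_\infty$, which is $B_J$ or $B_P$. The summands are independent but not identically distributed, and Hoeffding's inequality applies to that case. It gives, for fixed $\theta$,
\[
\mathsf{Pr}\{|\mathcal{J}^\star_M-\mathcal{J}|\ge\eta/2\}\le 2\exp(-M\eta^2/(8B_J^2)),
\]
and similarly with $B_P$ for the constraint.

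\textbf{Step 2: uniformization over $\Theta$.} I bound the $\theta$-dependence of each side.

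\emph{Empirical side.} By (ii) and $q_M\ge q_{\min}$,
\[
|\omega^\star_{M,i}(\theta)-\omega^\star_{M,i}(\theta')|\le L_\theta\|\theta-\theta'\|/q_{\min}.
\]
Hence the empirical process is $(L_\theta\|\varphi\|_\infty/q_{\min})$-Lipschitz in $\theta$.

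\emph{Population side.} The population functional $\int p_\theta\varphi$ is $(L_\theta\|\varphi\|_\infty\,\mathrm{vol}(\Xi))$-Lipschitz, and $\mathrm{vol}(\Xi)<\infty$ because $\Xi$ is compact.

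\emph{Net argument.} Take a $\delta$-net $\mathcal{N}_\delta$ of $\Theta$ with $|\mathcal{N}_\delta|\le(1+2\,\mathsf{diam}(\Theta)/\delta)^{n_\theta}$. Choose $\delta=c\,\eta/(L_\theta\|\varphi\|_\infty)$ so that the two Lipschitz errors together are at most $\eta/2$. Then the supremum deviation reduces to the maximum over the net at level $\eta/2$. The union bound gives the factor $(1+C_0L_\theta\mathsf{diam}(\Theta)/\eta)^{n_\theta}$.

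\textbf{Constants.} The factor $\|\varphi\|_\infty$ and the constants $1/q_{\min}$ and $\mathrm{vol}(\Xi)$ are absorbed into $C_0$. The leading factor $2$ from Hoeffding is absorbed by slightly shrinking $C_1$, or by enlarging the covering factor, since that factor is at least $1$. This yields \eqref{eq:ideal_obj_unif_exp} and \eqref{eq:ideal_con_unif_exp}.

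\textbf{Main obstacle.} The delicate step is the $\theta$-dependence of $\varphi$ itself, especially the indicator $\mathbb{I}\{h_k(\theta,\bm{\mathrm{s}}_0,\bm{\xi}^{(i)})\le \bm{0}\}$. If $\bm{\xi}$ is only exogenous noise and the trajectory is regenerated through $\Phi_\theta$, the indicator can jump in $\theta$, and the net argument above fails.

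My first approach is the one above: treat $\bm{\xi}$ as the logged trajectory realization, so that all $\theta$-dependence sits in the density ratio. This is consistent with how the importance weights are defined.

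If that reading is not acceptable, the fallback is a VC/bracketing argument for the class $\{\bm{\xi}\mapsto\omega^\star_M(\theta,\bm{\xi})\mathbb{I}\{h_k(\theta,\cdot)\le \bm{0}\}\}$. This requires an extra finite-complexity assumption on the sublevel sets of $h_k$, and the resulting covering factor would have a different, though still polynomial, form.
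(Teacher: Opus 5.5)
Your proposal is correct and is essentially the paper's proof: pointwise Hoeffding on the independent bounded summands, then a $\delta$-net of $\Theta$ using the $L_\theta/q_{\min}$-Lipschitz dependence of the ideal weights and a union bound; your $\frac{1}{M}$ normalization, population-side Lipschitz bound via $\mathrm{vol}(\Xi)$, and reading of $\bm{\xi}$ as the logged trajectory are exactly what the paper's net step tacitly relies on, since as written it gives a discretization error $M\eta/2$, never bounds $\mathcal{J}(\theta,\bm{\mathrm{s}}_0)-\mathcal{J}(\bar\theta,\bm{\mathrm{s}}_0)$, and treats $\mathbb{I}\{h_k\le\bm{0}\}$ as independent of $\theta$. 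The one loose point, shared with the paper's ``absorbing constants'', is Hoeffding's factor $2$: neither shrinking $C_1$ nor ``the covering factor is at least $1$'' absorbs it uniformly in $\eta$, but the deviation never exceeds $2B_J$ (resp.\ $2B_P$), so only $\eta\le 2B_J$ (resp.\ $2B_P$) matters, and there a larger $C_0$ lets the covering factor absorb the $2$ provided $L_\theta\,\mathsf{diam}(\Theta)>0$ (if it is $0$, the stated bound genuinely fails, e.g.\ for $M=1$).
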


\begin{proof}
Fix $\bm{\mathrm{s}}_0$.

\paragraph{Step 1: Pointwise exponential deviation (fixed $\theta$).}
For fixed $\theta\in\Theta$, define the bounded random variables
\[
Z_i^{\mathsf{obj}}(\theta)
:=
\omega^\star_{M,i}(\bm{\mathrm{s}}_0,\theta)
\sum_{k=0}^{N-1} r\!\left(\bm{\mathrm{s}}_k^{(i)},o_k^{(i)}\right),
\qquad
Z_{i,k}^{\mathsf{con}}(\theta)
:=
\omega^\star_{M,i}(\bm{\mathrm{s}}_0,\theta)\,
\mathbb{I}\!\left\{
h_k\!\left(\theta,\bm{\mathrm{s}}_0,\bm{\xi}^{(i)}\right)\le \bm{0}
\right\}.
\]
By Assumption~\ref{assump:probability_distribution} and boundedness,
$|Z_i^{\mathsf{obj}}(\theta)|\le B_J$ and $|Z_{i,k}^{\mathsf{con}}(\theta)|\le B_P$.
Moreover, $\mathcal{J}^\star_M(\theta,\bm{\mathrm{s}}_0)=\sum_{i=1}^M Z_i^{\mathsf{obj}}(\theta)$
and $\mathcal{P}^\star_{M,k}(\theta,\bm{\mathrm{s}}_0)=\sum_{i=1}^M Z_{i,k}^{\mathsf{con}}(\theta)$.
Conditioned on the fixed data-generating parameters $\{\theta^{(i)}\}_{i=1}^M$, the logged trajectories $\{\bm{\xi}^{(i)}\}_{i=1}^M$ are assumed to be independent.
Thus, $\{Z_i^{\mathsf{obj}}(\theta)\}_{i=1}^M$ and $\{Z_{i,k}^{\mathsf{con}}(\theta)\}_{i=1}^M$ are independent.
Applying Hoeffding's inequality yields, for any $\eta>0$,
\begin{align}
& \mathsf{Pr}\!\left\{
\left|\mathcal{J}^\star_M(\theta,\bm{\mathrm{s}}_0)-\mathcal{J}(\theta,\bm{\mathrm{s}}_0)\right|\ge \eta
\right\}
\le
2\exp\!\left(-c\,\frac{M\eta^2}{B_J^2}\right),
\qquad \\
& \mathsf{Pr}\!\left\{
\left|\mathcal{P}^\star_{M,k}(\theta,\bm{\mathrm{s}}_0)-\mathcal{P}_k(\theta,\bm{\mathrm{s}}_0)\right|\ge \eta
\right\}
\le
2\exp\!\left(-c\,\frac{M\eta^2}{B_P^2}\right),
\end{align}
for a universal constant $c>0$.

\paragraph{Step 2: Uniformization over $\Theta$.}
Let $\delta:=\eta/(2L_\theta B_\omega)$ and take a $\delta$-net $\mathcal{N}_\delta$ of $\Theta$ with
$|\mathcal{N}_\delta|\le \left(1+\mathsf{diam}(\Theta)/\delta\right)^{n_\theta}$.
By the Lipschitz condition on $p(\cdot\mid \bm{\mathrm{s}}_0,\theta)$ and the lower bound on $q_M$,
the ideal weight is Lipschitz in $\theta$ uniformly over $\bm{\xi}$:
\[
\sup_{\bm{\xi}\in\Xi}
\left|
\omega^\star_M(\bm{\mathrm{s}}_0,\theta_1,\bm{\xi})
-
\omega^\star_M(\bm{\mathrm{s}}_0,\theta_2,\bm{\xi})
\right|
\le
\frac{L_\theta}{q_{\min}}\|\theta_1-\theta_2\|
=
L_\theta B_\omega \|\theta_1-\theta_2\|.
\]
Therefore, for any $\theta\in\Theta$ and its nearest net point $\bar\theta\in\mathcal{N}_\delta$,
\begin{align}
\left|\mathcal{J}^\star_M(\theta,\bm{\mathrm{s}}_0)-\mathcal{J}^\star_M(\bar\theta,\bm{\mathrm{s}}_0)\right|
&\le
\sum_{i=1}^M
\left|\omega^\star_{M,i}(\bm{\mathrm{s}}_0,\theta)-\omega^\star_{M,i}(\bm{\mathrm{s}}_0,\bar\theta)\right|
\left|\sum_{k=0}^{N-1} r(\bm{\mathrm{s}}_k^{(i)},o_k^{(i)})\right| \\
& \le
M\cdot (L_\theta B_\omega \delta)\cdot R_{\max}
=
\frac{M\eta}{2}.
\end{align}
The same argument gives
$\left|\mathcal{P}^\star_{M,k}(\theta,\bm{\mathrm{s}}_0)-\mathcal{P}^\star_{M,k}(\bar\theta,\bm{\mathrm{s}}_0)\right|
\le M\cdot (L_\theta B_\omega \delta)\cdot 1 = \frac{M\eta}{2}$.
Applying the triangle inequality and a union bound over $\bar\theta\in\mathcal{N}_\delta$ yields
\eqref{eq:ideal_obj_unif_exp} and \eqref{eq:ideal_con_unif_exp} after absorbing constants into $C_0,C_1$.
\end{proof}

\begin{mytheo}[Uniform consistency of objective and constraint residuals]
\label{theo:consistency_residuals_two_methods}
Fix $\bm{\mathrm{s}}_0$ and suppose Assumption~\ref{assump:probability_distribution} and the standing regularity hold.
Let the residual processes be defined in \eqref{eq:obj_residual_pointwise}--\eqref{eq:con_residual_uniform}.

\paragraph{(a) KDE-based importance weights.}
Then there exist constants $C>0$ such that for any $\eta_{\mathsf{w}}>0$ satisfying
$C_1 h_M^\beta+\eta_{\mathsf{w}}\le q_{\min}/4$ and any $\eta_{\mathsf{s}}>0$,
with probability at least
\[
1
-
\left(
1+\frac{C_3 L_\theta\,\mathsf{diam}(\Theta)}{\eta_{\mathsf{w}}}
\right)^{n_\theta}
\exp\!\left(
- C_2\, M h_M^{d_{\bm{\xi}}}\, \eta_{\mathsf{w}}^2
\right)
-
\left(
1+\frac{C_0 L_\theta\,\mathsf{diam}(\Theta)}{\eta_{\mathsf{s}}}
\right)^{n_\theta}
\exp\!\left(
- C\, M\, \eta_{\mathsf{s}}^2
\right),
\]
the uniform residuals satisfy
\begin{align}
\label{eq:kde_consistency_obj}
\overline{\mathcal{R}}^{\mathsf{obj}}_M(\bm{\mathrm{s}}_0)
&\le
M R_{\max}\cdot
\frac{8}{q_{\min}}\Big(C_1 h_M^\beta+\eta_{\mathsf{w}}\Big)
+\eta_{\mathsf{s}},
\\
\label{eq:kde_consistency_con}
\overline{\mathcal{R}}^{\mathsf{con}}_{M,k}(\bm{\mathrm{s}}_0)
&\le
M\cdot
\frac{8}{q_{\min}}\Big(C_1 h_M^\beta+\eta_{\mathsf{w}}\Big)
+\eta_{\mathsf{s}},
\qquad \forall k\in\{1,\ldots,N\}.
\end{align}
In particular, if $h_M\to 0$ and $M h_M^{d_{\bm{\xi}}}\to\infty$, and if $\eta_{\mathsf{w}},\eta_{\mathsf{s}}\to 0$
with $M h_M^{d_{\bm{\xi}}}\eta_{\mathsf{w}}^2\to\infty$ and $M\eta_{\mathsf{s}}^2\to\infty$,
then $\overline{\mathcal{R}}^{\mathsf{obj}}_M(\bm{\mathrm{s}}_0)\to 0$ and
$\overline{\mathcal{R}}^{\mathsf{con}}_{M,k}(\bm{\mathrm{s}}_0)\to 0$ in probability.

\paragraph{(b) Generator-based importance weights.}
Then there exists $C>0$ such that for any $\eta_{\mathsf{w}}>0$ satisfying $\eta_{\mathsf{w}}\le q_{\min}/4$
and any $\eta_{\mathsf{s}}>0$, with probability at least
\[
1
-
\left(\frac{C_{\mathsf{gen}}}{\eta_{\mathsf{w}}}\right)^{V_{\mathsf{gen}}}
\exp\!\left(-C_4\,M\,\eta_{\mathsf{w}}^2\right)
-
\left(
1+\frac{C_0 L_\theta\,\mathsf{diam}(\Theta)}{\eta_{\mathsf{s}}}
\right)^{n_\theta}
\exp\!\left(
- C\, M\, \eta_{\mathsf{s}}^2
\right),
\]
the uniform residuals satisfy
\begin{align}
\label{eq:gen_consistency_obj}
\overline{\mathcal{R}}^{\mathsf{obj}}_M(\bm{\mathrm{s}}_0)
&\le
M R_{\max}\cdot
\frac{8}{q_{\min}}\eta_{\mathsf{w}}
+\eta_{\mathsf{s}},
\\
\label{eq:gen_consistency_con}
\overline{\mathcal{R}}^{\mathsf{con}}_{M,k}(\bm{\mathrm{s}}_0)
&\le
M\cdot
\frac{8}{q_{\min}}\eta_{\mathsf{w}}
+\eta_{\mathsf{s}},
\qquad \forall k\in\{1,\ldots,N\}.
\end{align}
In particular, if $\eta_{\mathsf{w}},\eta_{\mathsf{s}}\to 0$ with $M\eta_{\mathsf{w}}^2\to\infty$ and $M\eta_{\mathsf{s}}^2\to\infty$,
then $\overline{\mathcal{R}}^{\mathsf{obj}}_M(\bm{\mathrm{s}}_0)\to 0$ and
$\overline{\mathcal{R}}^{\mathsf{con}}_{M,k}(\bm{\mathrm{s}}_0)\to 0$ in probability.
\end{mytheo}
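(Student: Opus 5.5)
The plan is to split each residual into a weight-estimation part and a sampling part, using the ideal-weight processes $\mathcal{J}^\star_M$ and $\mathcal{P}^\star_{M,k}$ as intermediaries. For every $\theta\in\Theta$ the triangle inequality gives
\[
\mathcal{R}^{\mathsf{obj}}_M(\theta,\bm{\mathrm{s}}_0)
\le
\left|\mathcal{J}_M(\theta,\bm{\mathrm{s}}_0)-\mathcal{J}^\star_M(\theta,\bm{\mathrm{s}}_0)\right|
+
\left|\mathcal{J}^\star_M(\theta,\bm{\mathrm{s}}_0)-\mathcal{J}(\theta,\bm{\mathrm{s}}_0)\right|,
\]
and the constraint residual $\mathcal{R}^{\mathsf{con}}_{M,k}$ splits the same way with $\mathcal{P}_{M,k}$, $\mathcal{P}^\star_{M,k}$ and $\mathcal{P}_k$. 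Taking $\sup_{\theta}$ of each term bounds the uniform residual by the sum of two uniform suprema. Each supremum will be controlled on its own high-probability event, and a union bound will then give the stated probability.

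For the first (weight) term, I write $\mathcal{J}_M-\mathcal{J}^\star_M=\sum_{i=1}^M(\omega_{M,i}-\omega^\star_{M,i})\sum_k r(\bm{\mathrm{s}}^{(i)}_k,o^{(i)}_k)$. The return factor is bounded by $R_{\max}$ under the standing regularity, so
\[
\sup_\theta|\mathcal{J}_M-\mathcal{J}^\star_M|\le M R_{\max}\sup_\theta\max_i|\omega_{M,i}-\omega^\star_{M,i}|.
\]
The constraint term has the same form with $R_{\max}$ replaced by $1$, since the indicator is at most one. In case (a), Theorem~\ref{theo:unif_exp_dev_weights_kde} with $\eta=\eta_{\mathsf{w}}$ and $C_1h_M^\beta+\eta_{\mathsf{w}}\le q_{\min}/4$ shows that, outside an event of probability at most $(1+C_3L_\theta\mathsf{diam}(\Theta)/\eta_{\mathsf{w}})^{n_\theta}\exp(-C_2Mh_M^{d_{\bm{\xi}}}\eta_{\mathsf{w}}^2)$, the maximal weight deviation is at most $\frac{8}{q_{\min}}(C_1h_M^\beta+\eta_{\mathsf{w}})$. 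In case (b), Theorem~\ref{theo:unif_exp_dev_weights_gen} gives the bound $8\eta_{\mathsf{w}}/q_{\min}$ with failure probability $(C_{\mathsf{gen}}/\eta_{\mathsf{w}})^{V_{\mathsf{gen}}}\exp(-C_4M\eta_{\mathsf{w}}^2)$. I will use the weights as they appear in those theorems, without the self-normalization step, or I will note that normalization only perturbs them by a comparable amount.

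For the second (sampling) term, Lemma~\ref{lem:unif_exp_dev_ideal_process} with $\eta=\eta_{\mathsf{s}}$ bounds both $\sup_\theta|\mathcal{J}^\star_M-\mathcal{J}|$ and $\sup_\theta|\mathcal{P}^\star_{M,k}-\mathcal{P}_k|$ by $\eta_{\mathsf{s}}$, with failure probability $(1+C_0L_\theta\mathsf{diam}(\Theta)/\eta_{\mathsf{s}})^{n_\theta}\exp(-C_1M\eta_{\mathsf{s}}^2/B^2)$. I absorb $B_J^2$ or $B_P^2$ into a single constant $C$ by taking the smaller exponent rate. If the bound must hold for all $k$ simultaneously, I either absorb a factor $N\le K$ into the constants or apply the lemma to each $k$ and take a union bound. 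Intersecting the weight event with the sampling event gives \eqref{eq:kde_consistency_obj}--\eqref{eq:gen_consistency_con} with the stated probabilities.

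For the convergence-in-probability claims, I choose sequences under which both failure probabilities vanish. The polynomial covering factors grow only like $\eta^{-n_\theta}$, which is dominated by the exponential decay once $Mh_M^{d_{\bm{\xi}}}\eta_{\mathsf{w}}^2\to\infty$ and $M\eta_{\mathsf{s}}^2\to\infty$; this requires $\log(1/\eta)$ to be $o(M\eta^2)$, which holds under those rates.

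I expect this final step to be the main obstacle. The residual bounds carry a factor $M$ in front of the weight error, so convergence to zero actually requires $M(h_M^\beta+\eta_{\mathsf{w}})\to0$, not just $h_M,\eta_{\mathsf{w}}\to0$. I would first try to remove that factor by working with normalized weights of order $1/M$, which turns $\sum_i$ into an average with per-sample deviation $O(1/M)$ times the relative weight error. Failing that, I would state explicitly the additional rate condition on $(h_M,\eta_{\mathsf{w}})$ needed for the limits to hold.
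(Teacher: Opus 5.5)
Your decomposition through $\mathcal{J}^\star_M$ and $\mathcal{P}^\star_{M,k}$ is exactly Steps~1--3 of the paper's proof. So are the deterministic bound $MR_{\max}\sup_{\theta}\max_i|\omega_{M,i}-\omega^\star_{M,i}|$, the calls to Theorems~\ref{theo:unif_exp_dev_weights_kde}, \ref{theo:unif_exp_dev_weights_gen} and Lemma~\ref{lem:unif_exp_dev_ideal_process}, and the union bound. The obstacle you flag is genuine, and the paper does not overcome it either. Its Step~4 only asserts that $(h_M,\eta_{\mathsf{w}},\eta_{\mathsf{s}})$ can be chosen so that both the right-hand sides and the failure probabilities vanish. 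With the displayed bounds this is impossible. Eventually $h_M,\eta_{\mathsf{w}}\le 1$, so $Mh_M^{d_{\bm{\xi}}}\eta_{\mathsf{w}}^2\to\infty$ gives $M\eta_{\mathsf{w}}\ge M\eta_{\mathsf{w}}^2\to\infty$, and the term $M\frac{8}{q_{\min}}\eta_{\mathsf{w}}$ diverges; case (b) is the same. The same computation rules out your fallback of adding a rate condition. Requiring $M(h_M^\beta+\eta_{\mathsf{w}})\to 0$ forces $Mh_M^{d_{\bm{\xi}}}\eta_{\mathsf{w}}^2\le M\eta_{\mathsf{w}}^2\to 0$, and then the weight-failure bound tends to at least $1$. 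No choice of rates lets the stated bounds yield consistency.

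The factor $M$ is a normalization artifact, and only your first remedy works, provided it is applied consistently to both processes. Since $\omega^\star_{M,i}=p/q_M$ is $O(1)$, we have $\mathbb{E}\big[\sum_{i=1}^M\omega^\star_{M,i}\varphi(\bm{\xi}^{(i)})\big]=M\,\mathbb{E}[\varphi(\bm{\xi})\mid\bm{\mathrm{s}}_0,\theta]$. So the plain sum $\mathcal{J}^\star_M$ concentrates around $M\mathcal{J}$, not $\mathcal{J}$. The Hoeffding step of Lemma~\ref{lem:unif_exp_dev_ideal_process} is really about the average, which is why its net step produces $M\eta/2$.

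Your aside that self-normalization ``only perturbs the weights by a comparable amount'' is wrong. Dividing by $\sum_j\omega_{M,j}\approx M$ rescales the weights by $M$, so self-normalized weights are not close to $\omega^\star_{M,i}$, and Theorem~\ref{theo:unif_exp_dev_weights_kde} says nothing about them.

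The repair is to define $\mathcal{J}_M$, $\mathcal{J}^\star_M$, $\mathcal{P}_{M,k}$, $\mathcal{P}^\star_{M,k}$ as $\frac{1}{M}\sum_{i=1}^M$ with the unnormalized weights of Theorems~\ref{theo:unif_exp_dev_weights_kde}--\ref{theo:unif_exp_dev_weights_gen}. Your Step~2 then gives $R_{\max}\sup_\theta\max_i|\omega_{M,i}-\omega^\star_{M,i}|$ with no factor $M$. This yields the displayed bounds with $M$ deleted, and hence the stated ones a fortiori. If self-normalized weights are required, add a ratio step bounding $|\frac{1}{M}\sum_j\omega_{M,j}-1|$ on the same events.

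Finally, $\log(1/\eta)=o(M\eta^2)$ does not follow from $M\eta^2\to\infty$; take $M\eta^2=\log\log M$. So fix the sequences explicitly, e.g.\ $\eta_{\mathsf{w}}=(Mh_M^{d_{\bm{\xi}}})^{-1/4}$ and $\eta_{\mathsf{s}}=M^{-1/4}$. This is allowed because the residuals do not depend on $\eta_{\mathsf{w}},\eta_{\mathsf{s}}$.
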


\begin{proof}
Fix $\bm{\mathrm{s}}_0$.

\paragraph{Step 1: Decomposition via ideal-weight processes.}
For any $\theta\in\Theta$, by adding and subtracting the ideal-weight processes,
\[
\left|\mathcal{J}_M(\theta,\bm{\mathrm{s}}_0)-\mathcal{J}(\theta,\bm{\mathrm{s}}_0)\right|
\le
\left|\mathcal{J}_M(\theta,\bm{\mathrm{s}}_0)-\mathcal{J}^\star_M(\theta,\bm{\mathrm{s}}_0)\right|
+
\left|\mathcal{J}^\star_M(\theta,\bm{\mathrm{s}}_0)-\mathcal{J}(\theta,\bm{\mathrm{s}}_0)\right|.
\]
Taking $\sup_{\theta\in\Theta}$ gives
\begin{equation}
\label{eq:split_obj_residual}
\overline{\mathcal{R}}^{\mathsf{obj}}_M(\bm{\mathrm{s}}_0)
\le
\sup_{\theta\in\Theta}\left|\mathcal{J}_M(\theta,\bm{\mathrm{s}}_0)-\mathcal{J}^\star_M(\theta,\bm{\mathrm{s}}_0)\right|
+
\sup_{\theta\in\Theta}\left|\mathcal{J}^\star_M(\theta,\bm{\mathrm{s}}_0)-\mathcal{J}(\theta,\bm{\mathrm{s}}_0)\right|.
\end{equation}
Similarly, for each $k$,
\begin{equation}
\label{eq:split_con_residual}
\overline{\mathcal{R}}^{\mathsf{con}}_{M,k}(\bm{\mathrm{s}}_0)
\le
\sup_{\theta\in\Theta}\left|\mathcal{P}_{M,k}(\theta,\bm{\mathrm{s}}_0)-\mathcal{P}^\star_{M,k}(\theta,\bm{\mathrm{s}}_0)\right|
+
\sup_{\theta\in\Theta}\left|\mathcal{P}^\star_{M,k}(\theta,\bm{\mathrm{s}}_0)-\mathcal{P}_k(\theta,\bm{\mathrm{s}}_0)\right|.
\end{equation}

\paragraph{Step 2: Deterministic control of the ``estimated--ideal'' gap by weight error.}
For any $\theta\in\Theta$,
\begin{align}
\left|\mathcal{J}_M(\theta,\bm{\mathrm{s}}_0)-\mathcal{J}^\star_M(\theta,\bm{\mathrm{s}}_0)\right|
&=
\left|
\sum_{i=1}^M
\left(\omega_M(\bm{\mathrm{s}}_0,\theta,\theta^{(i)})-\omega^\star_{M,i}(\bm{\mathrm{s}}_0,\theta)\right)
\sum_{k=0}^{N-1} r(\bm{\mathrm{s}}_k^{(i)},o_k^{(i)})
\right| \\
&\le
M R_{\max}\cdot
\max_{i\in\{1,\ldots,M\}}
\left|
\omega_M(\bm{\mathrm{s}}_0,\theta,\theta^{(i)})-\omega^\star_{M,i}(\bm{\mathrm{s}}_0,\theta)
\right|.
\end{align}
Taking $\sup_{\theta\in\Theta}$ yields
\begin{equation}
\label{eq:det_obj_weight_gap}
\sup_{\theta\in\Theta}\left|\mathcal{J}_M(\theta,\bm{\mathrm{s}}_0)-\mathcal{J}^\star_M(\theta,\bm{\mathrm{s}}_0)\right|
\le
M R_{\max}\cdot
\sup_{\theta\in\Theta}\max_{i\in\{1,\ldots,M\}}
\left|
\omega_M(\bm{\mathrm{s}}_0,\theta,\theta^{(i)})-\omega^\star_{M,i}(\bm{\mathrm{s}}_0,\theta)
\right|.
\end{equation}
Likewise, since the indicator is bounded by $1$,
\begin{equation}
\label{eq:det_con_weight_gap}
\sup_{\theta\in\Theta}\left|\mathcal{P}_{M,k}(\theta,\bm{\mathrm{s}}_0)-\mathcal{P}^\star_{M,k}(\theta,\bm{\mathrm{s}}_0)\right|
\le
M\cdot
\sup_{\theta\in\Theta}\max_{i\in\{1,\ldots,M\}}
\left|
\omega_M(\bm{\mathrm{s}}_0,\theta,\theta^{(i)})-\omega^\star_{M,i}(\bm{\mathrm{s}}_0,\theta)
\right|.
\end{equation}

\paragraph{Step 3: High-probability bounds for the two terms.}
For the KDE-based method, apply Theorem~\ref{theo:unif_exp_dev_weights_kde} to bound the weight error term in
\eqref{eq:det_obj_weight_gap}--\eqref{eq:det_con_weight_gap} by $\frac{8}{q_{\min}}(C_1 h_M^\beta+\eta_{\mathsf{w}})$
with the stated probability.
For the generator-based method, apply Theorem~\ref{theo:unif_exp_dev_weights_gen} to bound the weight error term by
$\frac{8}{q_{\min}}\eta_{\mathsf{w}}$ with the stated probability.
In both cases, apply Lemma~\ref{lem:unif_exp_dev_ideal_process} to bound the ideal-weight sampling deviation terms in
\eqref{eq:split_obj_residual}--\eqref{eq:split_con_residual} by $\eta_{\mathsf{s}}$ with the stated probability.
A union bound over the two events yields \eqref{eq:kde_consistency_obj}--\eqref{eq:kde_consistency_con} and
\eqref{eq:gen_consistency_obj}--\eqref{eq:gen_consistency_con}, respectively.

\paragraph{Step 4: Consistency.}
The convergence-in-probability statements follow by choosing sequences
$(h_M,\eta_{\mathsf{w}},\eta_{\mathsf{s}})$ satisfying the conditions so that the right-hand sides vanish while the failure probabilities vanish.
\end{proof}

\subsection{Near optimality}
\label{appendix:near_optimality}

Let $\theta^\star$ denote an optimal solution of the true chance-constrained problem
\eqref{eq:problem_option_smdp_function_CCO}, and let
\[
\widehat{\theta}_M
\in
\arg\max_{\theta\in\Theta}
\widetilde{J}(\theta,\bm{\mathrm{s}}_0,\mathcal{D}_M)
\]
denote an optimal solution of the sample-based approximation
\ref{eq:problem_option_smdp_function_CCO_SA}. 
In the following argument, $\theta^\star$ is understood as an optimal solution of problem \ref{eq:problem_option_smdp_function_CCO} with an arbitrarily small tightening level in the safety constraint. Since this tightening level can be chosen arbitrarily close to zero, the corresponding optimal value converges to that of the original problem. 
Recall the uniform residual processes defined in
\eqref{eq:obj_residual_uniform} and \eqref{eq:con_residual_uniform}.

\begin{mytheo}[Near optimality under uniform residual bounds]
\label{theo:near_optimality_general}
Fix $\bm{\mathrm{s}}_0$.
suppose Assumption~\ref{assump:probability_distribution} and the standing regularity conditions hold.
Let $\delta_{\mathsf{obj}}>0$ and $\delta_{\mathsf{con}}>0$.

Assume that the following uniform residual bounds hold:
\[
\overline{\mathcal{R}}^{\mathsf{obj}}_M(\bm{\mathrm{s}}_0)\le \delta_{\mathsf{obj}},
\qquad
\overline{\mathcal{R}}^{\mathsf{con}}_{M,k}(\bm{\mathrm{s}}_0)\le \delta_{\mathsf{con}},
\quad \forall k\in\{1,\ldots,N\}.
\]

Then the following statements hold:

\paragraph{(i) Feasibility.}
If the sample-based solution $\widehat{\theta}_M$ satisfies
\[
\mathcal{P}_{M,k}(\widehat{\theta}_M,\bm{\mathrm{s}}_0)
\ge
1-\varepsilon+\delta_{\mathsf{con}},
\qquad
\forall k\in\{1,\ldots,N\},
\]
then $\widehat{\theta}_M$ is feasible for the original chance constraints
\eqref{eq:problem_option_smdp__function_CCOsafe}, that is,
\[
\mathcal{P}_k(\widehat{\theta}_M,\bm{\mathrm{s}}_0)
\ge
1-\varepsilon,
\qquad
\forall k\in\{1,\ldots,N\}.
\]

\paragraph{(ii) Near optimality.}
The objective value achieved by $\widehat{\theta}_M$ satisfies
\[
J(\widehat{\theta}_M,\bm{\mathrm{s}}_0)
\ge
J(\theta^\star,\bm{\mathrm{s}}_0)-2\delta_{\mathsf{obj}}.
\]
\end{mytheo}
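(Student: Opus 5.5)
Both parts of Theorem~\ref{theo:near_optimality_general} are deterministic consequences of the assumed uniform residual bounds. No probabilistic argument is needed here; the probabilistic content is deferred to Theorem~\ref{theo:consistency_residuals_two_methods}, which supplies the event on which these bounds hold.

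\textbf{Part (i): feasibility.} Fix $k\in\{1,\ldots,N\}$. By the definition \eqref{eq:con_residual_uniform} of $\overline{\mathcal{R}}^{\mathsf{con}}_{M,k}$, evaluated at $\theta=\widehat{\theta}_M$, we have $|\mathcal{P}_{M,k}(\widehat{\theta}_M,\bm{\mathrm{s}}_0)-\mathcal{P}_k(\widehat{\theta}_M,\bm{\mathrm{s}}_0)|\le\delta_{\mathsf{con}}$. Combining this with the tightened empirical constraint gives
\[
\mathcal{P}_k(\widehat{\theta}_M,\bm{\mathrm{s}}_0)\ge\mathcal{P}_{M,k}(\widehat{\theta}_M,\bm{\mathrm{s}}_0)-\delta_{\mathsf{con}}\ge 1-\varepsilon .
\]
This holds for every $k$, which is exactly \eqref{eq:problem_option_smdp__function_CCOsafe}. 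The only point to check is that the bound is applied at the data-dependent point $\widehat{\theta}_M$, and this is legitimate precisely because the residual bound is a supremum over all of $\Theta$.

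\textbf{Part (ii): near optimality.} I will use the standard three-term sandwich:
\[
J(\widehat{\theta}_M)\ge\widetilde{J}(\widehat{\theta}_M)-\delta_{\mathsf{obj}}\ge\widetilde{J}(\theta^\star)-\delta_{\mathsf{obj}}\ge J(\theta^\star)-2\delta_{\mathsf{obj}} .
\]
The first and third inequalities follow from the uniform objective residual bound \eqref{eq:obj_residual_uniform}, since $\widetilde{J}=\mathcal{J}_M$ and $J=\mathcal{J}$. The middle inequality requires $\widetilde{J}(\widehat{\theta}_M)\ge\widetilde{J}(\theta^\star)$. This holds when $\widehat{\theta}_M$ maximizes $\widetilde{J}$ over a set containing $\theta^\star$.

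\textbf{Main obstacle: the middle inequality.} The difficulty is that $\widehat{\theta}_M$ is constrained by the empirical constraints, possibly tightened by $\delta_{\mathsf{con}}$, while $\theta^\star$ is only known to satisfy the population constraints.

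I will use the convention stated just before the theorem: $\theta^\star$ is optimal for \eqref{eq:problem_option_smdp_function_CCO} with an arbitrarily small extra tightening. Concretely, I take $\theta^\star$ to satisfy $\mathcal{P}_k(\theta^\star)\ge1-\varepsilon+2\delta_{\mathsf{con}}$. The same residual argument as in part (i) then gives $\mathcal{P}_{M,k}(\theta^\star)\ge1-\varepsilon+\delta_{\mathsf{con}}$. So $\theta^\star$ is empirically feasible for the tightened sample problem, and $\widetilde{J}(\widehat{\theta}_M)\ge\widetilde{J}(\theta^\star)$ follows.

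If the sample problem is instead the untightened one, only $\mathcal{P}_k(\theta^\star)\ge1-\varepsilon+\delta_{\mathsf{con}}$ is needed. In either case I will state the required tightening of $\theta^\star$ explicitly and invoke the remark that the optimal value is continuous as this tightening level tends to zero. That continuity is exactly where the argument leans on an interpretive convention rather than on a proof.
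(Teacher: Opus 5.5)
Your proposal is correct and follows the paper's proof essentially step for step. Part (i) is the uniform constraint residual applied at $\widehat{\theta}_M$, and part (ii) is the same three-term sandwich, with $\theta^\star$ understood, via the paper's convention, as the optimum of a tightened population problem. Your treatment is more careful than the paper's in one respect. The paper asserts $\widetilde{J}(\widehat{\theta}_M,\bm{\mathrm{s}}_0,\mathcal{D}_M)\ge\widetilde{J}(\theta^\star,\bm{\mathrm{s}}_0,\mathcal{D}_M)$ without checking that $\theta^\star$ is empirically feasible, and it calls the tightening ``arbitrarily small''. You correctly note that the tightening must be of order $\delta_{\mathsf{con}}$ (namely $2\delta_{\mathsf{con}}$). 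Consequently, as you flag, the final passage to the untightened optimum rests on $\delta_{\mathsf{con}}\to 0$ and on a continuity of the optimal value that is never proved.
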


\begin{proof}
Fix $\bm{\mathrm{s}}_0$.

\paragraph{Feasibility.}
For any $\theta\in\Theta$ and each $k$,
\[
\left|
\mathcal{P}_{M,k}(\theta,\bm{\mathrm{s}}_0)
-
\mathcal{P}_k(\theta,\bm{\mathrm{s}}_0)
\right|
\le
\delta_{\mathsf{con}}.
\]
Therefore
\[
\mathcal{P}_k(\widehat{\theta}_M,\bm{\mathrm{s}}_0)
\ge
\mathcal{P}_{M,k}(\widehat{\theta}_M,\bm{\mathrm{s}}_0)-\delta_{\mathsf{con}}
\ge
1-\varepsilon.
\]

\paragraph{Near optimality.}
By definition of the uniform residual,
\[
\left|
\widetilde{J}(\theta,\bm{\mathrm{s}}_0,\mathcal{D}_M)
-
J(\theta,\bm{\mathrm{s}}_0)
\right|
\le
\delta_{\mathsf{obj}}
\qquad
\forall \theta\in\Theta.
\]
Since $\widehat{\theta}_M$ maximizes the empirical objective,
\[
\widetilde{J}(\widehat{\theta}_M,\bm{\mathrm{s}}_0,\mathcal{D}_M)
\ge
\widetilde{J}(\theta^\star,\bm{\mathrm{s}}_0,\mathcal{D}_M).
\]
Combining the above inequalities yields
\[
J(\widehat{\theta}_M,\bm{\mathrm{s}}_0)
\ge
\widetilde{J}(\widehat{\theta}_M,\bm{\mathrm{s}}_0,\mathcal{D}_M)-\delta_{\mathsf{obj}}
\ge
\widetilde{J}(\theta^\star,\bm{\mathrm{s}}_0,\mathcal{D}_M)-\delta_{\mathsf{obj}}
\ge
J(\theta^\star,\bm{\mathrm{s}}_0)-2\delta_{\mathsf{obj}}.
\]
Finally, because the tightening level is arbitrary and can be taken to zero, the optimal value of the tightened population problem approaches that of the original population problem. 
Hence the additional gap induced by tightening vanishes, and the claimed near-optimality bound for the original problem follows.
\end{proof}

We now specialize the above result to the two importance-weight estimation
methods considered in this paper.

\begin{myprop}[Near optimality with KDE-based importance weights]
\label{prop:near_opt_kde}
suppose the conditions of Theorem~\ref{theo:unif_exp_dev_weights_kde} hold.
Let $\widehat{\theta}_M$ be the solution of the sample-based problem
\eqref{eq:problem_option_smdp_function_CCO_SA} with KDE-based importance weights.
Let $\delta_{\mathsf{obj}}>0$ and $\delta_{\mathsf{con}}>0$.

Then
\begin{align}
\label{eq:kde_near_opt_prob}
\mathsf{Pr}\Big\{
J(\widehat{\theta}_M,\bm{\mathrm{s}}_0)
<
J(\theta^\star,\bm{\mathrm{s}}_0)-2\delta_{\mathsf{obj}}
\Big\}
\le\ &
\left(
1+\frac{C_3 L_\theta\,\mathsf{diam}(\Theta)}{\eta_{\mathsf{w}}}
\right)^{n_\theta}
\exp\!\left(
- C_2\, M h_M^{d_{\bm{\xi}}}\, \eta_{\mathsf{w}}^2
\right)
\nonumber\\
&+
N
\left(
1+\frac{C_0 L_\theta\,\mathsf{diam}(\Theta)}{\eta_{\mathsf{s}}}
\right)^{n_\theta}
\exp\!\left(
- C\, M\, \eta_{\mathsf{s}}^2
\right),
\end{align}
provided that
\[
C_1 h_M^\beta+\eta_{\mathsf{w}}\le q_{\min}/4,
\]
and
\[
M R_{\max}\cdot
\frac{8}{q_{\min}}\Big(C_1 h_M^\beta+\eta_{\mathsf{w}}\Big)
+\eta_{\mathsf{s}}
\le
\delta_{\mathsf{obj}},
\]
\[
M\cdot
\frac{8}{q_{\min}}\Big(C_1 h_M^\beta+\eta_{\mathsf{w}}\Big)
+\eta_{\mathsf{s}}
\le
\delta_{\mathsf{con}}.
\]
\end{myprop}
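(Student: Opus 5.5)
The plan is to reduce Proposition~\ref{prop:near_opt_kde} to Theorem~\ref{theo:near_optimality_general}(ii). That theorem gives $J(\widehat{\theta}_M,\bm{\mathrm{s}}_0)\ge J(\theta^\star,\bm{\mathrm{s}}_0)-2\delta_{\mathsf{obj}}$ deterministically on the event
\[
\mathcal{G}:=\left\{\overline{\mathcal{R}}^{\mathsf{obj}}_M(\bm{\mathrm{s}}_0)\le\delta_{\mathsf{obj}}\right\}\cap\bigcap_{k=1}^N\left\{\overline{\mathcal{R}}^{\mathsf{con}}_{M,k}(\bm{\mathrm{s}}_0)\le\delta_{\mathsf{con}}\right\}.
\]
The constraint bounds enter because $\theta^\star$ must remain feasible for the sample problem, so that the comparison $\widetilde{J}(\widehat{\theta}_M)\ge\widetilde{J}(\theta^\star)$ is legitimate; this is why $\theta^\star$ is understood as the solution of the slightly tightened population problem. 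It therefore suffices to show
\[
\mathsf{Pr}(\mathcal{G}^c)\le\text{the right-hand side of \eqref{eq:kde_near_opt_prob}},
\]
since the bad event in \eqref{eq:kde_near_opt_prob} is contained in $\mathcal{G}^c$.

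To bound $\mathsf{Pr}(\mathcal{G}^c)$, I will reuse the decompositions \eqref{eq:split_obj_residual}--\eqref{eq:split_con_residual} from the proof of Theorem~\ref{theo:consistency_residuals_two_methods}. The weight-error event is
\[
\mathcal{E}_{\mathsf{w}}:=\left\{\sup_{\theta}\max_i|\omega_M-\omega^\star_{M,i}|\le \tfrac{8}{q_{\min}}(C_1h_M^\beta+\eta_{\mathsf{w}})\right\}.
\]
It is shared by the objective and all $N$ constraint residuals. By Theorem~\ref{theo:unif_exp_dev_weights_kde}, which applies because $C_1h_M^\beta+\eta_{\mathsf{w}}\le q_{\min}/4$, its complement has probability at most the first term of \eqref{eq:kde_near_opt_prob}. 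On $\mathcal{E}_{\mathsf{w}}$, the deterministic bounds \eqref{eq:det_obj_weight_gap}--\eqref{eq:det_con_weight_gap} control the estimated-versus-ideal gaps by $MR_{\max}\frac{8}{q_{\min}}(\cdot)$ and $M\frac{8}{q_{\min}}(\cdot)$ respectively.

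Next, I will use Lemma~\ref{lem:unif_exp_dev_ideal_process} to control the ideal-weight sampling deviations by $\eta_{\mathsf{s}}$. This is applied once for the objective and once for each $k\in\{1,\ldots,N\}$. A union bound over these ideal-process events gives the factor $N$ in the second term; the objective's own event is absorbed into the constant, or equivalently $N$ is replaced by $N+1$ and constants are adjusted. On the intersection of all these events, the two stated conditions on $M R_{\max}\frac{8}{q_{\min}}(C_1h_M^\beta+\eta_{\mathsf{w}})+\eta_{\mathsf{s}}$ and on $M\frac{8}{q_{\min}}(C_1h_M^\beta+\eta_{\mathsf{w}})+\eta_{\mathsf{s}}$ give exactly $\overline{\mathcal{R}}^{\mathsf{obj}}_M\le\delta_{\mathsf{obj}}$ and $\overline{\mathcal{R}}^{\mathsf{con}}_{M,k}\le\delta_{\mathsf{con}}$, so $\mathcal{G}$ holds.

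The main obstacle is bookkeeping the constants. Lemma~\ref{lem:unif_exp_dev_ideal_process} has exponents $M\eta^2/B_J^2$ and $M\eta^2/B_P^2$ with different $B$'s, and I must absorb $B_J=B_\omega R_{\max}$ and $B_P$ into a single constant $C$. I will take $C:=C_1/\max(B_J,B_P)^2$ from that lemma and note the renaming of constants. I will also make sure the weight event is counted once rather than $N+1$ times, which is what gives the first term without a factor $N$.
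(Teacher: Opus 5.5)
Your proposal is correct and follows essentially the same route as the paper: the uniform residual bounds behind Theorem~\ref{theo:consistency_residuals_two_methods} (built from Theorem~\ref{theo:unif_exp_dev_weights_kde} and Lemma~\ref{lem:unif_exp_dev_ideal_process}) are fed into Theorem~\ref{theo:near_optimality_general}(ii), combined with a union bound over the constraint indices. Your version unpacks the consistency theorem and does the bookkeeping more carefully than the paper's one-line Boole step: it counts the shared weight event once and notes the extra objective event, giving $N+1$ instead of $N$, which only affects constants.
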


\begin{proof}
By Theorem~\ref{theo:consistency_residuals_two_methods}, with probability at least
\[
1-
\left(
1+\frac{C_3 L_\theta\,\mathsf{diam}(\Theta)}{\eta_{\mathsf{w}}}
\right)^{n_\theta}
\exp\!\left(
- C_2\, M h_M^{d_{\bm{\xi}}}\, \eta_{\mathsf{w}}^2
\right)
-
\left(
1+\frac{C_0 L_\theta\,\mathsf{diam}(\Theta)}{\eta_{\mathsf{s}}}
\right)^{n_\theta}
\exp\!\left(
- C\, M\, \eta_{\mathsf{s}}^2
\right),
\]
the residual bounds hold. Therefore
Theorem~\ref{theo:near_optimality_general} implies the near-optimality bound.
Applying Boole's inequality over the $N$ constraint residual events yields
\eqref{eq:kde_near_opt_prob}.
\end{proof}

\begin{myprop}[Near optimality with generator-based importance weights]
\label{prop:near_opt_gen}
suppose the conditions of Theorem~\ref{theo:unif_exp_dev_weights_gen} hold.
Let $\widehat{\theta}_M$ be the solution of the sample-based problem
\eqref{eq:problem_option_smdp_function_CCO_SA} with generator-based importance weights.
Let $\delta_{\mathsf{obj}}>0$ and $\delta_{\mathsf{con}}>0$.

Then
\begin{align}
\label{eq:gen_near_opt_prob}
\mathsf{Pr}\Big\{
J(\widehat{\theta}_M,\bm{\mathrm{s}}_0)
<
J(\theta^\star,\bm{\mathrm{s}}_0)-2\delta_{\mathsf{obj}}
\Big\}
\le\ &
\left(\frac{C_{\mathsf{gen}}}{\eta_{\mathsf{w}}}\right)^{V_{\mathsf{gen}}}
\exp\!\left(-C_4\,M\,\eta_{\mathsf{w}}^2\right)
\nonumber\\
&+
N
\left(
1+\frac{C_0 L_\theta\,\mathsf{diam}(\Theta)}{\eta_{\mathsf{s}}}
\right)^{n_\theta}
\exp\!\left(
- C\, M\, \eta_{\mathsf{s}}^2
\right),
\end{align}
provided that
\[
\eta_{\mathsf{w}}\le q_{\min}/4,
\]
and
\[
M R_{\max}\cdot
\frac{8}{q_{\min}}\eta_{\mathsf{w}}
+\eta_{\mathsf{s}}
\le
\delta_{\mathsf{obj}},
\]
\[
M\cdot
\frac{8}{q_{\min}}\eta_{\mathsf{w}}
+\eta_{\mathsf{s}}
\le
\delta_{\mathsf{con}}.
\]
\end{myprop}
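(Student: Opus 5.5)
This follows the proof of Proposition~\ref{prop:near_opt_kde} almost verbatim, with Theorem~\ref{theo:unif_exp_dev_weights_kde} replaced by Theorem~\ref{theo:unif_exp_dev_weights_gen}. First, fix $\eta_{\mathsf{w}}\le q_{\min}/4$ and $\eta_{\mathsf{s}}>0$, and invoke part (b) of Theorem~\ref{theo:consistency_residuals_two_methods}. That result is obtained from the decomposition \eqref{eq:split_obj_residual}--\eqref{eq:split_con_residual}, the deterministic weight-gap bounds \eqref{eq:det_obj_weight_gap}--\eqref{eq:det_con_weight_gap}, Theorem~\ref{theo:unif_exp_dev_weights_gen} for the weight error, and Lemma~\ref{lem:unif_exp_dev_ideal_process} for the ideal-weight sampling deviation. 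It gives an event on which
\[
\overline{\mathcal{R}}^{\mathsf{obj}}_M(\bm{\mathrm{s}}_0)\le MR_{\max}\tfrac{8}{q_{\min}}\eta_{\mathsf{w}}+\eta_{\mathsf{s}},\qquad
\overline{\mathcal{R}}^{\mathsf{con}}_{M,k}(\bm{\mathrm{s}}_0)\le M\tfrac{8}{q_{\min}}\eta_{\mathsf{w}}+\eta_{\mathsf{s}} .
\]

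The stated provisos then turn these into $\overline{\mathcal{R}}^{\mathsf{obj}}_M\le\delta_{\mathsf{obj}}$ and $\overline{\mathcal{R}}^{\mathsf{con}}_{M,k}\le\delta_{\mathsf{con}}$. This is exactly the hypothesis of Theorem~\ref{theo:near_optimality_general}, and its part (ii) gives $J(\widehat{\theta}_M,\bm{\mathrm{s}}_0)\ge J(\theta^\star,\bm{\mathrm{s}}_0)-2\delta_{\mathsf{obj}}$. So the bad event $\{J(\widehat{\theta}_M)<J(\theta^\star)-2\delta_{\mathsf{obj}}\}$ is contained in the complement of the good event.

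It remains to bound the probability of that complement. The weight-error failure event is a single event, since the bound from Theorem~\ref{theo:unif_exp_dev_weights_gen} is uniform over $\theta$ and over all indices $i$. It contributes $(C_{\mathsf{gen}}/\eta_{\mathsf{w}})^{V_{\mathsf{gen}}}\exp(-C_4M\eta_{\mathsf{w}}^2)$. The ideal-weight sampling deviations come from Lemma~\ref{lem:unif_exp_dev_ideal_process}, which is stated for each $k$ separately. I therefore take a union bound over the objective process and the $N$ constraint processes, each contributing at most $(1+C_0L_\theta\mathsf{diam}(\Theta)/\eta_{\mathsf{s}})^{n_\theta}\exp(-CM\eta_{\mathsf{s}}^2)$. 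Absorbing the objective term into the factor $N$ gives exactly the second term of \eqref{eq:gen_near_opt_prob}. If $N$ multiplied by one more term does not fit, I will enlarge the constants, e.g.\ replace $N$ by $N+1$ or reduce $C$ slightly, and note that this is absorbed.

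The main obstacle is bookkeeping rather than mathematics. The constant $C$ must be $C_1/B_J^2$ versus $C_1/B_P^2$, i.e.\ the minimum of the two, so that a single exponent covers both the objective and the constraint processes. The random number of interactions $N\le K$ is handled by bounding by $K$, or by treating $N$ as the deterministic horizon count as the paper does. Beyond that, the argument is a direct chaining of the three earlier results.
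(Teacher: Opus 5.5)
Your proposal is correct and is essentially the paper's proof: Theorem~\ref{theo:consistency_residuals_two_methods}(b) with the provisos gives $\overline{\mathcal{R}}^{\mathsf{obj}}_M(\bm{\mathrm{s}}_0)\le\delta_{\mathsf{obj}}$ and $\overline{\mathcal{R}}^{\mathsf{con}}_{M,k}(\bm{\mathrm{s}}_0)\le\delta_{\mathsf{con}}$. Theorem~\ref{theo:near_optimality_general}(ii) then gives the near-optimality inequality on that event, and a Boole bound controls the complement, exactly as in the proof of Proposition~\ref{prop:near_opt_kde}. You can drop the $N$ versus $N+1$ constant-juggling, since ``absorbing'' an extra term into the factor $N$ is not literally an inequality: Theorem~\ref{theo:consistency_residuals_two_methods}(b) as stated already yields one event on which all residual bounds hold simultaneously, with failure probability equal to the weight term plus a single sampling term, and this is at most the right-hand side of \eqref{eq:gen_near_opt_prob} because $N\ge 1$.
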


\begin{proof}
The proof is identical to that of Proposition~\ref{prop:near_opt_kde}.
The result follows from Theorem~\ref{theo:near_optimality_general}
and the explicit probability bounds obtained from
Theorem~\ref{theo:unif_exp_dev_weights_gen} and
Theorem~\ref{theo:consistency_residuals_two_methods}.
\end{proof}

\subsection{Feasibility result}
\label{appendix:feasibility_result}

In this subsection we provide feasibility guarantees for the solution obtained
from the sample-based approximation problem
\eqref{eq:problem_option_smdp_function_CCO_SA}.
The result follows directly from the uniform residual bounds introduced in
Section~\ref{appendix:near_optimality}.

Let $\widehat{\theta}_M$ denote an optimal solution of the sample-based problem
\eqref{eq:problem_option_smdp_function_CCO_SA}, i.e.,
\[
\widehat{\theta}_M
\in
\arg\max_{\theta\in\Theta}
\widetilde{J}(\theta,\bm{\mathrm{s}}_0,\mathcal{D}_M).
\]

Recall that $\mathcal{P}_k(\theta,\bm{\mathrm{s}}_0)$ denotes the true feasibility
probability of constraint $k$, and
$\mathcal{P}_{M,k}(\theta,\bm{\mathrm{s}}_0)$ denotes its sample-based estimate.
We also recall the uniform constraint residual defined in
\eqref{eq:con_residual_uniform}.

\begin{mytheo}[Feasibility under uniform residual bounds]
\label{theo:feasibility_uniform}
Fix $\bm{\mathrm{s}}_0$.
suppose Assumption~\ref{assump:probability_distribution} and the standing
regularity conditions hold.
Let $\delta_{\mathsf{con}}>0$.

Assume that the uniform constraint residual satisfies
\[
\overline{\mathcal{R}}^{\mathsf{con}}_{M,k}(\bm{\mathrm{s}}_0)
\le
\delta_{\mathsf{con}},
\qquad
\forall k\in\{1,\ldots,N\}.
\]

If the sample-based solution $\widehat{\theta}_M$ satisfies
\[
\mathcal{P}_{M,k}(\widehat{\theta}_M,\bm{\mathrm{s}}_0)
\ge
1-\varepsilon+\delta_{\mathsf{con}},
\qquad
\forall k\in\{1,\ldots,N\},
\]
then $\widehat{\theta}_M$ is feasible for the original chance constraints
\eqref{eq:problem_option_smdp__function_CCOsafe}, that is,
\[
\mathcal{P}_k(\widehat{\theta}_M,\bm{\mathrm{s}}_0)
\ge
1-\varepsilon,
\qquad
\forall k\in\{1,\ldots,N\}.
\]
\end{mytheo}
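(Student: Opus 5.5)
This is essentially the feasibility half of Theorem~\ref{theo:near_optimality_general}, and I would reproduce it with a direct one-sided residual argument. First fix $\bm{\mathrm{s}}_0$ and an index $k\in\{1,\ldots,N\}$. By the definition of the pointwise constraint residual in \eqref{eq:con_residual_pointwise}, for every $\theta\in\Theta$ we have $|\mathcal{P}_{M,k}(\theta,\bm{\mathrm{s}}_0)-\mathcal{P}_k(\theta,\bm{\mathrm{s}}_0)| = \mathcal{R}^{\mathsf{con}}_{M,k}(\theta,\bm{\mathrm{s}}_0)$. Passing to the supremum in \eqref{eq:con_residual_uniform} gives $\mathcal{R}^{\mathsf{con}}_{M,k}(\theta,\bm{\mathrm{s}}_0)\le \overline{\mathcal{R}}^{\mathsf{con}}_{M,k}(\bm{\mathrm{s}}_0)\le\delta_{\mathsf{con}}$ uniformly in $\theta$.

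The key point is that the bound is uniform over $\Theta$. This lets us evaluate it at the data-dependent parameter $\theta=\widehat{\theta}_M$ without further argument, even though $\widehat{\theta}_M$ depends on $\mathcal{D}_M$ and hence on the very samples defining $\mathcal{P}_{M,k}$. This is the only genuinely delicate step. A pointwise concentration bound such as the Hoeffding step in Lemma~\ref{lem:unif_exp_dev_ideal_process} would not suffice here, and the theorem's hypothesis on the uniform residual is exactly what removes this difficulty.

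Next, drop the absolute value on one side to get $\mathcal{P}_k(\widehat{\theta}_M,\bm{\mathrm{s}}_0)\ge \mathcal{P}_{M,k}(\widehat{\theta}_M,\bm{\mathrm{s}}_0)-\delta_{\mathsf{con}}$. Combining this with the tightened empirical constraint $\mathcal{P}_{M,k}(\widehat{\theta}_M,\bm{\mathrm{s}}_0)\ge 1-\varepsilon+\delta_{\mathsf{con}}$ yields $\mathcal{P}_k(\widehat{\theta}_M,\bm{\mathrm{s}}_0)\ge 1-\varepsilon$. Since $k$ was arbitrary and the residual hypothesis holds for all $k$ simultaneously, this gives feasibility for \eqref{eq:problem_option_smdp__function_CCOsafe}, with no union bound needed at this deterministic stage.

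Finally, I would remark that the hypothesis holds with high probability by Theorem~\ref{theo:consistency_residuals_two_methods}, after a union bound over the $N$ constraint indices. This is what produces the failure probability $\rho_M^{\mathsf{con}}$ in Theorem~\ref{theo:finite_sample_main}(b). The statement itself, however, is purely deterministic given the residual bound.
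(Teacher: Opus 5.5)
Your proposal is correct and follows the paper's proof essentially step for step. You use the uniform residual bound to get $|\mathcal{P}_{M,k}(\theta,\bm{\mathrm{s}}_0)-\mathcal{P}_k(\theta,\bm{\mathrm{s}}_0)|\le\delta_{\mathsf{con}}$ for all $\theta$, evaluate it at $\widehat{\theta}_M$, and combine the resulting one-sided inequality with the tightened empirical constraint. Your extra remarks, that uniformity is what absorbs the data dependence of $\widehat{\theta}_M$ and that the high-probability version comes from a union bound over $k$ (the paper's Theorem~\ref{theo:feasibility_prob}), are accurate but not needed for this deterministic statement.
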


\begin{proof}
For any $\theta\in\Theta$ and each constraint $k$, the definition of the
uniform residual implies
\[
\left|
\mathcal{P}_{M,k}(\theta,\bm{\mathrm{s}}_0)
-
\mathcal{P}_k(\theta,\bm{\mathrm{s}}_0)
\right|
\le
\delta_{\mathsf{con}}.
\]

Applying this bound at $\theta=\widehat{\theta}_M$ yields
\[
\mathcal{P}_k(\widehat{\theta}_M,\bm{\mathrm{s}}_0)
\ge
\mathcal{P}_{M,k}(\widehat{\theta}_M,\bm{\mathrm{s}}_0)
-
\delta_{\mathsf{con}}.
\]

Since
\[
\mathcal{P}_{M,k}(\widehat{\theta}_M,\bm{\mathrm{s}}_0)
\ge
1-\varepsilon+\delta_{\mathsf{con}},
\]
we obtain
\[
\mathcal{P}_k(\widehat{\theta}_M,\bm{\mathrm{s}}_0)
\ge
1-\varepsilon.
\]
Hence $\widehat{\theta}_M$ satisfies the original chance constraints.
\end{proof}

We next combine the above deterministic feasibility result with the
probabilistic bounds on the uniform residual.

\begin{mytheo}[Finite-sample feasibility guarantee]
\label{theo:feasibility_prob}
suppose the assumptions of
Theorem~\ref{theo:consistency_residuals_two_methods}
hold.
Let $\widehat{\theta}_M$ be the solution of the sample-based problem
\eqref{eq:problem_option_smdp_function_CCO_SA}.

Then for any $\delta_{\mathsf{con}}>0$,
\begin{align*}
\mathsf{Pr}\Big\{
\exists k :
\mathcal{P}_k(\widehat{\theta}_M,\bm{\mathrm{s}}_0)
<
1-\varepsilon
\Big\}
\le
\sum_{k=1}^{N}
\mathsf{Pr}\Big\{
\overline{\mathcal{R}}^{\mathsf{con}}_{M,k}(\bm{\mathrm{s}}_0)
>
\delta_{\mathsf{con}}
\Big\}.
\end{align*}

Under the exponential deviation bounds derived earlier,
the right-hand side admits an explicit exponential upper bound
that decays with the sample size $M$.
\end{mytheo}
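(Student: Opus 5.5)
The proof is a union bound over the $N$ constraint residual events, combined with the deterministic feasibility result of Theorem~\ref{theo:feasibility_uniform}. Write $\mathcal{G}:=\bigcap_{k=1}^N\{\overline{\mathcal{R}}^{\mathsf{con}}_{M,k}(\bm{\mathrm{s}}_0)\le\delta_{\mathsf{con}}\}$ for the good event.

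First, on $\mathcal{G}$ the hypothesis of Theorem~\ref{theo:feasibility_uniform} holds. That theorem then gives $\mathcal{P}_k(\widehat{\theta}_M,\bm{\mathrm{s}}_0)\ge 1-\varepsilon$ for every $k$. The empirical margin $\mathcal{P}_{M,k}(\widehat{\theta}_M,\bm{\mathrm{s}}_0)\ge 1-\varepsilon+\delta_{\mathsf{con}}$ is taken as part of the sample problem, exactly as in Theorem~\ref{theo:finite_sample_main}(b). So I would enforce the tightened constraint in \eqref{eq:problem_option_smdp__function_CCO_SAsafe}, which makes it hold by construction at $\widehat{\theta}_M$.

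Second, this gives the event inclusion
\[
\{\exists k:\ \mathcal{P}_k(\widehat{\theta}_M,\bm{\mathrm{s}}_0)<1-\varepsilon\}\subseteq\mathcal{G}^c=\bigcup_{k=1}^N\{\overline{\mathcal{R}}^{\mathsf{con}}_{M,k}(\bm{\mathrm{s}}_0)>\delta_{\mathsf{con}}\}.
\]
Boole's inequality over the $N$ events then yields the displayed bound. One measurability point needs care: $\widehat{\theta}_M$ depends on the data, so the pointwise Hoeffding bound cannot be applied at $\widehat{\theta}_M$ directly. This is precisely why the residual is taken as a supremum over $\Theta$. The uniform residual controls $|\mathcal{P}_{M,k}-\mathcal{P}_k|$ at any data-dependent $\theta$, including $\widehat{\theta}_M$.

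Third, for the explicit exponential rate, I would plug in Theorem~\ref{theo:consistency_residuals_two_methods}. Choose $\eta_{\mathsf{w}},\eta_{\mathsf{s}}$ (and $h_M$ in the KDE case) so that $M\frac{8}{q_{\min}}(C_1h_M^\beta+\eta_{\mathsf{w}})+\eta_{\mathsf{s}}\le\delta_{\mathsf{con}}$ in the KDE case, or $M\frac{8}{q_{\min}}\eta_{\mathsf{w}}+\eta_{\mathsf{s}}\le\delta_{\mathsf{con}}$ in the generator case. Each summand is then bounded by the covering-number factor times $\exp(-C_2Mh_M^{d_{\bm{\xi}}}\eta_{\mathsf{w}}^2)$ (respectively $\exp(-C_4M\eta_{\mathsf{w}}^2)$), plus $(1+C_0L_\theta\mathsf{diam}(\Theta)/\eta_{\mathsf{s}})^{n_\theta}\exp(-CM\eta_{\mathsf{s}}^2)$.

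The weight-error event is shared across all $k$. So it only needs to be counted once, giving a tighter bound than $N$ times the full expression, though the crude sum in the statement is also valid.

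The main obstacle is this last step. The residual bound carries a factor $M$ multiplying the weight error, so the admissible $\eta_{\mathsf{w}}$ must shrink like $1/M$. One then has to check that $Mh_M^{d_{\bm{\xi}}}\eta_{\mathsf{w}}^2$ can still diverge. I would state the bound in terms of the free parameters and simply assert decay under the conditions of Theorem~\ref{theo:consistency_residuals_two_methods}, rather than optimize rates.
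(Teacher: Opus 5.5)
Your argument for the displayed inequality is the same as the paper's. On the event that all $N$ uniform constraint residuals are at most $\delta_{\mathsf{con}}$, Theorem~\ref{theo:feasibility_uniform} gives feasibility of $\widehat{\theta}_M$. So the failure event lies in the union of the $N$ residual-failure events, and Boole's inequality finishes.

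You are right that this needs the empirical constraints at level $1-\varepsilon+\delta_{\mathsf{con}}$. The statement cites \eqref{eq:problem_option_smdp_function_CCO_SA}, whose level is $1-\varepsilon$, and the paper's proof uses the margin only implicitly through Theorem~\ref{theo:feasibility_uniform}. Building the tightening into the sample problem is the reading under which the event inclusion actually holds. Your point that the supremum over $\Theta$ absorbs the data dependence of $\widehat{\theta}_M$ is also correct.

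The problem is the step you defer in the rate claim: you cannot show that $Mh_M^{d_{\bm{\xi}}}\eta_{\mathsf{w}}^2$ still diverges, because it does not.
\begin{itemize}
\item Requiring $M\frac{8}{q_{\min}}(C_1h_M^\beta+\eta_{\mathsf{w}})\le\delta_{\mathsf{con}}$ forces $\eta_{\mathsf{w}}\le q_{\min}\delta_{\mathsf{con}}/(8M)$. Hence $Mh_M^{d_{\bm{\xi}}}\eta_{\mathsf{w}}^2\le h_M^{d_{\bm{\xi}}}q_{\min}^2\delta_{\mathsf{con}}^2/(64M)\to 0$, while the covering prefactor grows like $M^{n_\theta}$.
\item In the generator case, $M\eta_{\mathsf{w}}^2\to 0$ and the prefactor grows like $M^{V_{\mathsf{gen}}}$.
\end{itemize}
So Theorem~\ref{theo:consistency_residuals_two_methods} as written gives a vacuous bound, and asserting decay ``under its conditions'' is not justified. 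Its own consistency clause is inconsistent: $M\eta_{\mathsf{w}}^2\to\infty$ with $\eta_{\mathsf{w}}\to 0$ forces $M\eta_{\mathsf{w}}\to\infty$. The paper's proof makes the same bare assertion, so you are not missing an argument from it.

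The repair is to recognise the factor $M$ as a normalization artifact. The Hoeffding step in Lemma~\ref{lem:unif_exp_dev_ideal_process} (summands bounded by $B_J$, rate $\exp(-cM\eta^2/B_J^2)$) holds only for the average $\frac{1}{M}\sum_i Z_i^{\mathsf{obj}}(\theta)$. That average is unbiased for $\mathcal{J}(\theta,\bm{\mathrm{s}}_0)$ because $\frac{1}{M}\sum_i p(\cdot\mid\bm{\mathrm{s}}_0,\theta^{(i)})=q_M$.

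Carry the $1/M$ consistently. For self-normalized weights, also add the event that $\frac{1}{M}\sum_j\omega_M(\bm{\mathrm{s}}_0,\theta,\theta^{(j)};\mathcal{D}_M)$ is close to $1$; the same lemma applied to the constant functional controls it. The estimated--ideal gap is then at most $R_{\max}$ (respectively $1$ for the constraints) times $\sup_{\theta}\max_i|\omega_M-\omega^\star_{M,i}|$, with no factor $M$.

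Now $\eta_{\mathsf{w}}$ and $\eta_{\mathsf{s}}$ can be fixed fractions of $\delta_{\mathsf{con}}$, with $h_M$ small enough that $C_1h_M^\beta$ fits. Each summand is then of order $\exp(-cMh_M^{d_{\bm{\xi}}})+\exp(-cM)$ up to covering prefactors, which genuinely decays. Your remark that the weight-error event need only be counted once, not $N$ times, still applies.
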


\begin{proof}
If the residual bounds
\[
\overline{\mathcal{R}}^{\mathsf{con}}_{M,k}(\bm{\mathrm{s}}_0)
\le
\delta_{\mathsf{con}}
\]
hold for all $k$, then Theorem~\ref{theo:feasibility_uniform}
guarantees that $\widehat{\theta}_M$ is feasible.

Therefore the failure event
\[
\Big\{
\exists k :
\mathcal{P}_k(\widehat{\theta}_M,\bm{\mathrm{s}}_0)
<
1-\varepsilon
\Big\}
\]
can occur only if at least one residual bound fails.
Applying the union bound over $k=1,\ldots,N$ yields the result.
\end{proof}

\subsection{Extension to random initial states}
\label{appendix:random_initial_states}

All previous results in this appendix are stated for a fixed initial interaction state $\bm{\mathrm{s}}_0$.
This conditional viewpoint is practically meaningful, since in many deployment settings the decision maker starts from a realized current state and seeks guarantees conditioned on that state.
For example, in treatment recommendation or intervention scheduling, the policy is applied to a patient with a given current condition rather than to an averaged population state.
Therefore, the fixed-$\bm{\mathrm{s}}_0$ analysis already provides a practically relevant finite-sample guarantee.

Nevertheless, it is also common in reinforcement learning to evaluate performance under a random initial state
\[
\bm{\mathrm{s}}_0 \sim \rho_0,
\]
where $\rho_0$ is an initial-state distribution.
In this case, the objective becomes
\begin{equation}
\label{eq:J_rho0}
J_{\rho_0}(\theta)
:=
\mathbb{E}_{\bm{\mathrm{s}}_0\sim\rho_0}
\left[
J(\theta,\bm{\mathrm{s}}_0)
\right],
\end{equation}
and similarly the empirical objective is defined by
\begin{equation}
\label{eq:J_M_rho0}
\mathcal{J}_{M,\rho_0}(\theta)
:=
\mathbb{E}_{\bm{\mathrm{s}}_0\sim\rho_0}
\left[
\mathcal{J}_M(\theta,\bm{\mathrm{s}}_0)
\right].
\end{equation}
For the constraint process, one may analogously define
\begin{equation}
\label{eq:P_rho0}
\mathcal{P}_{k,\rho_0}(\theta)
:=
\mathbb{E}_{\bm{\mathrm{s}}_0\sim\rho_0}
\left[
\mathcal{P}_k(\theta,\bm{\mathrm{s}}_0)
\right],
\qquad
\mathcal{P}_{M,k,\rho_0}(\theta)
:=
\mathbb{E}_{\bm{\mathrm{s}}_0\sim\rho_0}
\left[
\mathcal{P}_{M,k}(\theta,\bm{\mathrm{s}}_0)
\right].
\end{equation}

The corresponding residual processes are then given by
\begin{equation}
\label{eq:obj_residual_rho0}
\mathcal{R}^{\mathsf{obj}}_{M,\rho_0}(\theta)
:=
\left|
\mathcal{J}_{M,\rho_0}(\theta)-J_{\rho_0}(\theta)
\right|,
\qquad
\overline{\mathcal{R}}^{\mathsf{obj}}_{M,\rho_0}
:=
\sup_{\theta\in\Theta}
\mathcal{R}^{\mathsf{obj}}_{M,\rho_0}(\theta),
\end{equation}
and
\begin{equation}
\label{eq:con_residual_rho0}
\mathcal{R}^{\mathsf{con}}_{M,k,\rho_0}(\theta)
:=
\left|
\mathcal{P}_{M,k,\rho_0}(\theta)-\mathcal{P}_{k,\rho_0}(\theta)
\right|,
\qquad
\overline{\mathcal{R}}^{\mathsf{con}}_{M,k,\rho_0}
:=
\sup_{\theta\in\Theta}
\mathcal{R}^{\mathsf{con}}_{M,k,\rho_0}(\theta).
\end{equation}

The extension from fixed $\bm{\mathrm{s}}_0$ to random initial states is obtained by averaging the conditional bounds derived earlier.
Indeed, by Jensen's inequality,
\begin{align}
\label{eq:Jensen_obj_rho0}
\overline{\mathcal{R}}^{\mathsf{obj}}_{M,\rho_0}
&=
\sup_{\theta\in\Theta}
\left|
\mathbb{E}_{\bm{\mathrm{s}}_0\sim\rho_0}
\left[
\mathcal{J}_M(\theta,\bm{\mathrm{s}}_0)-J(\theta,\bm{\mathrm{s}}_0)
\right]
\right|
\le
\mathbb{E}_{\bm{\mathrm{s}}_0\sim\rho_0}
\left[
\overline{\mathcal{R}}^{\mathsf{obj}}_M(\bm{\mathrm{s}}_0)
\right],
\\
\label{eq:Jensen_con_rho0}
\overline{\mathcal{R}}^{\mathsf{con}}_{M,k,\rho_0}
&=
\sup_{\theta\in\Theta}
\left|
\mathbb{E}_{\bm{\mathrm{s}}_0\sim\rho_0}
\left[
\mathcal{P}_{M,k}(\theta,\bm{\mathrm{s}}_0)-\mathcal{P}_k(\theta,\bm{\mathrm{s}}_0)
\right]
\right|
\le
\mathbb{E}_{\bm{\mathrm{s}}_0\sim\rho_0}
\left[
\overline{\mathcal{R}}^{\mathsf{con}}_{M,k}(\bm{\mathrm{s}}_0)
\right].
\end{align}
Therefore, if the fixed-state residual bounds established in
Appendix~\ref{appendix:consistency_estimation} hold uniformly in $\bm{\mathrm{s}}_0$ (or admit $\rho_0$-integrable envelopes), then the same rates immediately transfer to the averaged residuals in \eqref{eq:obj_residual_rho0}--\eqref{eq:con_residual_rho0}.

This observation yields the following extension.

\begin{myprop}[Extension to random initial states]
\label{prop:random_initial_states}
Assume that the conclusions of Theorem~\ref{theo:consistency_residuals_two_methods},
Theorem~\ref{theo:near_optimality_general}, and Theorem~\ref{theo:feasibility_uniform}
hold for every $\bm{\mathrm{s}}_0$ in the support of $\rho_0$, and that the corresponding bounds are uniform in $\bm{\mathrm{s}}_0$ or dominated by $\rho_0$-integrable envelopes.
Then the same conclusions extend to the averaged problem with initial state $\bm{\mathrm{s}}_0\sim\rho_0$.
In particular:

\paragraph{(i) Uniform consistency under $\rho_0$.}
The averaged residuals
$\overline{\mathcal{R}}^{\mathsf{obj}}_{M,\rho_0}$
and
$\overline{\mathcal{R}}^{\mathsf{con}}_{M,k,\rho_0}$
converge to zero in probability under the same asymptotic conditions as in Theorem~\ref{theo:consistency_residuals_two_methods}.

\paragraph{(ii) Near optimality under $\rho_0$.}
If $\widehat{\theta}_M$ is an optimal solution of the sample-based averaged problem and $\theta^\star_{\rho_0}$ is an optimal solution of the true averaged problem, then
\[
J_{\rho_0}(\widehat{\theta}_M)
\ge
J_{\rho_0}(\theta^\star_{\rho_0})-2\delta_{\mathsf{obj}},
\]
whenever the averaged objective residual is bounded by $\delta_{\mathsf{obj}}$.

\paragraph{(iii) Feasibility under $\rho_0$.}
If the averaged empirical constraints are satisfied with margin $\delta_{\mathsf{con}}$, then the averaged true constraints are satisfied as well.
\end{myprop}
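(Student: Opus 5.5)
The extension reduces to the fixed-$\bm{\mathrm{s}}_0$ results through two facts. First, expectation over $\rho_0$ is linear. Second, the Jensen-type bounds \eqref{eq:Jensen_obj_rho0}--\eqref{eq:Jensen_con_rho0} dominate the averaged residuals by the $\rho_0$-average of the conditional uniform residuals.

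\textbf{Step (i): consistency.} I will consider the two allowed cases separately.

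In the uniform case, Theorem~\ref{theo:consistency_residuals_two_methods} gives a single event, of probability at least $1-\rho_M$ with $\rho_M$ independent of $\bm{\mathrm{s}}_0$. On that event, $\overline{\mathcal{R}}^{\mathsf{obj}}_M(\bm{\mathrm{s}}_0)\le \epsilon_M$ for every $\bm{\mathrm{s}}_0$ in the support. Averaging then gives $\overline{\mathcal{R}}^{\mathsf{obj}}_{M,\rho_0}\le\epsilon_M$ on the same event. The constraint residuals are handled identically.

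In the envelope case, the plan is to bound $\mathsf{Pr}\{\mathbb{E}_{\rho_0}[\overline{\mathcal{R}}_M(\bm{\mathrm{s}}_0)]>\eta\}$ by Markov's inequality by $\eta^{-1}\mathbb{E}_{\rho_0}\mathbb{E}[\overline{\mathcal{R}}_M(\bm{\mathrm{s}}_0)]$, interchanging the order via Tonelli since the residuals are nonnegative. For each $\bm{\mathrm{s}}_0$, $\mathbb{E}[\overline{\mathcal{R}}_M(\bm{\mathrm{s}}_0)]\to0$, because $\overline{\mathcal{R}}_M(\bm{\mathrm{s}}_0)\to0$ in probability and is bounded. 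The bound follows from the triangle inequality: $|\mathcal{J}_M(\theta,\bm{\mathrm{s}}_0)|\le R_{\max}$ since the normalized weights sum to one, and $|\mathcal{J}(\theta,\bm{\mathrm{s}}_0)|\le R_{\max}$ by the reward bound, so $\overline{\mathcal{R}}^{\mathsf{obj}}_M\le 2R_{\max}$. The same argument gives $\overline{\mathcal{R}}^{\mathsf{con}}_{M,k}\le 2$. Dominated convergence in $\bm{\mathrm{s}}_0$, with the integrable envelope, then yields convergence in probability of the averaged residuals.

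\textbf{Main obstacle.} The delicate point is measurability of $\bm{\mathrm{s}}_0\mapsto\overline{\mathcal{R}}_M(\bm{\mathrm{s}}_0)$, which is a supremum over $\Theta$. I will handle it as follows. By compactness of $\Theta$ and the Lipschitz continuity in $\theta$ from Assumption~\ref{assump:probability_distribution}, the supremum can be taken over a countable dense subset. This makes it measurable, and I would state that explicitly.

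\textbf{Steps (ii) and (iii).} These repeat the deterministic arguments of Theorem~\ref{theo:near_optimality_general} and Theorem~\ref{theo:feasibility_uniform} verbatim, with $J$, $\mathcal{J}_M$, $\mathcal{P}_k$, $\mathcal{P}_{M,k}$ replaced by their $\rho_0$-averages.

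For near optimality, if $\overline{\mathcal{R}}^{\mathsf{obj}}_{M,\rho_0}\le\delta_{\mathsf{obj}}$, then optimality of $\widehat\theta_M$ for $\mathcal{J}_{M,\rho_0}$ gives the chain
$J_{\rho_0}(\widehat\theta_M)\ge\mathcal{J}_{M,\rho_0}(\widehat\theta_M)-\delta_{\mathsf{obj}}\ge\mathcal{J}_{M,\rho_0}(\theta^\star_{\rho_0})-\delta_{\mathsf{obj}}\ge J_{\rho_0}(\theta^\star_{\rho_0})-2\delta_{\mathsf{obj}}$.
This requires $\theta^\star_{\rho_0}$ to be empirically feasible. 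As in the appendix, I will take it to be optimal for an arbitrarily small tightening and let the tightening go to zero.

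For feasibility, the bound $\mathcal{P}_{k,\rho_0}(\widehat\theta_M)\ge\mathcal{P}_{M,k,\rho_0}(\widehat\theta_M)-\delta_{\mathsf{con}}\ge1-\varepsilon$ follows directly from the averaged constraint residual bound, for each $k$.
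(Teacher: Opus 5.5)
Your route is the paper's. You dominate the $\rho_0$-averaged residuals by the $\rho_0$-average of the fixed-$\bm{\mathrm{s}}_0$ uniform residuals via the Jensen bounds. You then rerun the deterministic arguments of Theorem~\ref{theo:near_optimality_general} and Theorem~\ref{theo:feasibility_uniform} on averaged quantities. Your Markov--Tonelli--bounded-convergence argument in (i) is a valid way to make the paper's ``follows immediately'' precise, and (iii) is fine.

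The step that fails is the one you flag yourself in (ii). The inequality $\mathcal{J}_{M,\rho_0}(\widehat\theta_M)\ge\mathcal{J}_{M,\rho_0}(\theta^\star_{\rho_0})$ needs $\theta^\star_{\rho_0}$ to satisfy the empirical averaged constraints, and an arbitrarily small tightening does not provide this. Let $\theta^\star_{\rho_0,\tau}$ be optimal for the true averaged problem at level $1-\varepsilon+\tau$. On the event $\overline{\mathcal{R}}^{\mathsf{con}}_{M,k,\rho_0}\le\delta_{\mathsf{con}}$ you only get $\mathcal{P}_{M,k,\rho_0}(\theta^\star_{\rho_0,\tau})\ge 1-\varepsilon+\tau-\delta_{\mathsf{con}}$. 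So you need $\tau\ge\delta_{\mathsf{con}}$, or $\tau\ge 2\delta_{\mathsf{con}}$ if the empirical level is $1-\varepsilon+\delta_{\mathsf{con}}$ as (iii) requires.

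What your chain actually proves is $J_{\rho_0}(\widehat\theta_M)\ge J_{\rho_0}(\theta^\star_{\rho_0,\tau})-2\delta_{\mathsf{obj}}$ with $\tau$ of order $\delta_{\mathsf{con}}$. It also needs an averaged constraint-residual bound that the statement of (ii) does not assume. Replacing $J_{\rho_0}(\theta^\star_{\rho_0,\tau})$ by $J_{\rho_0}(\theta^\star_{\rho_0})$ then needs $\delta_{\mathsf{con}}\to 0$ together with right-continuity of the optimal value in the constraint level at $\tau=0$. That is an extra Slater-type hypothesis, e.g.\ $\theta^\star_{\rho_0}$ is a limit of strictly feasible points and $J_{\rho_0}$ is continuous.

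The only reading under which your chain holds as stated is to take $\widehat\theta_M$ as the unconstrained maximizer $\arg\max_{\theta\in\Theta}\widetilde J$, as the appendix literally writes it. But then the margin condition in (iii) is an assumption on $\widehat\theta_M$, not a property of it. This defect is inherited from the paper's own proof of Theorem~\ref{theo:near_optimality_general}, which uses the same tightening convention.

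There are three smaller points in (i).
\begin{enumerate}
\item Your countable-dense-subset argument gives measurability of $\overline{\mathcal{R}}^{\mathsf{obj}}_M$, where (in the importance-sampling setup) $\theta$ enters only through the weights and the Lipschitz density. It does not give measurability of $\overline{\mathcal{R}}^{\mathsf{con}}_{M,k}$. There, $\theta\mapsto\mathbb{I}\{h_k(\theta,\bm{\mathrm{s}}_0,\bm{\xi}^{(i)})\le 0\}$ is a step function of $\theta$, and $\mathcal{P}_k(\cdot,\bm{\mathrm{s}}_0)$ need not be continuous. So the supremum over a countable dense set can be strictly smaller than over $\Theta$, and you need an explicit separability assumption on that indexed class.
\item In your uniform case, constants $\rho_M,\epsilon_M$ independent of $\bm{\mathrm{s}}_0$ give one event per $\bm{\mathrm{s}}_0$, since the logged data are themselves drawn conditionally on $\bm{\mathrm{s}}_0$. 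They do not give a single event valid for every $\bm{\mathrm{s}}_0$. This is harmless, because your Markov argument already yields $\mathbb{E}\big[\mathbb{E}_{\rho_0}\overline{\mathcal{R}}_M(\bm{\mathrm{s}}_0)\big]\le\epsilon_M+B\rho_M$, with $B$ the deterministic bound on the residual.
\item That bound $B=2R_{\max}$ (resp.\ $2$) uses nonnegativity of the normalized weights. This can fail for the higher-order kernels allowed by Assumption~\ref{assump:KDE_estimator}; in that case use the hypothesized envelope instead.
\end{enumerate}
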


\begin{proof}
The proof follows by averaging the fixed-$\bm{\mathrm{s}}_0$ bounds with respect to $\rho_0$.
Equations~\eqref{eq:Jensen_obj_rho0} and \eqref{eq:Jensen_con_rho0} show that the averaged residuals are controlled by the $\rho_0$-average of the fixed-state residuals.
Hence the consistency result follows immediately from Theorem~\ref{theo:consistency_residuals_two_methods} under the stated uniformity or integrability conditions.

Once the averaged residual bounds are established, the proofs of near optimality and feasibility are identical to those of Theorem~\ref{theo:near_optimality_general} and Theorem~\ref{theo:feasibility_uniform}, with $J(\theta,\bm{\mathrm{s}}_0)$ and $\mathcal{P}_k(\theta,\bm{\mathrm{s}}_0)$ replaced by their averaged counterparts $J_{\rho_0}(\theta)$ and $\mathcal{P}_{k,\rho_0}(\theta)$.
\end{proof}

Proposition~\ref{prop:random_initial_states} shows that the fixed-initial-state analysis adopted in this appendix is not restrictive.
It captures the practically relevant conditional guarantee for a realized current state, while also extending to the more classical reinforcement-learning setting with random initial states under mild additional regularity conditions.

\section{Detailed Theoretical Results of Section \ref{subsec:practical_implementation}}
\label{appendix:proofs_practical_implementation}

\subsection{Endpoint-Deviation Approximation}
\label{appendix:endpoint_deviation}

The theoretical safety-tightening term in Theorem \ref{theo:selection_safety} is derived from an upper bound on the maximum state deviation over an interaction interval, namely,
\[
\sup_{t\in[t_k,t_k+\delta t_k]}\|\bm{\mathrm{x}}_t-\bm{\mathrm{x}}_k\|.
\]
In the practical implementation, however, the dataset $\mathcal{D}^{\mathsf{pr}}_{N_{\mathsf{pr}}}$ uses the endpoint deviation
\[
d^{(i)}:=\|\bm{\mathrm{x}}^{(i)}_{+}-\bm{\mathrm{x}}^{(i)}\|
\]
as a computationally simple proxy. 
This approximation is adopted for implementation convenience and does not claim exact equivalence to the interval-supremum deviation in the general case. Accordingly, the empirical surrogate used in the experiments should be interpreted as a pragmatic approximation of the theoretical conservative surrogate. A more faithful construction based directly on the interval maximum deviation is an important direction for future improvement.

\subsection{Exact Conservative Surrogate}
\label{appendix:exact_conservative_surrogate}

For each $(\bm{\mathrm{x}},\delta t)$, the quantity $h(\bm{\mathrm{x}},\delta t)$ is deterministic once $\lambda_{o_k}$, $b_{k,o_k}$, and $\overline{W}_k(\varepsilon)$ are fixed. 
Define
\begin{equation}
\label{eq:epsilon_h_definition}
\epsilon_h(\bm{\mathrm{x}},\delta t)
:=
\mathsf{Pr}\left\{
d\ge h(\bm{\mathrm{x}},\delta t)\mid \bm{\mathrm{x}},\delta t
\right\},
\end{equation}
which is the conditional probability that the sampled finite-interval deviation exceeds the analytical safety-tightening term $h(\bm{\mathrm{x}},\delta t)$ at the given pair $(\bm{\mathrm{x}},\delta t)$.
We further define the uniform lower bound $\bar{\epsilon}_h
:=
\inf_{\bm{\mathrm{x}},\delta t}
\epsilon_h(\bm{\mathrm{x}},\delta t).$.
In practical implementation, we only consider $(\bm{\mathrm{x}},\delta t)$ in the operational region of the proposed policy, where $\delta t\in[0,t_{\max}]$ and $\bm{\mathrm{x}}$ is restricted to a compact set of nominal patient states. 
It is thus reasonable to assume that $\bar{\epsilon}_h>0$, namely, there exists a nonvanishing lower bound on the probability that the sampled deviation exceeds $h(\bm{\mathrm{x}},\delta t)$. 
If the patient state leaves this nominal region, the proposed policy is no longer intended to be used, and a separate emergency mechanism or backup controller should be activated instead in practice.
For each $(\bm{\mathrm{x}},\delta t)$, define
\begin{equation}
\label{eq:def_v_epsilon_h}
v_{\bar{\epsilon}_h}(\bm{\mathrm{x}},\delta t)
:=
\max_{d_{\mathsf{s}}\in D_{\bar{\epsilon}_h}(\bm{\mathrm{x}},\delta t)} d_{\mathsf{s}},\ \text{where}\ D_{\bar{\epsilon}_h}(\bm{\mathrm{x}},\delta t)
:=
\left\{
d_{\mathsf{s}}:
\mathsf{Pr}\left\{
d\ge d_{\mathsf{s}}\mid \bm{\mathrm{x}},\delta t
\right\}
=
\bar{\epsilon}_h
\right\}.
\end{equation}
The quantity $v_{\bar{\epsilon}_h}(\bm{\mathrm{x}},\delta t)$ is the largest deviation threshold whose conditional exceedance probability equals $\bar{\epsilon}_h$.
Lemma \ref{lemma:v_epsilon_h} shows that $v_{\bar{\epsilon}_h}(\bm{\mathrm{x}},\delta t)$ is a conservative surrogate of $h(\bm{\mathrm{x}},\delta t)$.

\begin{mylemma}
\label{lemma:v_epsilon_h}
For any $(\bm{\mathrm{x}},\delta t)$, we have
\begin{equation}
\label{eq:lemma_v_epsilon_h_result}
v_{\bar{\epsilon}_h}(\bm{\mathrm{x}},\delta t)\ge h(\bm{\mathrm{x}},\delta t).
\end{equation}
\end{mylemma}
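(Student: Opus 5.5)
The argument is a monotonicity comparison on the conditional survival function $S(d_{\mathsf{s}}):=\mathsf{Pr}\{d\ge d_{\mathsf{s}}\mid \bm{\mathrm{x}},\delta t\}$, which is non-increasing in $d_{\mathsf{s}}$. Fix an arbitrary pair $(\bm{\mathrm{x}},\delta t)$ in the operating region. By the definition \eqref{eq:epsilon_h_definition}, $S(h(\bm{\mathrm{x}},\delta t))=\epsilon_h(\bm{\mathrm{x}},\delta t)$. By the definition of $\bar{\epsilon}_h$ as an infimum over the region, $S(h(\bm{\mathrm{x}},\delta t))\ge\bar{\epsilon}_h$. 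So the analytical tightening term $h$ sits at a point where the exceedance probability is at least $\bar{\epsilon}_h$. By \eqref{eq:def_v_epsilon_h}, $v_{\bar{\epsilon}_h}(\bm{\mathrm{x}},\delta t)$ is the largest point where the exceedance probability equals $\bar{\epsilon}_h$.

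The argument proceeds by contradiction. Suppose $v_{\bar{\epsilon}_h}(\bm{\mathrm{x}},\delta t)<h(\bm{\mathrm{x}},\delta t)$. I split into two cases.
\begin{itemize}
\item If $S(h)=\bar{\epsilon}_h$, then $h\in D_{\bar{\epsilon}_h}(\bm{\mathrm{x}},\delta t)$. Maximality of $v_{\bar{\epsilon}_h}$ over this set forces $v_{\bar{\epsilon}_h}\ge h$, which is a contradiction.
\item If $S(h)>\bar{\epsilon}_h$, then every $d_{\mathsf{s}}\le h$ satisfies $S(d_{\mathsf{s}})\ge S(h)>\bar{\epsilon}_h$ by monotonicity. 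Hence no point of $D_{\bar{\epsilon}_h}$ lies in $(-\infty,h]$. In particular $v_{\bar{\epsilon}_h}\notin(-\infty,h]$, so $v_{\bar{\epsilon}_h}>h$, again a contradiction.
\end{itemize}
In either case we obtain $v_{\bar{\epsilon}_h}\ge h$, which is \eqref{eq:lemma_v_epsilon_h_result}.

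The main obstacle is the well-posedness of $v_{\bar{\epsilon}_h}$, namely that $D_{\bar{\epsilon}_h}(\bm{\mathrm{x}},\delta t)$ is nonempty and attains its maximum. I would justify this by assuming, as the paper implicitly does by working with a conditional density $p_{\mathsf{d}}$, that the conditional law of $d$ is atomless. Under that assumption $S$ is continuous, non-increasing, and tends to $0$ at $+\infty$.

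Since $S(h)\ge\bar{\epsilon}_h>0$, the intermediate value theorem on $[h,\infty)$ gives a point $d_{\mathsf{s}}\ge h$ with $S(d_{\mathsf{s}})=\bar{\epsilon}_h$. The level set $D_{\bar{\epsilon}_h}$ is then a closed interval, bounded above because $S\to 0$, so its maximum exists. This also yields the lemma directly: there is a point of $D_{\bar{\epsilon}_h}$ at or above $h$, so the maximum is at least $h$.

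If atoms are allowed, I would replace "$=$" in the definition of $D_{\bar{\epsilon}_h}$ by "$\ge$". The same monotonicity argument then still gives $v_{\bar{\epsilon}_h}\ge h$, since $h$ itself belongs to the enlarged set.
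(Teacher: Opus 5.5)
Your proof is correct and is essentially the paper's argument: the paper also combines continuity of the tail probability $d_{\mathsf{s}}\mapsto\mathsf{Pr}\{d\ge d_{\mathsf{s}}\mid \bm{\mathrm{x}},\delta t\}$ with $\epsilon_h(\bm{\mathrm{x}},\delta t)\ge\bar{\epsilon}_h$ to find a point $\bar d_{\mathsf{s}}\ge h(\bm{\mathrm{x}},\delta t)$ in $D_{\bar{\epsilon}_h}(\bm{\mathrm{x}},\delta t)$, and then concludes by maximality. Your preliminary contradiction case split is redundant once you have that intermediate-value step, and your explicit use of the tail tending to $0$ at infinity together with $\bar{\epsilon}_h>0$ supplies details the paper leaves implicit (both are needed for the intermediate value theorem and for the maximum to exist).
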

\begin{proof}
Fix any $(\bm{\mathrm{x}},\delta t)$.
By the definition of $\epsilon_h(\bm{\mathrm{x}},\delta t)$ in \eqref{eq:epsilon_h_definition}.
By the definition of $\bar{\epsilon}_h$, we have
\begin{equation}
\epsilon_h(\bm{\mathrm{x}},\delta t)\ge \bar{\epsilon}_h.
\end{equation}
Since the conditional tail probability $d_{\mathsf{s}}\mapsto
\mathsf{Pr}\left\{
d\ge d_{\mathsf{s}}\mid \bm{\mathrm{x}},\delta t
\right\}$
is continuous in $d_{\mathsf{s}}$, there exists some $\bar d_{\mathsf{s}}\ge h(\bm{\mathrm{x}},\delta t)$ such that
\begin{equation}
\mathsf{Pr}\left\{
d\ge \bar d_{\mathsf{s}}\mid \bm{\mathrm{x}},\delta t
\right\}
=
\bar{\epsilon}_h.
\end{equation}
Hence $\bar d_{\mathsf{s}}\in D_{\bar{\epsilon}_h}(\bm{\mathrm{x}},\delta t)$, which implies
\begin{equation}
v_{\bar{\epsilon}_h}(\bm{\mathrm{x}},\delta t)
=
\max_{d_{\mathsf{s}}\in D_{\bar{\epsilon}_h}(\bm{\mathrm{x}},\delta t)} d_{\mathsf{s}}
\ge
\bar d_{\mathsf{s}}
\ge
h(\bm{\mathrm{x}},\delta t).
\end{equation}
This proves \eqref{eq:lemma_v_epsilon_h_result}.
\end{proof}

Lemma \ref{lemma:v_epsilon_h} shows that $v_{\bar{\epsilon}_h}(\bm{\mathrm{x}},\delta t)$ uniformly upper-bounds the analytical safety-tightening term $h(\bm{\mathrm{x}},\delta t)$ for every $(\bm{\mathrm{x}},\delta t)$ in the considered operating region in a point-wise way. 
Therefore, $v_{\bar{\epsilon}_h}(\bm{\mathrm{x}},\delta t)$ can be regarded as a conservative surrogate of $h(\bm{\mathrm{x}},\delta t)$ according to Definition \ref{def:conservative_surrogate}.
This interpretation is important for practical implementation, because $v_{\bar{\epsilon}_h}(\bm{\mathrm{x}},\delta t)$ is characterized through the observable conditional deviation distribution, whereas $h(\bm{\mathrm{x}},\delta t)$ depends on the unknown closed-loop growth coefficient $\lambda_{o_k}$ and is not directly computable online.
Consequently, if one can further construct a computable approximation that upper-bounds $v_{\bar{\epsilon}_h}(\bm{\mathrm{x}},\delta t)$, then that approximation also serves as a conservative practical surrogate of the original analytical tightening term.

\subsection{Probabilistic Guarantee of Approximate Conservative Surrogate}
\label{appendix:prob_guarantee_acs}

\textbf{Problem reformulation.}
For any given $(\bm{\mathrm{x}},\delta t)$, the quantity $v_{\bar{\epsilon}_h}(\bm{\mathrm{x}},\delta t)$ can be equivalently viewed as the solution of the following chance-constrained optimization problem, referred to as the Exact Conservative Surrogate problem:
\begin{equation}
\label{eq:prob_reform_v_h}
\tag{$\mathsf{ECS}$}
\begin{aligned}
\min_{d_{\mathsf{s}}}\ &\ d_{\mathsf{s}} \\
\mathsf{s.t.}\ &\ \mathsf{Pr}\left\{
d\le d_{\mathsf{s}}\mid \bm{\mathrm{x}},\delta t
\right\}
\ge 1-\bar{\epsilon}_h, \\
&\ d_{\mathsf{s}}\in[0,d_{\max}].
\end{aligned}
\end{equation}
That is, $v_{\bar{\epsilon}_h}(\bm{\mathrm{x}},\delta t)$ is the smallest threshold $d_{\mathsf{s}}$ such that the conditional probability of the sampled deviation being no larger than $d_{\mathsf{s}}$ is at least $1-\bar{\epsilon}_h$.

On the other hand, the sample-based approximation $\tilde{v}_{\bar{\epsilon}_h}^{N_{\mathsf{pr}},N_{\mathsf{re}}}(\bm{\mathrm{x}},\delta t)$ can be interpreted as the solution of the following sample-based approximate problem, referred to as the Approximate Conservative Surrogate problem:
\begin{equation}
\label{eq:prob_reform_v_h_approximate}
\tag{$\mathsf{ACS}$}
\begin{aligned}
\min_{d_{\mathsf{s}}}\ &\ d_{\mathsf{s}} \\
\mathsf{s.t.}\ &\ \frac{1}{N_{\mathsf{re}}}\sum_{j=1}^{N_{\mathsf{re}}}\mathbb{I}\left\{
\tilde{d}_{\mathsf{re}}^{(j)}\le d_{\mathsf{s}}
\right\}
\ge 1-\bar{\epsilon}_h', \\
&\ \tilde{d}_{\mathsf{re}}^{(j)}\sim \tilde{p}_{\mathsf{d}}(\cdot\mid\bm{\mathrm{x}},\delta t),\qquad j=1,\ldots,N_{\mathsf{re}}, \\
&\ d_{\mathsf{s}}\in[0,d_{\max}].
\end{aligned}
\end{equation}
In other words, $\tilde{v}_{\bar{\epsilon}_h}^{N_{\mathsf{pr}},N_{\mathsf{re}}}(\bm{\mathrm{x}},\delta t)$ is the smallest threshold $d_{\mathsf{s}}$ such that at least a proportion $1-\bar{\epsilon}_h'$ of the resampled deviations from the approximate conditional density are no larger than $d_{\mathsf{s}}$. 

Let us define the feasibility sets of problems $(\mathsf{ECS})$ and $(\mathsf{ACS})$ by
\begin{equation}
\label{eq:def_feasible_sets_ecs_acs}
\Theta^{\mathsf{ECS}}_{\bar{\epsilon}_h}(\bm{\mathrm{x}},\delta t)
:=
\left\{
d_{\mathsf{s}}\in[0,d_{\max}] :
\mathsf{Pr}\left\{
d\le d_{\mathsf{s}}\mid \bm{\mathrm{x}},\delta t
\right\}
\ge 1-\bar{\epsilon}_h
\right\},
\end{equation}
and
\begin{equation}
\label{eq:def_feasible_sets_ecs_acs_approx}
\widetilde{\Theta}^{\mathsf{ACS}}_{\bar{\epsilon}_h'}(\bm{\mathrm{x}},\delta t,\mathcal{D}^{\mathsf{pr}}_{N_{\mathsf{pr}}},\widetilde{\mathcal{D}}^{\mathsf{re}}_{N_{\mathsf{re}}})
:=
\left\{
d_{\mathsf{s}}\in[0,d_{\max}] :
\frac{1}{N_{\mathsf{re}}}\sum_{j=1}^{N_{\mathsf{re}}}\mathbb{I}\left\{
\tilde{d}_{\mathsf{re}}^{(j)}\le d_{\mathsf{s}}
\right\}
\ge 1-\bar{\epsilon}_h'
\right\},
\end{equation}
where $\widetilde{\mathcal{D}}^{\mathsf{re}}_{N_{\mathsf{re}}}:=\{\tilde{d}_{\mathsf{re}}^{(j)}\}_{j=1}^{N_{\mathsf{re}}}$ and $\bar{\epsilon}_h'>\bar{\epsilon}_h$.
Let
\begin{equation}
\label{eq:def_kappa_h}
\kappa_h:=\bar{\epsilon}_h-\bar{\epsilon}_h' > 0,
\qquad\text{so that}\qquad
\bar{\epsilon}_h' < \bar{\epsilon}_h.
\end{equation}
The quantity $\bar{\epsilon}_h'$ is introduced to define a stricter empirical constraint than the target true constraint associated with $\bar{\epsilon}_h$. 
Since $\bar{\epsilon}_h' < \bar{\epsilon}_h$, we have
\[
1-\bar{\epsilon}_h' > 1-\bar{\epsilon}_h,
\]
so the empirical chance constraint is tighter and can compensate for sampling error. 
The approximate conservative surrogate uses the stricter probability level $1-\bar{\epsilon}_h'$ with $\bar{\epsilon}_h' < \bar{\epsilon}_h$ to offset sampling error and ensure that the empirical solution remains conservative for the target level $1-\bar{\epsilon}_h$ with high probability.
Furthermore, define
\begin{equation}
\label{eq:def_M_h}
M_h:=
\inf_{d_{\mathsf{s}}\in[0,d_{\max}]}
\left(
\mathsf{Pr}\left\{
d\le d_{\mathsf{s}}\mid \bm{\mathrm{x}},\delta t
\right\}
-
\widehat{\mathsf{Pr}}_{N_{\mathsf{pr}}}\left\{
d\le d_{\mathsf{s}}\mid \bm{\mathrm{x}},\delta t
\right\}
\right),
\end{equation}
where $\widehat{\mathsf{Pr}}_{N_{\mathsf{pr}}}\{\cdot\mid \bm{\mathrm{x}},\delta t\}$ denotes the probability induced by the estimated conditional density $\tilde{p}_{\mathsf{d}}(\cdot\mid \bm{\mathrm{x}},\delta t)$ obtained from the dataset $\mathcal{D}^{\mathsf{pr}}_{N_{\mathsf{pr}}}$.
The following lemma is a direct application of Theorem 3 of \citep{Shen2025PRS}, after identifying $(\mathsf{ECS})$ as the original contextual chance-constrained optimization problem and $(\mathsf{ACS})$ as its resampling-based approximate problem with confidence gap $\kappa_h$; see Theorem 3 in the attached paper for the corresponding infeasibility-to-feasibility probability bound. 

\begin{mylemma}
\label{lemma:acs_feasible_for_ecs}
Let $\tilde{v}_{\bar{\epsilon}_h}^{N_{\mathsf{pr}},N_{\mathsf{re}}}(\bm{\mathrm{x}},\delta t)$ be the solution of problem \ref{eq:prob_reform_v_h_approximate}.
Then,
\begin{equation}
\label{eq:lemma_acs_feasible_for_ecs_main}
\mathsf{Pr}\left\{
\tilde{v}_{\bar{\epsilon}_h}^{N_{\mathsf{pr}},N_{\mathsf{re}}}(\bm{\mathrm{x}},\delta t)
\in
\Theta^{\mathsf{ECS}}_{\bar{\epsilon}_h}(\bm{\mathrm{x}},\delta t)
\right\}
\ge
1-\rho_h,
\end{equation}
where
\begin{equation}
\label{eq:rho_h_bound}
\rho_h
:=
\exp\left\{-2N_{\mathsf{re}}(M_h+\kappa_h)^2\right\}
+
A_{h,1}\exp\left\{
-\frac{\kappa_h^2}{A_{h,2}N_{\mathsf{pr}}^{-\frac{2+\gamma}{4}}}
\right\},
\end{equation}
for some positive constants $A_{h,1},A_{h,2}$ and $\gamma\in(0,2)$.
\end{mylemma}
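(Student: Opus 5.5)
The plan is to split the failure event into two parts: the KDE-estimated distribution is inaccurate, or the finite resample is unrepresentative. The margin $\kappa_h=\bar{\epsilon}_h-\bar{\epsilon}_h'$ absorbs both errors. Write $F(d_{\mathsf{s}}):=\mathsf{Pr}\{d\le d_{\mathsf{s}}\mid\bm{\mathrm{x}},\delta t\}$ for the true conditional CDF and $\widehat F(d_{\mathsf{s}}):=\widehat{\mathsf{Pr}}_{N_{\mathsf{pr}}}\{d\le d_{\mathsf{s}}\mid\bm{\mathrm{x}},\delta t\}$ for the CDF of $\tilde p_{\mathsf{d}}$. Let $\widehat F_{N_{\mathsf{re}}}$ be the empirical CDF of the resamples $\tilde d^{(j)}_{\mathsf{re}}$. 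By construction of \ref{eq:prob_reform_v_h_approximate}, $\tilde v:=\tilde{v}_{\bar{\epsilon}_h}^{N_{\mathsf{pr}},N_{\mathsf{re}}}(\bm{\mathrm{x}},\delta t)$ satisfies $\widehat F_{N_{\mathsf{re}}}(\tilde v)\ge 1-\bar{\epsilon}_h'$. The target event $\tilde v\in\Theta^{\mathsf{ECS}}_{\bar{\epsilon}_h}$ is $F(\tilde v)\ge 1-\bar{\epsilon}_h$.

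\textbf{Step 1 (deterministic decomposition).} Write
\[
F(\tilde v)=\widehat F_{N_{\mathsf{re}}}(\tilde v)+\big(\widehat F(\tilde v)-\widehat F_{N_{\mathsf{re}}}(\tilde v)\big)+\big(F(\tilde v)-\widehat F(\tilde v)\big).
\]
The last bracket is at least $M_h$ by the definition \eqref{eq:def_M_h}. Hence $F(\tilde v)\ge 1-\bar{\epsilon}_h'+M_h+(\widehat F-\widehat F_{N_{\mathsf{re}}})(\tilde v)$. Infeasibility, $F(\tilde v)<1-\bar{\epsilon}_h$, therefore forces
\[
\widehat F_{N_{\mathsf{re}}}(\tilde v)-\widehat F(\tilde v)>M_h+\kappa_h.
\]
So the only random event to control, conditionally on $\mathcal{D}^{\mathsf{pr}}_{N_{\mathsf{pr}}}$, is a one-sided deviation of the resample empirical CDF from the KDE CDF.

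\textbf{Step 2 (resampling term).} Conditional on $\tilde p_{\mathsf{d}}$, the resamples are i.i.d. from $\widehat F$. The one-sided Dvoretzky--Kiefer--Wolfowitz inequality with Massart's constant gives $\mathsf{Pr}\{\sup_{d_{\mathsf{s}}}(\widehat F_{N_{\mathsf{re}}}-\widehat F)(d_{\mathsf{s}})>M_h+\kappa_h\}\le\exp\{-2N_{\mathsf{re}}(M_h+\kappa_h)^2\}$ whenever $M_h+\kappa_h>0$. Using the supremum handles the data dependence of $\tilde v$. This yields the first term of \eqref{eq:rho_h_bound}.

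\textbf{Step 3 (density-estimation term, the main obstacle).} The bound above is useful only when $M_h+\kappa_h$ is not too negative. $M_h$ is itself a random quantity depending on $\mathcal{D}^{\mathsf{pr}}_{N_{\mathsf{pr}}}$. I would control it by a uniform deviation bound for the conditional KDE CDF, $\mathsf{Pr}\{\sup_{d_{\mathsf{s}}}|F-\widehat F|(d_{\mathsf{s}})>\kappa_h\}\le A_{h,1}\exp\{-\kappa_h^2/(A_{h,2}N_{\mathsf{pr}}^{-(2+\gamma)/4})\}$. This is exactly the contextual-KDE concentration in Theorem 3 of \citep{Shen2025PRS}, with rate exponent $\gamma\in(0,2)$ reflecting the bandwidth choice. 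Its verification needs the same H\"older/bounded-kernel regularity as Assumption~\ref{assump:KDE_estimator}, now for the scalar variable $d$ given $(\bm{\mathrm{x}},\delta t)$. Off this bad event one has $-M_h\le\kappa_h$, so $M_h+\kappa_h\ge0$ and Step 2 applies.

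A union bound over the bad event of Step 3 and the conditional failure of Step 2, after integrating out $\mathcal{D}^{\mathsf{pr}}_{N_{\mathsf{pr}}}$, gives \eqref{eq:lemma_acs_feasible_for_ecs_main} with the stated $\rho_h$. The delicate point is matching the constants and the $N_{\mathsf{pr}}^{-(2+\gamma)/4}$ rate precisely. For that I would rely on the identification of \ref{eq:prob_reform_v_h} and \ref{eq:prob_reform_v_h_approximate} with the original and resampled problems of \citep{Shen2025PRS} rather than re-derive it.
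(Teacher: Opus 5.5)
Your argument is correct in substance and, like the paper, ultimately rests on Theorem 3 of \citep{Shen2025PRS}, but it is organized differently. The paper's proof is a one-line appeal to that theorem. It bounds by $\rho_h$ the probability that an $(\mathsf{ECS})$-infeasible point is $(\mathsf{ACS})$-feasible, and reads off the conclusion for $\tilde v$ directly. You instead derive the first term of $\rho_h$ yourself: a deterministic reduction of infeasibility to the one-sided deviation $\widehat F_{N_{\mathsf{re}}}-\widehat F>M_h+\kappa_h$, followed by a resampling concentration bound. You cite the external result only for the KDE--CDF deviation rate, which in your reconstruction serves to guarantee $M_h+\kappa_h\ge 0$. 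This buys transparency about where $M_h$ and $\kappa_h$ enter. More importantly, it confronts the fact that $\tilde v$ is data-dependent: a bound for each fixed infeasible point does not by itself transfer to the random optimizer, and the paper's proof passes over this step. The paper's route buys brevity and takes the constants and the $N_{\mathsf{pr}}^{-(2+\gamma)/4}$ rate wholesale from the cited theorem.

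Two details in your route need tightening. First, Massart's one-sided DKW bound with constant $1$ is only established when $\exp\{-2N_{\mathsf{re}}\lambda^2\}\le 1/2$, not for every $\lambda>0$. Uniformity in $d_{\mathsf{s}}$ is unnecessary, however. Let $d^\dagger:=\sup\{d_{\mathsf{s}}:F(d_{\mathsf{s}})<1-\bar\epsilon_h\}$. By monotonicity of $F$ and of the empirical CDF, $\tilde v$ being infeasible forces at least a fraction $1-\bar\epsilon_h'$ of the resamples to lie strictly below the fixed point $d^\dagger$. The probability of that set under $\tilde p_{\mathsf{d}}$ is at most $1-\bar\epsilon_h-M_h$. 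A single one-sided Hoeffding bound then gives exactly $\exp\{-2N_{\mathsf{re}}(M_h+\kappa_h)^2\}$ with no range restriction. This same monotonicity, via an increasing union of events, is what legitimately justifies the paper's pointwise-to-optimizer step. Second, $M_h$ is a function of $\mathcal{D}^{\mathsf{pr}}_{N_{\mathsf{pr}}}$. Integrating out the data therefore yields $\mathbb{E}[\exp\{-2N_{\mathsf{re}}(M_h+\kappa_h)^2\}\,\mathbb{I}\{M_h+\kappa_h\ge 0\}]$ plus the KDE term, not literally the displayed $\rho_h$. The first term of $\rho_h$ must be read conditionally on the density-estimation data. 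This ambiguity is already present in the statement, but you should make it explicit rather than absorb it.
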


\begin{proof}
Since
$\tilde{v}_{\bar{\epsilon}_h}^{N_{\mathsf{pr}},N_{\mathsf{re}}}(\bm{\mathrm{x}},\delta t)$
is the optimizer of problem $(\mathsf{ACS})$, it is feasible for $(\mathsf{ACS})$, namely,
\begin{equation}
\tilde{v}_{\bar{\epsilon}_h}^{N_{\mathsf{pr}},N_{\mathsf{re}}}(\bm{\mathrm{x}},\delta t)
\in
\widetilde{\Theta}^{\mathsf{ACS}}_{\bar{\epsilon}_h'}(\bm{\mathrm{x}},\delta t,\mathcal{D}^{\mathsf{pr}}_{N_{\mathsf{pr}}},\widetilde{\mathcal{D}}^{\mathsf{re}}_{N_{\mathsf{re}}}).
\end{equation}
Applying Theorem 3 in \citep{Shen2025PRS} to the contextual chance-constrained pair $(\mathsf{ECS})$ and $(\mathsf{ACS})$, with the identification of the confidence gap as $\kappa_h=\bar{\epsilon}_h-\bar{\epsilon}_h'$, yields that the probability for an infeasible point of $(\mathsf{ECS})$ to be feasible for $(\mathsf{ACS})$ is upper-bounded by $\rho_h$ in \eqref{eq:rho_h_bound}. 
Therefore,
\begin{equation}
\mathsf{Pr}\left\{
\tilde{v}_{\bar{\epsilon}_h}^{N_{\mathsf{pr}},N_{\mathsf{re}}}(\bm{\mathrm{x}},\delta t)
\notin
\Theta^{\mathsf{ECS}}_{\bar{\epsilon}_h}(\bm{\mathrm{x}},\delta t)
\right\}
\le
\rho_h,
\end{equation}
which is equivalent to \eqref{eq:lemma_acs_feasible_for_ecs_main}.
\end{proof}

With finite-sample feasibility of the approximate conservative surrogate given by Lemma \ref{lemma:acs_feasible_for_ecs}, we further have the following theorem that claims that $\tilde{v}_{\bar{\epsilon}_h}^{N_{\mathsf{pr}},N_{\mathsf{re}}}(\bm{\mathrm{x}},\delta t)$ is a conservative surrogate of $h(\bm{\mathrm{x}},\delta t)$ with high probability. 
\begin{mytheo}
\label{theo:acs_conservative_surrogate}
Assume that, for each $(\bm{\mathrm{x}},\delta t)$ in the operating region, the conditional cumulative distribution function
\[
F(d_{\mathsf{s}}\mid \bm{\mathrm{x}},\delta t)
:=
\mathsf{Pr}\left\{
d\le d_{\mathsf{s}}\mid \bm{\mathrm{x}},\delta t
\right\}
\]
is continuous and strictly increasing in $d_{\mathsf{s}}$ on $[0,d_{\max}]$.
Let $v_{\bar{\epsilon}_h}(\bm{\mathrm{x}},\delta t)$ be the solution of $(\mathsf{ECS})$, and let $\tilde{v}_{\bar{\epsilon}_h}^{N_{\mathsf{pr}},N_{\mathsf{re}}}(\bm{\mathrm{x}},\delta t)$ be the solution of $(\mathsf{ACS})$.
Then:
\begin{enumerate}
    \item $v_{\bar{\epsilon}_h}(\bm{\mathrm{x}},\delta t)$ is the unique solution of $(\mathsf{ECS})$;
    \item every feasible solution of $(\mathsf{ECS})$ is a conservative surrogate of $h(\bm{\mathrm{x}},\delta t)$;
    \item with probability at least $1-\rho_h$, the approximate solution $\tilde{v}_{\bar{\epsilon}_h}^{N_{\mathsf{pr}},N_{\mathsf{re}}}(\bm{\mathrm{x}},\delta t)$ is a conservative surrogate of $h(\bm{\mathrm{x}},\delta t)$.
\end{enumerate}
\end{mytheo}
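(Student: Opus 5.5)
The argument splits into the three claims, with each later claim resting on the earlier ones. Throughout, fix $(\bm{\mathrm{x}},\delta t)$ in the operating region and write $F(\cdot):=F(\cdot\mid\bm{\mathrm{x}},\delta t)$.

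\emph{Item 1.} The feasible set of $(\mathsf{ECS})$ is $\Theta^{\mathsf{ECS}}_{\bar{\epsilon}_h}(\bm{\mathrm{x}},\delta t)=\{d_{\mathsf{s}}\in[0,d_{\max}]:F(d_{\mathsf{s}})\ge 1-\bar{\epsilon}_h\}$. Because $F$ is continuous and strictly increasing, this set is a closed interval $[d^\ast,d_{\max}]$. Its left endpoint $d^\ast$ is the unique point with $F(d^\ast)=1-\bar{\epsilon}_h$; this uses that the level is attained in $[0,d_{\max}]$, which I will take as implicit in the definition of $v_{\bar{\epsilon}_h}$. A linear objective on an interval has a unique minimizer, namely $d^\ast$. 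It remains to identify $d^\ast$ with $v_{\bar{\epsilon}_h}$ from \eqref{eq:def_v_epsilon_h}. Continuity of $F$ gives $\mathsf{Pr}\{d\ge d_{\mathsf{s}}\}=1-F(d_{\mathsf{s}})$, since there are no atoms. Strict monotonicity then makes $D_{\bar{\epsilon}_h}(\bm{\mathrm{x}},\delta t)$ the singleton $\{d^\ast\}$. Hence $v_{\bar{\epsilon}_h}=d^\ast$, and it is the unique solution of $(\mathsf{ECS})$.

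\emph{Item 2.} Any feasible $d_{\mathsf{s}}$ satisfies $d_{\mathsf{s}}\ge d^\ast=v_{\bar{\epsilon}_h}(\bm{\mathrm{x}},\delta t)$, because $F$ is monotone. Lemma~\ref{lemma:v_epsilon_h} gives $v_{\bar{\epsilon}_h}(\bm{\mathrm{x}},\delta t)\ge h(\bm{\mathrm{x}},\delta t)$. Chaining the two inequalities yields $d_{\mathsf{s}}\ge h(\bm{\mathrm{x}},\delta t)$, which is exactly Definition~\ref{def:conservative_surrogate}. This holds pointwise over the operating region.

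\emph{Item 3.} Lemma~\ref{lemma:acs_feasible_for_ecs} states that, with probability at least $1-\rho_h$, the $(\mathsf{ACS})$ solution $\tilde{v}_{\bar{\epsilon}_h}^{N_{\mathsf{pr}},N_{\mathsf{re}}}(\bm{\mathrm{x}},\delta t)$ lies in $\Theta^{\mathsf{ECS}}_{\bar{\epsilon}_h}(\bm{\mathrm{x}},\delta t)$. On that event Item 2 applies, so $\tilde{v}_{\bar{\epsilon}_h}^{N_{\mathsf{pr}},N_{\mathsf{re}}}(\bm{\mathrm{x}},\delta t)\ge h(\bm{\mathrm{x}},\delta t)$, and the probability bound carries over unchanged.

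I expect the main obstacle to be the bookkeeping in Item 1, not any real analysis. Definition \eqref{eq:def_v_epsilon_h} is phrased through the tail $\mathsf{Pr}\{d\ge\cdot\}=\bar{\epsilon}_h$, while $(\mathsf{ECS})$ is phrased through $F\ge 1-\bar{\epsilon}_h$. The two coincide only because continuity of $F$ rules out atoms. I would also state explicitly that $1-\bar{\epsilon}_h\in F([0,d_{\max}])$; otherwise $(\mathsf{ECS})$ could be infeasible or have $v$ undefined. Beyond that, Items 2 and 3 are direct compositions of Lemmas~\ref{lemma:v_epsilon_h} and~\ref{lemma:acs_feasible_for_ecs}.
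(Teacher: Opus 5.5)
Your proof is correct and follows the paper's argument step for step: uniqueness of the minimizer from continuity and strict monotonicity of $F$, then chaining $F(d_{\mathsf{s}})\ge 1-\bar{\epsilon}_h \Rightarrow d_{\mathsf{s}}\ge v_{\bar{\epsilon}_h}(\bm{\mathrm{x}},\delta t)$ with Lemma~\ref{lemma:v_epsilon_h}, then transferring to the $(\mathsf{ACS})$ solution via Lemma~\ref{lemma:acs_feasible_for_ecs}. Your two additions make explicit what the paper leaves implicit. The first identifies the $(\mathsf{ECS})$ minimizer with the tail-based definition \eqref{eq:def_v_epsilon_h}, using that continuity of $F$ rules out atoms so $\mathsf{Pr}\{d\ge d_{\mathsf{s}}\}=1-F(d_{\mathsf{s}})$. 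The second requires that the level $1-\bar{\epsilon}_h$ be attained on $[0,d_{\max}]$.
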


\begin{proof}
Since $F(\cdot\mid \bm{\mathrm{x}},\delta t)$ is continuous and strictly increasing, the equation
\[
F(d_{\mathsf{s}}\mid \bm{\mathrm{x}},\delta t)=1-\bar{\epsilon}_h
\]
has a unique solution. Hence the minimization problem $(\mathsf{ECS})$ has a unique optimizer, namely $v_{\bar{\epsilon}_h}(\bm{\mathrm{x}},\delta t)$.

Now let $d_{\mathsf{s}}$ be any feasible solution of $(\mathsf{ECS})$. Then
\[
F(d_{\mathsf{s}}\mid \bm{\mathrm{x}},\delta t)\ge 1-\bar{\epsilon}_h.
\]
By monotonicity of $F$, this implies
\[
d_{\mathsf{s}}\ge v_{\bar{\epsilon}_h}(\bm{\mathrm{x}},\delta t).
\]
By Lemma \ref{lemma:v_epsilon_h},
\[
v_{\bar{\epsilon}_h}(\bm{\mathrm{x}},\delta t)\ge h(\bm{\mathrm{x}},\delta t).
\]
Therefore
\[
d_{\mathsf{s}}\ge h(\bm{\mathrm{x}},\delta t),
\]
so every feasible solution of $(\mathsf{ECS})$ is a conservative surrogate of $h(\bm{\mathrm{x}},\delta t)$.

Finally, by Lemma \ref{lemma:acs_feasible_for_ecs}, with probability at least $1-\rho_h$,
\[
\tilde{v}_{\bar{\epsilon}_h}^{N_{\mathsf{pr}},N_{\mathsf{re}}}(\bm{\mathrm{x}},\delta t)
\in
\Theta^{\mathsf{ECS}}_{\bar{\epsilon}_h}(\bm{\mathrm{x}},\delta t).
\]
Hence, with the same probability, $\tilde{v}_{\bar{\epsilon}_h}^{N_{\mathsf{pr}},N_{\mathsf{re}}}(\bm{\mathrm{x}},\delta t)$ is feasible for $(\mathsf{ECS})$, and thus is a conservative surrogate of $h(\bm{\mathrm{x}},\delta t)$.
\end{proof}

\subsection{Proof of Theorem \ref{theo:selection_safety_empirical_main}}
\label{appendix:proof_theo_selection_safety_empirical_main}
Define the event
\begin{equation}
\label{eq:event_empirical_surrogate_conservative}
\mathcal{E}_h
:=
\left\{
\tilde{v}_{\bar{\epsilon}_h}^{N_{\mathsf{pr}},N_{\mathsf{re}}}(\bm{\mathrm{x}}_k,\delta t_k)
\ge
h(\bm{\mathrm{x}}_k,\delta t_k),
\ \forall k=0,\ldots,N-1
\right\},
\end{equation}
where $\{(\bm{\mathrm{x}}_k,\delta t_k)\}_{k=0}^{N-1}$ denotes the finite sequence of state-duration pairs induced by the learned solution
$\widetilde{\vartheta}_{M,\bar{\epsilon}_h,\delta_{\mathsf{con}}}^{N_{\mathsf{pr}},N_{\mathsf{re}}}$.
By Theorem \ref{theo:acs_conservative_surrogate}, for each fixed interaction index $k$, there exists a pointwise failure probability $\rho_h^{\mathsf{pt}}$ such that
\[
\mathsf{Pr}\left\{
\tilde{v}_{\bar{\epsilon}_h}^{N_{\mathsf{pr}},N_{\mathsf{re}}}(\bm{\mathrm{x}}_k,\delta t_k)
\ge
h(\bm{\mathrm{x}}_k,\delta t_k)
\right\}
\ge 1-\rho_h^{\mathsf{pt}}.
\]
Since at most $K$ interaction steps are executed, Boole's inequality implies
\begin{equation}
\label{eq:event_empirical_surrogate_conservative_prob}
\mathsf{Pr}\left\{\mathcal{E}_h\right\}\ge 1-K\rho_h^{\mathsf{pt}}.
\end{equation}
Define $\rho_h:=K\rho_h^{\mathsf{pt}}$.

On the event $\mathcal{E}_h$, for every interaction index $k=0,\ldots,N-1$ along the learned trajectory,
\begin{equation}
\label{eq:empirical_surrogate_dominates_h}
\tilde{v}_{\bar{\epsilon}_h}^{N_{\mathsf{pr}},N_{\mathsf{re}}}(\bm{\mathrm{x}}_k,\delta t_k)
\ge
h(\bm{\mathrm{x}}_k,\delta t_k).
\end{equation}
Recalling the definitions of the empirical safety function \eqref{eq:H_selection_empirical_main} and the analytical safety function \eqref{eq:H_selection_main}, \eqref{eq:empirical_surrogate_dominates_h} implies
\begin{equation}
\label{eq:empirical_H_dominates_true_H}
\widetilde{H}_{\bar{\epsilon}_h}^{N_{\mathsf{pr}},N_{\mathsf{re}}}(\bm{\mathrm{s}}_k)=g(\bm{\mathrm{x}}_k)+L_{g,\bm{\mathrm{x}}}\,\tilde{v}_{\bar{\epsilon}_h}^{N_{\mathsf{pr}},N_{\mathsf{re}}}(\bm{\mathrm{x}}_k,\delta t_k)\ge g(\bm{\mathrm{x}}_k)+L_{g,\bm{\mathrm{x}}}\,h(\bm{\mathrm{x}}_k,\delta t_k)=H(\bm{\mathrm{s}}_k,o_k),
\end{equation}
since $L_{g,\bm{\mathrm{x}}}\ge 0$ by Assumption \ref{assump:system_property}(a). 
Therefore, on $\mathcal{E}_h$, if\[\widetilde{H}_{\bar{\epsilon}_h}^{N_{\mathsf{pr}},N_{\mathsf{re}}}(\bm{\mathrm{s}}_k)\le 0,\]then necessarily\[H(\bm{\mathrm{s}}_k,o_k)\le 0.\]
Equivalently,
\begin{equation}
\label{eq:indicator_empirical_implies_true}
\mathbb{I}\left\{\widetilde{H}_{\bar{\epsilon}_h}^{N_{\mathsf{pr}},N_{\mathsf{re}}}(\bm{\mathrm{s}}_k)\le 0\right\}\le\mathbb{I}\left\{H(\bm{\mathrm{s}}_k,o_k)\le 0\right\},\qquad \forall k,
\end{equation}
on the event $\mathcal{E}_h$.
Now let $\mathcal{E}_M$
denote the event that the solution $\widetilde{\vartheta}_{M,\bar{\epsilon}_h,\delta_{\mathsf{con}}}^{N_{\mathsf{pr}},N_{\mathsf{re}}}$ obtained from the sample-based Problem \ref{eq:problem_option_smdp_sample_main} with$\widetilde{H}_{\bar{\epsilon}_h}^{N_{\mathsf{pr}},N_{\mathsf{re}}}(\cdot)$in place of $H(\cdot)$ and with safety level $1-\varepsilon+\delta_{\mathsf{con}}$ satisfies the corresponding empirical constraints.On $\mathcal{E}_h\cap\mathcal{E}_M$, by \eqref{eq:indicator_empirical_implies_true}, the same solution also satisfies 
\begin{equation}
\label{eq:sample_constraint_true_H}
\sum_{i=1}^M\omega_i\cdot\mathbb{I}\!\left\{H(\bm{\mathrm{s}}_k^{(i)},o_k^{(i)})\le \bm{0}\right\}\ge1-\varepsilon+\delta_{\mathsf{con}},\qquad \forall k=1,\ldots,N.
\end{equation}

That is, on $\mathcal{E}_h\cap\mathcal{E}_M$, the learned solution
$\widetilde{\vartheta}_{M,\bar{\epsilon}_h,\delta_{\mathsf{con}}}^{N_{\mathsf{pr}},N_{\mathsf{re}}}$
satisfies the sample-based constraints associated with the analytical safety function $H(\cdot)$ at all interaction steps generated by that solution and the tightened probability level $1-\varepsilon+\delta_{\mathsf{con}}$.
Applying Theorem \ref{theo:finite_sample_main}, there exists a failure probability $\rho_M^{\mathsf{con}}$ such that
\begin{equation}
\label{eq:event_sample_feasibility_prob}
\mathsf{Pr}\left\{\mathcal{E}_M\right\}\ge 1-\rho_M^{\mathsf{con}},
\end{equation}
and on $\mathcal{E}_M$ the learned solution is feasible for Problem \ref{eq:problem_option_smdp}.
Combining \eqref{eq:event_empirical_surrogate_conservative_prob} and \eqref{eq:event_sample_feasibility_prob} by Boole's inequality yields
\begin{equation}
\label{eq:joint_event_bound_empirical_main}\mathsf{Pr}\left\{\mathcal{E}_h\cap\mathcal{E}_M\right\}\ge1-\rho_h-\rho_M^{\mathsf{con}}.
\end{equation}
Hence, with probability at least $1-\rho_h-\rho_M^{\mathsf{con}}$, the solution$\widetilde{\vartheta}_{M,\bar{\epsilon}_h,\delta_{\mathsf{con}}}^{N_{\mathsf{pr}},N_{\mathsf{re}}}$ is feasible for Problem \ref{eq:problem_option_smdp}.
This proves the theorem.

\subsection{Approximate Conservative Surrogate by Gaussian Process}
\label{appendix:acs_gp}

While $\tilde{v}_{\bar{\epsilon}_h}^{N_{\mathsf{pr}},N_{\mathsf{re}}}(\bm{\mathrm{x}},\delta t)$ provides a sample-based approximation of the exact conservative surrogate, its construction requires conditional density estimation together with resampling, which may be computationally demanding in online implementation.
To obtain a more convenient approximation, we introduce a Gaussian process (GP) model for the sampled finite-interval deviation.

The input of the GP model is $(\bm{\mathrm{x}},\delta t)$, and its output is
\[
d:=\|\bm{\mathrm{x}}_+-\bm{\mathrm{x}}\|,
\]
where $\bm{\mathrm{x}}_+$ denotes the successor state of $\bm{\mathrm{x}}$ after an inter-interaction duration $\delta t$.
The training dataset is given by
\[
\mathcal{D}^{\mathsf{pr}}_{N_{\mathsf{pr}}}
:=
\left\{
\left(\bm{\mathrm{x}}^{(i)},\delta t^{(i)},d^{(i)}\right)
\right\}_{i=1}^{N_{\mathsf{pr}}}.
\]
Based on this dataset, the GP model provides the predictive mean
\[
\mu_{\mathsf{gp}}(\bm{\mathrm{x}},\delta t)
\]
and predictive variance
\[
\sigma_{\mathsf{gp}}(\bm{\mathrm{x}},\delta t).
\]

Using these quantities, we define the GP-based approximate conservative surrogate by
\begin{equation}
\label{eq:def_gp_surrogate}
\tilde{v}_{\mathsf{gp}}^{\beta_{\mathsf{gp}}}(\bm{\mathrm{x}},\delta t)
:=
\mu_{\mathsf{gp}}(\bm{\mathrm{x}},\delta t)
+
\beta_{\mathsf{gp}}\cdot\sigma_{\mathsf{gp}}(\bm{\mathrm{x}},\delta t),
\end{equation}
where $\beta_{\mathsf{gp}}>0$ is a tunable inflation parameter.
The rationale behind \eqref{eq:def_gp_surrogate} is that the predictive mean captures the nominal sampled finite-interval deviation, while the variance quantifies the uncertainty induced by limited data, exogenous disturbances, and the unmodeled influence of the option-policy input component.
Hence, by choosing $\beta_{\mathsf{gp}}$ sufficiently large, $\tilde{v}_{\mathsf{gp}}^{\beta_{\mathsf{gp}}}(\bm{\mathrm{x}},\delta t)$ can be made to act as a practical conservative surrogate of $h(\bm{\mathrm{x}},\delta t)$.

Compared with $\tilde{v}_{\bar{\epsilon}_h}^{N_{\mathsf{pr}},N_{\mathsf{re}}}(\bm{\mathrm{x}},\delta t)$, the GP-based surrogate \eqref{eq:def_gp_surrogate} is more convenient in practice, since it avoids explicit conditional density estimation and resampling at each online query.
Instead, it directly produces a mean-variance description of the deviation from the current input pair $(\bm{\mathrm{x}},\delta t)$.
Intuitively, $\tilde{v}_{\mathsf{gp}}^{\beta_{\mathsf{gp}}}(\bm{\mathrm{x}},\delta t)$ plays the same role as $\tilde{v}_{\bar{\epsilon}_h}^{N_{\mathsf{pr}},N_{\mathsf{re}}}(\bm{\mathrm{x}},\delta t)$, namely, both are designed to upper-bound the unknown analytical quantity $h(\bm{\mathrm{x}},\delta t)$.
The difference is that $\tilde{v}_{\mathsf{gp}}^{\beta_{\mathsf{gp}}}(\bm{\mathrm{x}},\delta t)$ achieves this through a parametric uncertainty quantifier, whereas $\tilde{v}_{\bar{\epsilon}_h}^{N_{\mathsf{pr}},N_{\mathsf{re}}}(\bm{\mathrm{x}},\delta t)$ relies on empirical quantile approximation.

A fixed large $\beta_{\mathsf{gp}}$ generally increases conservativeness, but may also lead to overly restrictive decisions.
To mitigate this issue, it is natural to adapt $\beta_{\mathsf{gp}}$ according to the local safety risk of the current state.
Specifically, when the current state is close to the unsafe region, the safety margin should be enlarged by increasing $\beta_{\mathsf{gp}}$.
Conversely, when the current state is well inside the safe region, one may reduce $\beta_{\mathsf{gp}}$ to avoid unnecessary conservativeness.
A simple risk-adaptive choice is
\begin{equation}
\label{eq:def_beta_gp_adaptive}
\beta_{\mathsf{gp}}(\bm{\mathrm{x}})
:=
\beta_{\mathsf{gp}}^{\min}
+
\left(\beta_{\mathsf{gp}}^{\max}-\beta_{\mathsf{gp}}^{\min}\right)
\cdot
\chi(\bm{\mathrm{x}}),
\end{equation}
where $0<\beta_{\mathsf{gp}}^{\min}\le \beta_{\mathsf{gp}}^{\max}$ are design parameters and $\chi(\bm{\mathrm{x}})\in[0,1]$ is a risk score satisfying:
larger $\chi(\bm{\mathrm{x}})$ indicates that $\bm{\mathrm{x}}$ is closer to the unsafe region.
For example, $\chi(\bm{\mathrm{x}})$ may be chosen as a monotone transformation of $g(\bm{\mathrm{x}})$, such as
\begin{equation}
\label{eq:def_risk_score_gp}
\chi(\bm{\mathrm{x}})
:=
\mathrm{clip}\left(
\frac{g(\bm{\mathrm{x}})-g_{\min}}{g_{\max}-g_{\min}},
\,0,\,1
\right),
\end{equation}
where $g_{\min}<g_{\max}$ are user-specified reference levels and $\mathrm{clip}(a,0,1):=\min\{1,\max\{0,a\}\}$.

Using \eqref{eq:def_beta_gp_adaptive}, the GP-based surrogate becomes
\begin{equation}
\label{eq:def_gp_surrogate_adaptive}
\tilde{v}_{\mathsf{gp}}(\bm{\mathrm{x}},\delta t)
:=
\mu_{\mathsf{gp}}(\bm{\mathrm{x}},\delta t)
+
\beta_{\mathsf{gp}}(\bm{\mathrm{x}})\cdot\sigma_{\mathsf{gp}}(\bm{\mathrm{x}},\delta t).
\end{equation}
The corresponding empirical safety function is then defined by
\begin{equation}
\label{eq:def_gp_empirical_safety_function}
\widetilde{H}_{\mathsf{gp}}(\bm{\mathrm{s}}_k)
:=
g(\bm{\mathrm{x}}_k)
+
L_{g,\bm{\mathrm{x}}}\cdot
\tilde{v}_{\mathsf{gp}}(\bm{\mathrm{x}}_k,\delta t_k).
\end{equation}

A practical implementation procedure is summarized as follows.

\begin{algorithm}[H]
\caption{Risk-adaptive GP-based conservative surrogate}
\label{alg:gp_conservative_surrogate}
\begin{algorithmic}[1]
\Require Dataset $\mathcal{D}^{\mathsf{pr}}_{N_{\mathsf{pr}}}$, current state $\bm{\mathrm{x}}_k$, candidate duration $\delta t_k$, design parameters $\beta_{\mathsf{gp}}^{\min},\beta_{\mathsf{gp}}^{\max},g_{\min},g_{\max}$
\State Train or update the GP model using $\mathcal{D}^{\mathsf{pr}}_{N_{\mathsf{pr}}}$
\State Compute $\mu_{\mathsf{gp}}(\bm{\mathrm{x}}_k,\delta t_k)$ and $\sigma_{\mathsf{gp}}(\bm{\mathrm{x}}_k,\delta t_k)$
\State Evaluate the risk score $\chi(\bm{\mathrm{x}}_k)$ using \eqref{eq:def_risk_score_gp}
\State Set $\beta_{\mathsf{gp}}(\bm{\mathrm{x}}_k)$ using \eqref{eq:def_beta_gp_adaptive}
\State Compute $\tilde{v}_{\mathsf{gp}}(\bm{\mathrm{x}}_k,\delta t_k)$ by \eqref{eq:def_gp_surrogate_adaptive}
\State Compute $\widetilde{H}_{\mathsf{gp}}(\bm{\mathrm{s}}_k)$ by \eqref{eq:def_gp_empirical_safety_function}
\State Accept the candidate duration if $\widetilde{H}_{\mathsf{gp}}(\bm{\mathrm{s}}_k)\le 0$; otherwise, increase conservativeness or reduce $\delta t_k$
\end{algorithmic}
\end{algorithm}

The above construction is intended as a practical approximation of the exact conservative surrogate framework.
Although no formal theorem is given here, the design is consistent with the earlier analysis: the GP mean approximates the nominal finite-interval deviation, the predictive variance quantifies uncertainty, and the inflation factor $\beta_{\mathsf{gp}}$ is used to turn this uncertainty quantification into a conservative safety margin.
The adaptive update of $\beta_{\mathsf{gp}}$ further reduces over-conservativeness by allocating larger uncertainty margins only when the current state is near the unsafe region.

\section{Details of Sepsis Dynamical Treatment Experiment Construction}
\label{appendix:experimental_details}
\subsection{Dataset for Experimental Environment Constructions}
\label{appendix:details_dataset}
MIMIC-III (Medical Information Mart for Intensive Care) is a large-scale, publicly available clinical database containing detailed records from over $40{,}000$ intensive care admissions~\citep{Alistair}. 
The database includes longitudinal information such as demographics, vital signs, laboratory tests, medications, procedures, and other time-stamped clinical events, thereby providing a realistic foundation for modeling treatment processes in critical care. 
For the sepsis treatment study in this paper, MIMIC-III is particularly suitable because it reflects the temporal structure encountered in real ICU practice: patient conditions evolve continuously over time, while measurements and interventions are recorded at irregular intervals determined by clinical needs rather than by a fixed decision schedule. 
This makes the dataset appropriate for constructing a continuous-time treatment environment in which the elapsed time between observations and interventions meaningfully affects state evolution. 

In addition, MIMIC-III contains rich physiological trajectories and treatment histories, enabling the construction of a clinically meaningful and high-dimensional experimental environment from real patient data rather than from a hand-crafted simulator. 
Its widespread use in healthcare machine learning research, together with its extensive documentation and established preprocessing practices, also supports reproducibility and comparison with prior studies. 
These properties make MIMIC-III an appropriate data foundation for the construction of the sepsis treatment environment used in this paper.

\subsection{Sepsis Treatment Formulation: State, Reward, Safety, Other Evaluation Metrics}
\label{appendix:spesis_treatment_formulation} 


\noindent
\textbf{Overview.} 
The MIMIC-III Sepsis dataset provides a rich set of physiological variables that form the basis for our state space construction. We identify septic patients and extract trajectories from 24 hours before to 48 hours after sepsis onset. From the available variables, we select a compact subset of clinically relevant features to represent patient states, focusing on organ dysfunction. The treatment actions are modeled as continuous clinical interventions, and the reward is defined using a differentiable surrogate of the SOFA score. Table~\ref{tab:state_selection} summarizes the state, action, and reward design used in our study.


\begin{table}[h]
\centering
\caption{Summary of Feature Space, Actions, and Reward}
\renewcommand{\arraystretch}{1.3}
\begin{tabular}{|p{2.7cm}|p{4.8cm}|p{5.2cm}|}
\hline
\textbf{Category} & \textbf{Variables} & \textbf{Description} \\
\hline
\textbf{Dynamic Features} & 
SpO\textsubscript{2}, PaO\textsubscript{2}, Total bilirubin, GCS, Urine output, Lactate & 
Key physiological variables reflecting respiratory, hepatic, neurological, renal, and metabolic status, consistent with SOFA-related organ dysfunction. \\
\hline
\textbf{Actions} & 
FiO\textsubscript{2}, Vasopressor rate, Intravenous fluids & 
Continuous control inputs representing oxygen support, vasopressor administration, and fluid resuscitation in ICU treatment. \\
\hline
\textbf{Reward} & 
SOFA score with vasopressor penalty & 
A smooth surrogate of SOFA providing dense feedback on organ dysfunction, with an additional penalty to regularize vasopressor usage. \\
\hline
\end{tabular}
\label{tab:state_selection}
\end{table}

\textbf{Reward Function.}
The Sequential Organ Failure Assessment (SOFA) score is used as the reward signal. 
The original SOFA is a discrete step function that assigns integer scores 
$\{0,\dots,4\}$ to each variable based on clinical thresholds:
\begin{equation}
\label{eq:sofa_discrete}
\text{SOFA} = \sum_i \sum_k \mathbf{1}(x_i \ge \tau_{i,k}),
\end{equation}
where $\mathbf{1}(\cdot)$ is an indicator function and $\tau_{i,k}$ are predefined cutoffs, 
with $x_i$ denoting physiological variables, $i$ indexing these variables, and $k$ indexing their thresholds. 
In our implementation, thresholds are defined per variable. 
For SpO$_2$, the thresholds are 94, 90, 85, and 80 in percent. 
For bilirubin, the thresholds are 1.2, 2.0, 6.0, and 12.0 mg/dL. 
For GCS, the thresholds are 15, 13, 10, and 6. 
For urine output, the thresholds are 500 and 200 mL/day. 
For vasopressor dosage, the thresholds are 0.0, 0.05, 0.1, and 0.25 $\mu$g/kg/min.
These follow standard SOFA definitions where applicable, with minor adaptations to match available variables.
To obtain a differentiable surrogate, we replace each indicator with a smooth approximation:
\begin{equation}
\label{eq:smooth_step}
\text{smooth\_step}(x;\tau)
= \alpha\,\sigma\!\left(\frac{x-\tau}{w_s}\right)
+ (1-\alpha)\,\sigma\!\left(\frac{x-\tau}{w_l}\right),
\end{equation}
where $\sigma(\cdot)$ is the logistic function. The smooth SOFA is then defined as:
\begin{equation}
\label{eq:sofa_smooth}
\text{SOFA}_{\text{smooth}} = \sum_i \sum_k \text{smooth\_step}(x_i, \tau_{i,k}).
\end{equation}

In practice, we define a modified, data-driven variant of SOFA, 
which considers a subset of physiological systems based on data availability, including respiratory, 
hepatic, neurological, renal, and cardiovascular systems corresponding to SpO$_2$, bilirubin, GCS, urine output, and vasopressor usage, respectively. 
The coagulation component is excluded due to substantial missingness and irregular sampling.
This formulation retains the threshold-based additive structure of SOFA while providing a smooth and continuous approximation tailored to the available data.

The reward is defined as:
\begin{equation}
\label{eq:reward}
r = -\text{SOFA}_{\text{smooth}} - \lambda \cdot \text{vaso},
\end{equation}
which encourages reduction of organ dysfunction while penalizing excessive vasopressor use.

\textbf{State Space.}
For treatment optimization, we select state variables through a clinically grounded and data-driven approach. The state representation focuses on features closely related to organ dysfunction and highly correlated with the SOFA score, ensuring alignment with the reward design. These variables capture key aspects of patient health relevant to clinical outcomes and are routinely measured in ICU practice. This design provides a compact yet informative representation, supporting efficient learning while preserving the essential dynamics of sepsis progression.

\textbf{Explicit Clinical Safety Constraints.}
We impose a safety constraint based on blood lactate levels, a key biomarker of tissue hypoxia and circulatory failure in sepsis. Lactate provides a sensitive indicator of inadequate perfusion and is strongly associated with patient deterioration, making it well-suited for safety monitoring. Constraining lactate prevents the policy from entering clinically unstable regions not fully captured by the reward, while remaining interpretable and readily measurable in ICU practice.
We set the lactate safety threshold to 8.5 mmol/L. Clinical evidence shows that elevated lactate is strongly associated with increased mortality risk, with risk rising progressively at higher levels~\citep{christie2009serum}. A threshold around 8 mmol/L corresponds to severe physiological derangement and critical instability. Setting the cutoff at 8.5 mmol/L allows the constraint to act as a conservative safety boundary while avoiding overly restrictive behavior during policy optimization.

\textbf{Appropriate Intensification Rate (AIR).}
The Appropriate Intensification Rate evaluates whether the model appropriately escalates treatment in response to physiological deterioration.
We define the Urine Output Rate (UOR) as the volume of urine output normalized by patient weight per hour (mL/kg/hr).
A need for intensification arises when either the oxygen saturation (\( \text{SpO}_2 \)) or the UOR falls below a clinically significant threshold.
Formally, AIR is defined as:
\[
\text{AIR} =
\frac{ \sum_{i,t} \mathbb{I} \left\{ \left( \text{SpO}_2(i,t) < \tau_{\text{SpO}_2} \lor \text{UOR}(i,t) < \tau_{\text{UOR}} \right) \land \text{Intensified}(i,t) \right\} }
{ \sum_{i,t} \mathbb{I} \left\{ \text{SpO}_2(i,t) < \tau_{\text{SpO}_2} \lor \text{UOR}(i,t) < \tau_{\text{UOR}} \right\} },
\]
where \( \tau_{\text{SpO}_2} \) is set to 92\%, \( \tau_{\text{UOR}} \) is set to 0.5 mL/kg/hr, and \( \text{Intensified}(i,t) \) is an indicator function equal to 1 if the model recommends an increased treatment intensity at time \( t \).

\subsection{Environment Construction}
\label{appendix:environment_construction}

To evaluate the proposed method in a continuous-time treatment setting, we construct a patient-specific simulation environment from a physiological dynamics model. 
Different from a comprehensive benchmark environment with personalization across multiple patients, the present environment is built for one representative patient. 
This choice allows us to focus on validating the proposed interaction-limited safe continuous-time reinforcement learning framework under a realistic high-dimensional and stochastic treatment dynamics model.

The environment is based on a continuous-time state evolution model of the form
\begin{equation}
\label{eq:appendix_env_dynamics}
\bm{\mathrm{x}}_{t+\delta t}
=
\bm{\mathrm{F}}(\bm{\mathrm{x}}_{t},\bm{\mathrm{u}}_{t},\delta t)
+
\bm{\mathrm{w}}_{t,\delta t},
\end{equation}
where $\bm{\mathrm{x}}_{t}$ is the patient state, $\bm{\mathrm{u}}_{t}$ is the treatment action, $\delta t$ is the elapsed interaction interval, and $\bm{\mathrm{w}}_{t,\delta t}$ denotes stochastic residual uncertainty. 
To obtain a data-driven simulator, we approximate the deterministic part of the dynamics by a Physics-Informed Neural Network (PINN) \citep{RAISSI2019686}, leading to
\begin{equation}
\label{eq:appendix_env_pinn}
\widetilde{\bm{\mathrm{x}}}_{t+\delta t}
=
\bm{\mathrm{F}}^{\mathsf{nn}}(\bm{\mathrm{x}}_{t},\bm{\mathrm{u}}_{t},\delta t)
+
\bm{\mathrm{W}}_{\mathsf{c}},
\qquad
\bm{\mathrm{W}}_{\mathsf{c}}\sim \widetilde{p}(\cdot\mid \bm{\mathrm{x}}_{t},\delta t).
\end{equation}
Here, $\bm{\mathrm{F}}^{\mathsf{nn}}(\bm{\mathrm{x}}_{t},\bm{\mathrm{u}}_{t},\delta t)$ is the PINN approximation of the deterministic dynamics, while $\widetilde{p}(\cdot\mid \bm{\mathrm{x}}_{t},\delta t)$ models the residual stochasticity. 
The PINN is trained from the underlying continuous-time physiological model, so the resulting simulator preserves the continuous-time treatment structure and is more faithful than commonly used coarse discrete-time approximations.

In the experiments, we use two related PINN-based environments. 
The first environment is used for policy training, where the stochastic term in \eqref{eq:appendix_env_pinn} is generated with a relatively small covariance. 
After training, the learned policies are evaluated in a second environment that uses the same deterministic dynamics model $\bm{\mathrm{F}}^{\mathsf{nn}}(\bm{\mathrm{x}}_{t},\bm{\mathrm{u}}_{t},\delta t)$ but a larger covariance in the residual noise term. 
Therefore, the evaluation environment is intentionally more challenging and is used to test the robustness of the learned policies under stronger stochastic perturbations.
In our implementation, the residual term $\widetilde{p}(\cdot\mid \bm{\mathrm{x}}_{t},\delta t)$ is instantiated as an isotropic Gaussian distribution, i.e., $\bm{\mathrm{W}}_{\mathsf{c}} \sim \mathcal{N}(\bm{0}, \delta t\cdot\sigma^2 \bm{I})$, providing a stochastic perturbation to the deterministic PINN dynamics. 
It is reasonable to incorporate $\delta t$ into the variance scale, since a longer interaction interval may lead to greater uncertainty. 
Besides, we choose $\sigma=0.001$ and $\sigma=0.02$ for training and evaluation environments, respectively.
The state variables used in this simulator are exactly those defined in Appendix \ref{appendix:spesis_treatment_formulation}, and are not repeated here. 
The detailed architecture and training procedure of the PINN are omitted, since they are not the focus of this paper; we only use the PINN as a continuous-time surrogate model for environment construction. 
Overall, this environment provides a practically meaningful testbed for the proposed method: it preserves continuous-time dynamics, supports irregular interaction intervals, and includes substantial stochasticity, while remaining simple enough to isolate the effect of the proposed safe continuous-time control framework.

\noindent
\textbf{Physical Model for PINN.} We model patient dynamics as a continuous-time ODE system encoding
coarse-grained interactions across respiratory, circulatory, metabolic,
and hepato-renal organ systems. A structured physiological-inspired component
provides inductive bias toward clinically plausible state evolution, and
a learnable residual component captures dynamics beyond this simplified
prior. State derivatives are integrated via explicit Euler with box
constraints.
The physiological state is $\mathbf{x} = [\mathrm{SpO_2},\,
\mathrm{PaO_2},\, \mathrm{Bili},\, \mathrm{GCS},\, \mathrm{Urine},\,
\mathrm{Lac}]^\top$ and the treatment action is $\mathbf{u} =
[\mathrm{FiO_2},\, \mathrm{Vaso},\, \mathrm{Fluid}]^\top$, where
$\mathrm{SpO_2}$ is peripheral oxygen saturation,
$\mathrm{PaO_2}$ arterial oxygen tension,
$\mathrm{Bili}$ serum bilirubin,
$\mathrm{GCS}$ the Glasgow Coma Scale,
$\mathrm{Urine}$ urine output,
$\mathrm{Lac}$ blood lactate,
$\mathrm{FiO_2}$ the fraction of inspired oxygen,
$\mathrm{Vaso}$ vasopressor dose (norepinephrine-equivalent), and
$\mathrm{Fluid}$ intravenous fluid rate.
All variables are standardised to zero mean and unit variance; 
$[\cdot]^+ = \max(\cdot,0)$ denotes rectification. 
The ODE system is defined in the normalized space, where variables represent 
relative deviations from their nominal levels rather than absolute physiological quantities. 
Accordingly, these operations capture directional dependencies rather than strict physical relationships.

Oxygenation is modeled as a first-order drive toward an $\mathrm{FiO_2}$-dependent equilibrium, 
modulated by metabolic stress and fluid support, providing a coarse-grained approximation 
of the oxygen cascade mechanism \citep{west2020respiratory}:
\begin{align}
\label{eq:PaO2_SpO2}
  \frac{d\,\mathrm{PaO}_2}{dt}
    &= k_1(\mathrm{FiO}_2 - \mathrm{PaO}_2)
       - k_2\,[\mathrm{Lac}]^+
       + k_3\,\mathrm{Fluid}, \\[3pt]
  \frac{d\,\mathrm{SpO}_2}{dt}
    &= k_4(\mathrm{PaO}_2 - \mathrm{SpO}_2).
\end{align}

Neurological function is impaired by metabolic acidosis and cerebral
hypoxia \citep{sonneville2023spectrum}:
\begin{align}
  \frac{d\,\mathrm{GCS}}{dt}
    = -k_5\,[\mathrm{Lac}]^+ - k_6\,h(\mathrm{SpO}_2),
\end{align}
where $h(\mathrm{SpO}_2)=\max(-\mathrm{SpO}_2,0)$ is a rectified function that captures hypoxic effects. Here $\mathrm{SpO}_2$ denotes the normalized variable, and the rectification reflects deviations below the nominal level rather than a fixed clinical threshold.
Lactate dynamics reflect vasopressor-induced hypoperfusion, anaerobic
glycolysis, hepatic clearance, and fluid dilution
\citep{prescott2026surviving}:
\begin{align}
  \frac{d\,\mathrm{Lac}}{dt}
    = k_7\,\mathrm{Vaso}
      + k_8\,h(\mathrm{SpO}_2)
      - k_9\,[\mathrm{Lac}]^+
      - k_{10}\,\mathrm{Fluid}.
\end{align}

Renal output responds to fluid administration and is suppressed by
metabolic and vasomotor stress \citep{bellomo2012acute}:
\begin{align}
  \frac{d\,\mathrm{Urine}}{dt}
    = k_{11}\,\mathrm{Fluid}
      - k_{12}\,[\mathrm{Lac}]^+
      - k_{13}\,\mathrm{Vaso}.
\end{align}

Hepatic injury accumulates via lactate-mediated ischaemia and clears
slowly \citep{singer2016third}:
\begin{align}
\label{eq:lac}
  \frac{d\,\mathrm{Bili}}{dt}
    = k_{14}\,[\mathrm{Lac}]^+
      - k_{15}\,[\mathrm{Bili}]^+.
\end{align}

The rate constants $\{k_i\}$ are learnable parameters constrained to
$(0,\,c_i)$ via a scaled sigmoid reparameterisation, where the upper
bounds $\{c_i\}$ encode clinical prior knowledge about interaction
magnitudes. 
The true underlying physiological dynamics are unknown. In particular, the rate constants $\{k_i\}$ are not directly available, and there may also exist a modeling residual between the real dynamics and the mechanistic dynamics described by \eqref{eq:PaO2_SpO2}--\eqref{eq:lac}. 
To account for this mismatch, we adopt the following hybrid physical model:
\begin{equation}
\label{eq:Med_NN}
   \frac{d\bm{\mathrm{x}}}{dt}=\mathsf{MedODE}(\bm{\mathrm{x}},\bm{\mathrm{u}})+\mathsf{NN}(\bm{\mathrm{x}},\bm{\mathrm{u}}).
\end{equation}
Here, $\mathsf{MedODE}(\bm{\mathrm{x}},\bm{\mathrm{u}})$ represents the known mechanistic component corresponding to \eqref{eq:PaO2_SpO2}--\eqref{eq:lac}, while $\mathsf{NN}(\bm{\mathrm{x}},\bm{\mathrm{u}})$ captures the residual dynamics not explained by the physical model. 
The neural component and the unknown parameters $\{k_i\}$ are optimized jointly in an end-to-end manner. 
Based on \eqref{eq:Med_NN}, we further train a neural network surrogate for its solution map over time, so that for any fixed $\bm{\mathrm{u}}$, the state trajectory can be directly approximated as a function of time, whose detailed algorithm refers to \citep{RAISSI2019686}.

\noindent
\textbf{Performance of the proposed PINN.}
Figure~\ref{fig:pinn_fit} shows the full-horizon autoregressive rollout of the trained PINN for a representative ICU stay.
Starting from the initial state, the model integrates the learned ODE
forward using only observed treatment actions, with no data correction.
The rollout closely tracks sparse clinical observations across all six
variables and remains stable over the full trajectory, validating the
PINN as a reliable patient simulator for the RL stage.

\begin{figure}[htbp]
\centering
\includegraphics[width=1\textwidth]{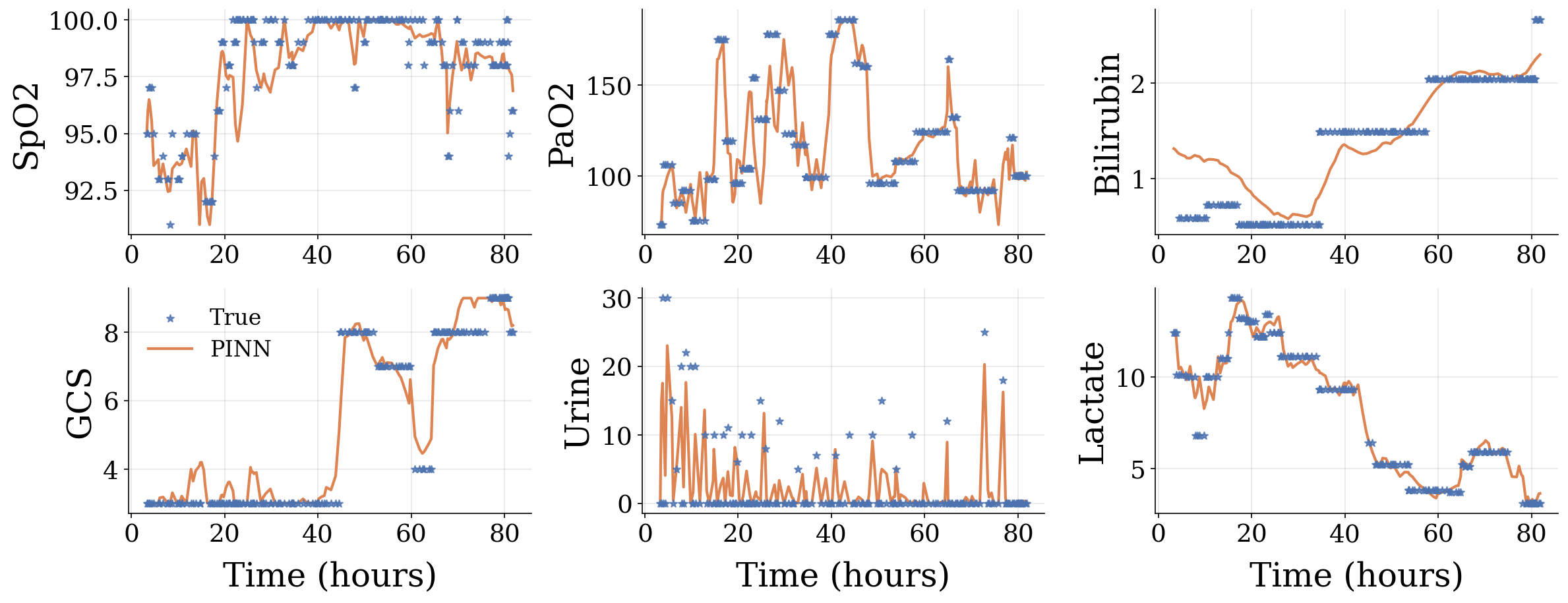}
\caption{Full-horizon PINN rollout versus observed clinical measurements
           for a representative ICU stay. Orange curves show the
           autoregressive rollout; stars denote sparse observations.}
\label{fig:pinn_fit}
\end{figure}

\section{Detailed Validation Results and Discussions}
\label{appendix:detailed_validation_results}

For each method, we train one policy and evaluate the learned policy under five different random seeds in the stochastic environment. 
The reported mean and standard deviation therefore reflect variability from the evaluation process rather than retraining variability.

\subsection{Training Process Setting}
\label{appendix:training}
\textbf{Overview of training.} Training proceeds in two sequential stages. First, a Physics-Informed Neural Network (PINN) is pre-trained on the recorded clinical trajectory of a single ICU patient to learn a compact patient-specific dynamics model. Second, the frozen PINN is used as a differentiable patient simulator, inside which the RL policy is trained entirely in silico without further interaction with real patient data. This two-stage design decouples physiological modelling from policy optimisation and ensures that the RL agent never queries real clinical observations during training.

\noindent
\textbf{PINN Pre-training.} The PINN is trained by minimising a weighted combination of a one-step derivative-consistency loss, a multi-step rollout loss, and a smoothness regulariser. The neural residual component and the learnable physiological rate constants $\{k_i\}$ are optimised jointly via Adam with cosine-annealing learning rate decay and separate weight decay coefficients. Training runs for up to 50{,}000 epochs with early stopping, after which the best checkpoint is restored. Hyperparameters are listed in Table~\ref{tab:hparam_pinn}.

\begin{table}[h]
\centering
\caption{PINN pre-training hyperparameters.}
\label{tab:hparam_pinn}
\small
\begin{tabular}{lr}
\toprule
\textbf{Hyperparameter} & \textbf{Value} \\
\midrule
Maximum epochs                       & 50{,}000 \\
Early-stopping patience              & 800      \\
Learning rate (neural component)     & $1\times10^{-4}$ \\
Learning rate (ODE rate constants)   & $5\times10^{-5}$ \\
Optimiser                            & Adam     \\
\bottomrule
\end{tabular}
\end{table}

\noindent
\textbf{RL Police Training.}
With the PINN fixed, an ICU treatment environment is constructed as an Option-SMDP. The policy operates on an augmented action space that extends the original treatment actions with an interaction duration 
$\delta t_k \in [\delta t_{\min}, \delta t_{\max}]$ for variable-duration decisions. The PINN then rolls out the patient state over the chosen duration using explicit Euler integration with box constraints, and a scalar reward signal is computed from the resulting physiological trajectory. A safety constraint is enforced on the lactate level throughout each inter-decision interval.
To study the effect of decision frequency, a separate policy is trained for each $K$; full per-$K$ settings are detailed in Section~\ref{appendix:evaluation}.
\begin{table}[h]
\centering
\caption{Option-SMDP environment and RL training hyperparameters.}
\label{tab:hparam_rl}
\small
\begin{tabular}{llr}
\toprule
\textbf{Component} & \textbf{Hyperparameter} & \textbf{Value} \\
\midrule
\multirow{5}{*}{Option-SMDP}
  & $\delta t_{\min}$                & 0.5\,h            \\
  & $\delta t_{\max}$                & 36.0\,h           \\
  & Max steps per episode $K$        & 5--20                \\
  & Total horizon $T$                & 96\,h             \\
\midrule
\multirow{2}{*}{Shared}
  & Hidden layer width                     & 256                       \\
  & Discount factor $\gamma$               & 0.997                     \\
\midrule
\multirow{6}{*}{SAC}
  & Total environment steps                & 100{,}000                 \\
  & Replay buffer capacity / warm-up       & 100{,}000 / 20{,}000      \\
  & Batch size                             & 512                       \\
  & Learning rate                          & $1\times10^{-4}$          \\
  & Soft-update rate $\tau$                & 0.002                     \\
  & Entropy temperature $\alpha$           & auto-tuned                \\
\midrule
\multirow{6}{*}{CPO}
  & Total training episodes                & 12{,}000                  \\
  & Rollouts per policy update             & 20                        \\
  & Trust-region radius $\delta$           & 0.01                      \\
  & Value-function learning rate           & $3\times10^{-4}$          \\
  & Value-function update steps            & 80                        \\
  & Cost limit $d$                         & 2.8--3.5                  \\
\midrule
\multirow{8}{*}{PCPO}
  & Total training episodes                & 12{,}000                  \\
  & Rollouts per policy update             & 20                        \\
  & Trust-region radius $\delta$           & 0.01                      \\
  & Value-function learning rate           & $3\times10^{-4}$          \\
  & Value-function update steps            & 80                        \\
  & Cost limit $d$                         & 2.8--3.5                  \\
  & Projection coefficient scale           & 1.0                       \\
  & Line search                            & off by default            \\
\midrule
\multirow{8}{*}{TRPO}
  & Total training episodes                & 12{,}000                  \\
  & Rollouts per policy update             & 20                        \\
  & Trust-region radius $\delta$           & 0.01                      \\
  & Value-function learning rate           & $3\times10^{-4}$          \\
  & Value-function update steps            & 80                        \\
  & Cost limit $d$                         & none                      \\
  & Safety projection                      & none                      \\
  & Line search                            & on                        \\
\bottomrule
\end{tabular}
\end{table}

\noindent
\textbf{Hyper Parameter Settings for All Methods.} 
Hyperparameters for all RL algorithms used in this paper have been summarized in Table \ref{tab:hparam_rl}.

\noindent
\textbf{Constant Safety Margin.}
In the experiments, instead of using the state- and duration-dependent GP-based safety margin in Algorithm \ref{alg:gp_conservative_surrogate}, we adopt a simpler constant safety margin. The reason is practical: when the safety constraint explicitly depends on $\delta t_k$, the interaction duration becomes coupled with both the reward and the safety term, which makes the policy optimization by $\mathsf{CPO}$ and $\mathsf{PCPO}$ numerically unstable and prevents the solver from consistently improving the solution. To avoid this issue, we replace the adaptive margin with a constant conservative threshold. For example, when the lactate safety bound is set to $8.5$, we may impose an effective threshold such as $3.5$ or $2.8$, corresponding to constant safety margins of $5.0$ and $5.7$, respectively. This can be interpreted as a robust approximation of the GP-based surrogate, in which the uncertainty margin is chosen conservatively enough to cover a relevant region of $(\bm{\mathrm{x}}_t,\delta t)$ rather than being adjusted pointwise. Although this simplification sacrifices adaptivity, it yields a numerically stable implementation and still preserves the intended role of the safety margin, namely, to enforce a conservative buffer that aims to mitigate the risk in between interactions. In addition, the cost limit is adjusted according to the number of interaction steps $K$ to maintain a consistent level of conservativeness. Specifically, for $K=\{5,8,10,12,15,17,20\}$, the cost limits are set to $\{2.8, 2.9, 3.0, 3.1, 3.3, 3.4, 3.5\}$, respectively.

\subsection{Evaluation Process Setting}
\label{appendix:evaluation}
\noindent\textbf{PINN Evaluation Setting.}
The trained PINN is evaluated in full-horizon autoregressive rollout mode: starting from the first observed state, the model integrates the learned ODE forward using only the recorded treatment actions, with no further correction from data. Predictions are inverse-standardised to original clinical units for interpretability, and qualitative fit is assessed by comparing the rollout trajectory against sparse clinical observations across all six physiological variables. The result is shown in Figure~\ref{fig:pinn_fit}.

\noindent\textbf{RL Policy Evaluation Setting.}
Each trained policy is evaluated over $n=100$ independent episodes
without gradient updates. To ensure a fair comparison across all four variants
($\mathsf{CPO}$-$\mathsf{O}$, $\mathsf{SAC}$-$\mathsf{O}$,
$\mathsf{CPO}$-$\mathsf{E}$, $\mathsf{SAC}$-$\mathsf{E}$),
all episodes within a given $K$ share an identical set of initial-state snapshots generated once and reused across models, so that differences in performance are not confounded by initial-condition variability. Robustness is assessed under stochastic conditions by injecting independent Gaussian noise into both the policy observation and the PINN transition, with noise standard deviation $0.02$ in normalised space and 100 independent noise realisations per policy.
To study the effect of decision frequency, each policy variant is
evaluated across all seven interaction budgets
$K \in \{5, 8, 10, 12, 15, 17, 20\}$, with a separate trained model
loaded for each $K$.
We report two metrics computed over the post-stabilisation window
$t \geq 20$\,h: mean per-step SOFA score, safety rate defined as the
fraction of episodes in which lactate remains below $8.5$\,mmol/L
throughout the window.
Full evaluation settings are summarized in Table~\ref{tab:eval_settings}.
\begin{table}[h]
\centering
\caption{RL policy evaluation settings.}
\label{tab:eval_settings}
\small
\begin{tabular}{lr}
\toprule
\textbf{Setting} & \textbf{Value} \\
\midrule
Test episodes              & 100            \\
Observation noise std      & $0.02$  \\
PINN transition noise std  & $0.02$  \\
Noise runs                 & 100            \\
Post-stabilisation window  & $t \geq 20$\,h \\
Lactate safety threshold   & 8.5\,mmol/L    \\
Evaluation horizon $T$     & 96\,h          \\
Interaction budgets $K$    & $\{5,8,10,12,15,17,20\}$ \\
\bottomrule
\end{tabular}
\end{table}

\subsection{Results of Training Processes}
\label{appendix:training_results}

\begin{figure}[htbp]
\centering
\includegraphics[width=1\textwidth]{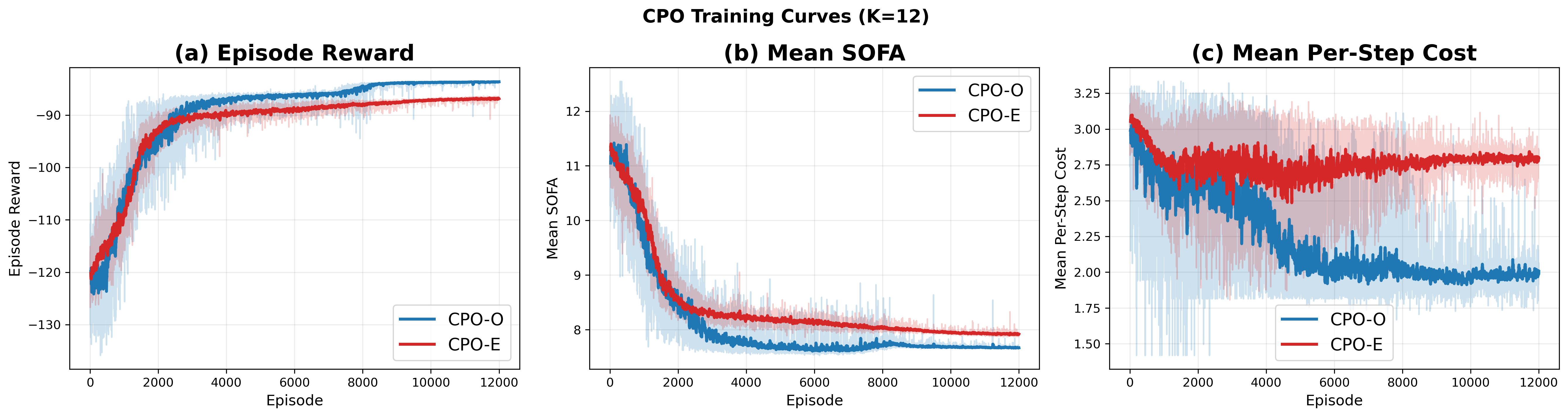}
\caption{Training process of CPO}
\label{fig:CPO_training}
\end{figure}
\noindent
Both variants of $\mathsf{CPO}$ exhibit stable training dynamics, with consistent reward improvement and reduction in SOFA over the training process. 
$\mathsf{CPO}$-$\mathsf{O}$ achieves a higher final reward level and a lower final SOFA score than $\mathsf{CPO}$-$\mathsf{E}$, indicating stronger asymptotic performance under adaptive interaction timing. 
The mean per-step cost also decreases more substantially for $\mathsf{CPO}$-$\mathsf{O}$, whereas $\mathsf{CPO}$-$\mathsf{E}$ remains at a higher cost level after the initial transient phase. 
Overall, the training curves suggest that adaptive interaction timing improves both treatment effectiveness and safety-related cost control for $\mathsf{CPO}$ at $K=12$.

Training with $\mathsf{SAC}$ shows less stable and more heterogeneous behavior across the two timing schemes. 
$\mathsf{SAC}$-$\mathsf{E}$ improves earlier in training, with a rapid increase in reward and a sharp reduction in SOFA, and it maintains a lower SOFA level throughout most of the later training phase. 
In contrast, $\mathsf{SAC}$-$\mathsf{O}$ remains relatively flat during the early stage, then improves later and eventually reaches a slightly higher reward level, but its final SOFA remains higher than that of $\mathsf{SAC}$-$\mathsf{E}$. 
These results indicate that adaptive timing can improve reward for $\mathsf{SAC}$, but does not necessarily translate into better clinical severity control. 
Compared with $\mathsf{SAC}$, $\mathsf{CPO}$-$\mathsf{O}$ provides a more favorable balance between reward improvement, SOFA reduction, and safety-related cost control.

\begin{figure}[htbp]
\centering
\includegraphics[width=1\textwidth]{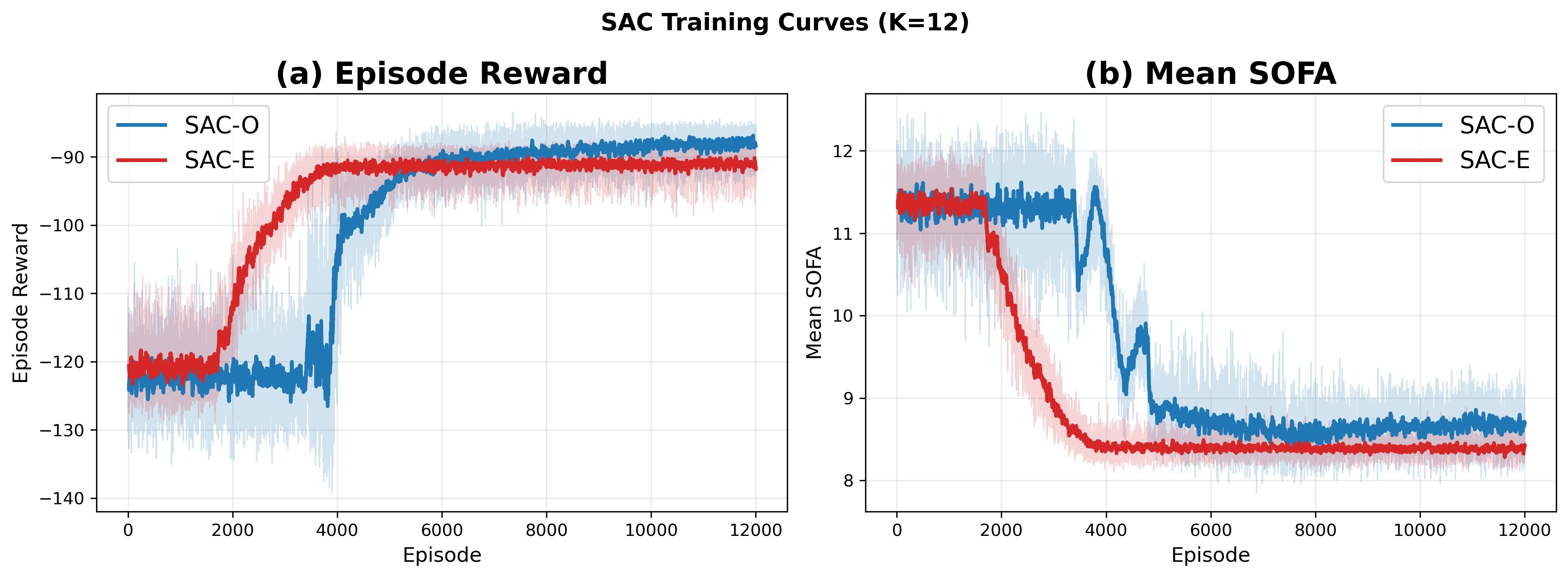}
\caption{Training process of SAC}
\label{fig:SAC_training}
\end{figure}

\begin{figure}[htbp]
\centering
\includegraphics[width=1\textwidth]{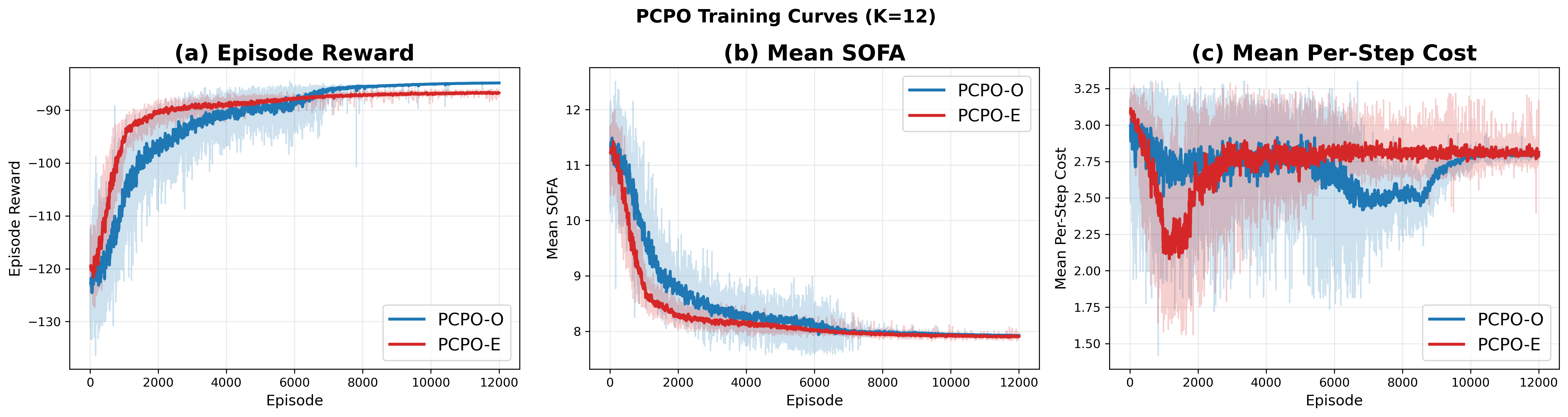}
\caption{Training process of PCPO}
\label{fig:PCPO_training}
\end{figure}

\noindent
Both variants of $\mathsf{PCPO}$ show stable and well-behaved training trajectories. 
$\mathsf{PCPO}$-$\mathsf{E}$ achieves faster early convergence, with more rapid reward improvement and quicker reduction in SOFA, while maintaining relatively smoother updates in the early stage. 
By comparison, $\mathsf{PCPO}$-$\mathsf{O}$ improves more gradually and exhibits stronger early-stage variability, but continues to improve over a longer horizon and eventually attains a slightly higher reward level with comparable final SOFA. 
The mean per-step cost of both variants remains within a similar range after the initial transient phase, although $\mathsf{PCPO}$-$\mathsf{O}$ displays a more noticeable mid-training reduction before returning close to the level of $\mathsf{PCPO}$-$\mathsf{E}$. 
Overall, $\mathsf{PCPO}$-$\mathsf{E}$ favors faster stabilization, whereas $\mathsf{PCPO}$-$\mathsf{O}$ achieves comparable clinical severity control with slightly better asymptotic reward performance.

The two variants of $\mathsf{TRPO}$ exhibit distinct optimization behaviors. 
$\mathsf{TRPO}$-$\mathsf{E}$ converges faster in the early stage, with more rapid reward improvement and a quicker decrease in SOFA, indicating smoother and more stable optimization under the equidistant interaction schedule. 
In contrast, $\mathsf{TRPO}$-$\mathsf{O}$ learns more gradually and exhibits larger variability during the early phase, but continues to improve throughout training. 
However, unlike the previous comparison, $\mathsf{TRPO}$-$\mathsf{E}$ maintains better final performance in this setting, achieving both a higher reward level and a lower SOFA score than $\mathsf{TRPO}$-$\mathsf{O}$. 
These results suggest that, for $\mathsf{TRPO}$ at $K=12$, the equidistant schedule provides faster and more favorable convergence, while the adaptive-time variant remains less stable and does not fully close the performance gap within the training horizon.

\begin{figure}[htbp]
\centering
\includegraphics[width=1\textwidth]{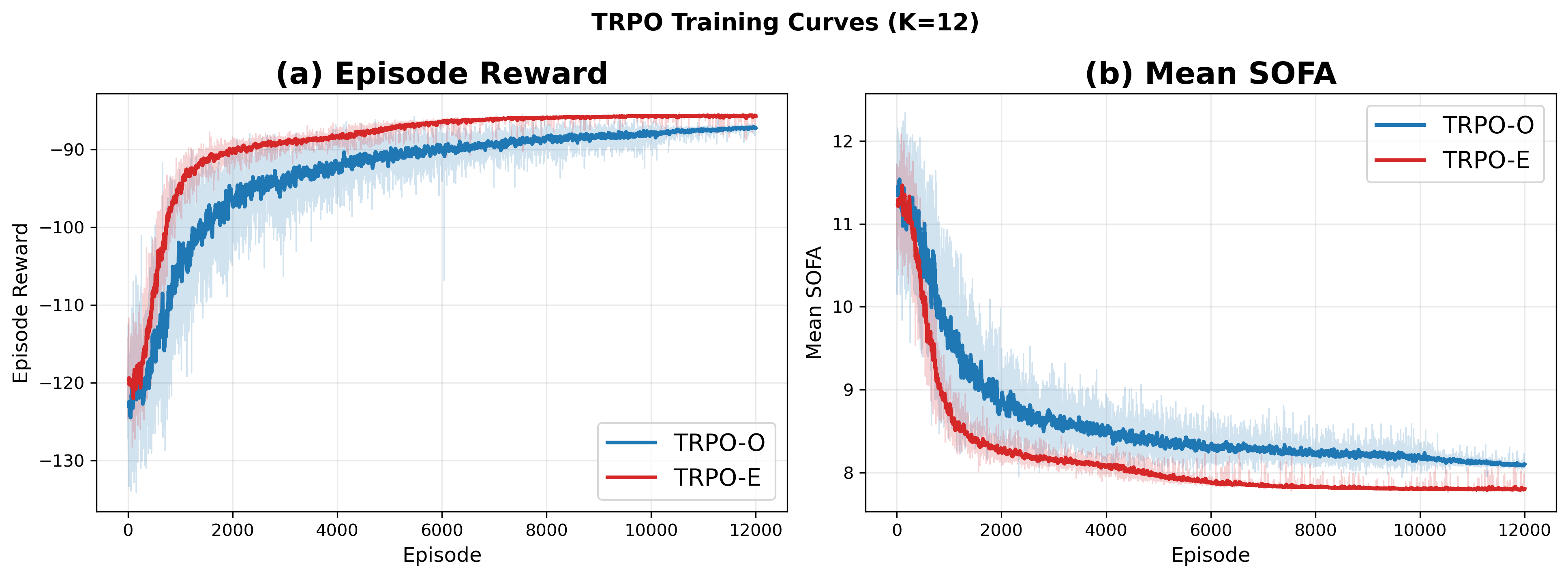}
\caption{Training process of TRPO}
\label{fig:TRPO_training}
\end{figure}

\subsection{State and Action Trajectories}
\label{appendix:state_action_trajectories}

\begin{figure}[htbp]
\centering
\includegraphics[width=1\textwidth]{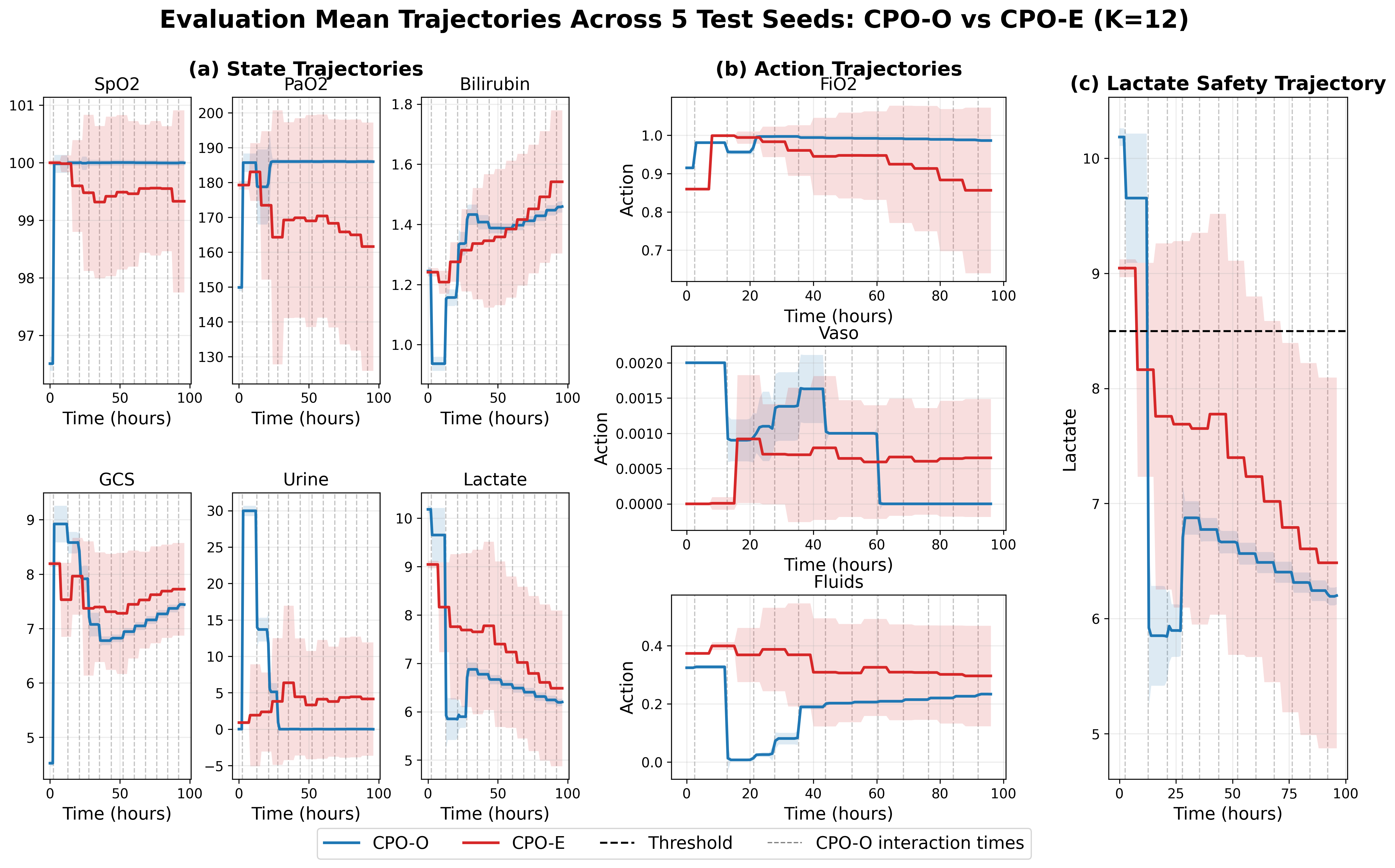}
\caption{Aggregated evaluation trajectories of $\mathsf{CPO}$ with adaptive ($\mathsf{CPO}$-$\mathsf{O}$) and equidistant ($\mathsf{CPO}$-$\mathsf{E}$) interaction timing at $K=12$, averaged across five evaluation seeds. 
Panel (a) shows mean state trajectories, panel (b) shows mean treatment actions, and panel (c) highlights lactate safety relative to the threshold. 
Gray vertical dashed lines indicate representative $\mathsf{CPO}$-$\mathsf{O}$ interaction times, and shaded regions indicate variability across evaluation rollouts.}
\label{fig:j4_cpo_eval_traj}
\end{figure}

\noindent
Figure~\ref{fig:j4_cpo_eval_traj} summarizes the trajectory-level behavior of the trained $\mathsf{CPO}$ policies during evaluation at $K=12$. 
To account for stochasticity in the evaluation environment, we aggregate trajectories over five evaluation seeds and all evaluation rollouts using a common hourly time grid. 
The gray vertical dashed lines indicate the representative interaction times of $\mathsf{CPO}$-$\mathsf{O}$, obtained by averaging the adaptive interaction times across evaluation rollouts. 
The main difference between the two variants appears in lactate control. 
Although both policies reduce lactate below the safety threshold, $\mathsf{CPO}$-$\mathsf{O}$ decreases lactate more rapidly after the initial phase and maintains a lower lactate level than $\mathsf{CPO}$-$\mathsf{E}$ over most of the horizon. 
By contrast, $\mathsf{CPO}$-$\mathsf{E}$ also improves lactate safety, but its lactate trajectory remains closer to the threshold for a longer period and decreases more gradually.

The action trajectories indicate that the two policies achieve lactate control through different treatment allocation patterns. 
$\mathsf{CPO}$-$\mathsf{O}$ makes more concentrated adjustments around its adaptive interaction times, with an early reduction in fluid administration followed by gradual reallocation later in the horizon. 
$\mathsf{CPO}$-$\mathsf{E}$ follows a more regular equidistant schedule and maintains relatively higher fluid administration over much of the trajectory. 
Both variants keep vasopressor administration close to zero, suggesting that the observed differences in lactate control are mainly associated with adaptive timing and changes in oxygen and fluid-related interventions rather than sustained vasopressor escalation. 
Overall, these trajectory-level results provide qualitative evidence that adaptive interaction timing may improve safety-relevant physiological control compared with an equidistant interaction schedule in this evaluation setting.

\subsection{Comprehensive Evaluations}
\label{appendix:comprehensive_PCPO_TRPO}

\begin{figure}[htbp]
\centering
\includegraphics[width=1\textwidth]{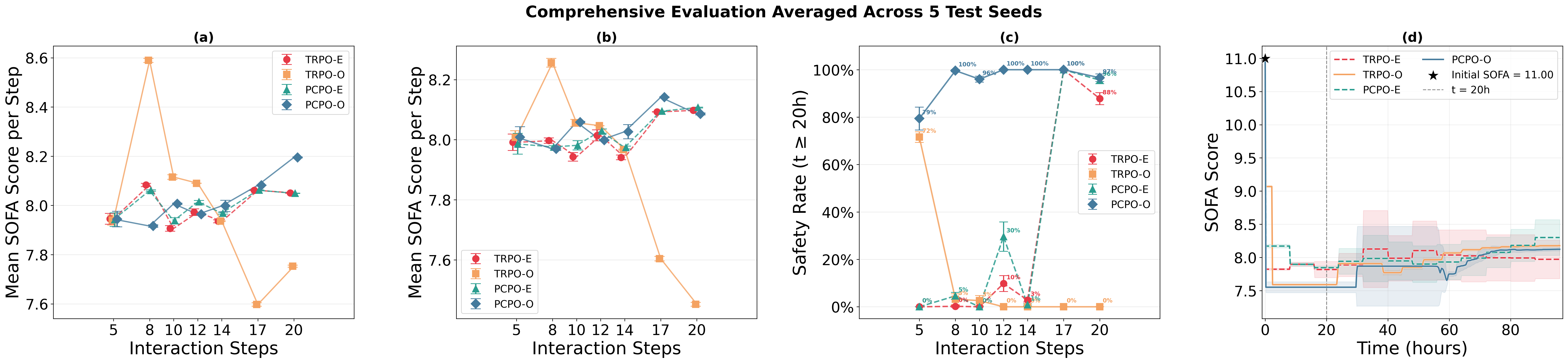}
\caption{Comprehensive evaluation of $\mathsf{TRPO}$ and $\mathsf{PCPO}$ across five test seeds. 
Panels (a) and (b) report mean SOFA scores per step under two evaluation settings, panel (c) shows the safety rate after $t\geq 20$ hours across different interaction steps, and panel (d) shows representative mean SOFA trajectories over time. 
Error bars and shaded regions indicate variability across test seeds.}
\label{fig:j5_trpo_pcpo_eval}
\end{figure}

\noindent
Figure~\ref{fig:j5_trpo_pcpo_eval} compares the evaluation performance of $\mathsf{TRPO}$ and $\mathsf{PCPO}$ under adaptive and equidistant interaction schedules. 
Across the two SOFA-based evaluation panels, $\mathsf{PCPO}$-$\mathsf{O}$ and $\mathsf{PCPO}$-$\mathsf{E}$ generally remain in a relatively stable SOFA range, while $\mathsf{TRPO}$-$\mathsf{O}$ exhibits larger fluctuations across interaction budgets. 
In particular, $\mathsf{TRPO}$-$\mathsf{O}$ shows strong sensitivity to the number of interaction steps, with noticeably higher SOFA at some intermediate values but lower SOFA at larger interaction budgets in panel (b). 
This suggests that the performance of $\mathsf{TRPO}$ is more dependent on the interaction schedule and evaluation setting.

The safety-rate comparison in panel (c) further highlights the difference between the two methods. 
$\mathsf{PCPO}$-$\mathsf{O}$ maintains consistently high safety rates across most interaction budgets, reaching close to or equal to $100\%$ in several settings. 
By contrast, $\mathsf{TRPO}$-$\mathsf{O}$ shows a sharp decline in safety as the interaction budget increases, while $\mathsf{TRPO}$-$\mathsf{E}$ improves at larger interaction budgets but remains less stable overall. 
The mean SOFA trajectory in panel (d) provides a complementary temporal view: the compared policies gradually stabilize after the initial phase, but their post-$20$h behavior differs in both level and variability. 
Overall, these results suggest that $\mathsf{PCPO}$, especially under adaptive interaction timing, provides more robust safety performance across interaction budgets than $\mathsf{TRPO}$.

\begin{figure}[htbp]
\centering
\includegraphics[width=1\textwidth]{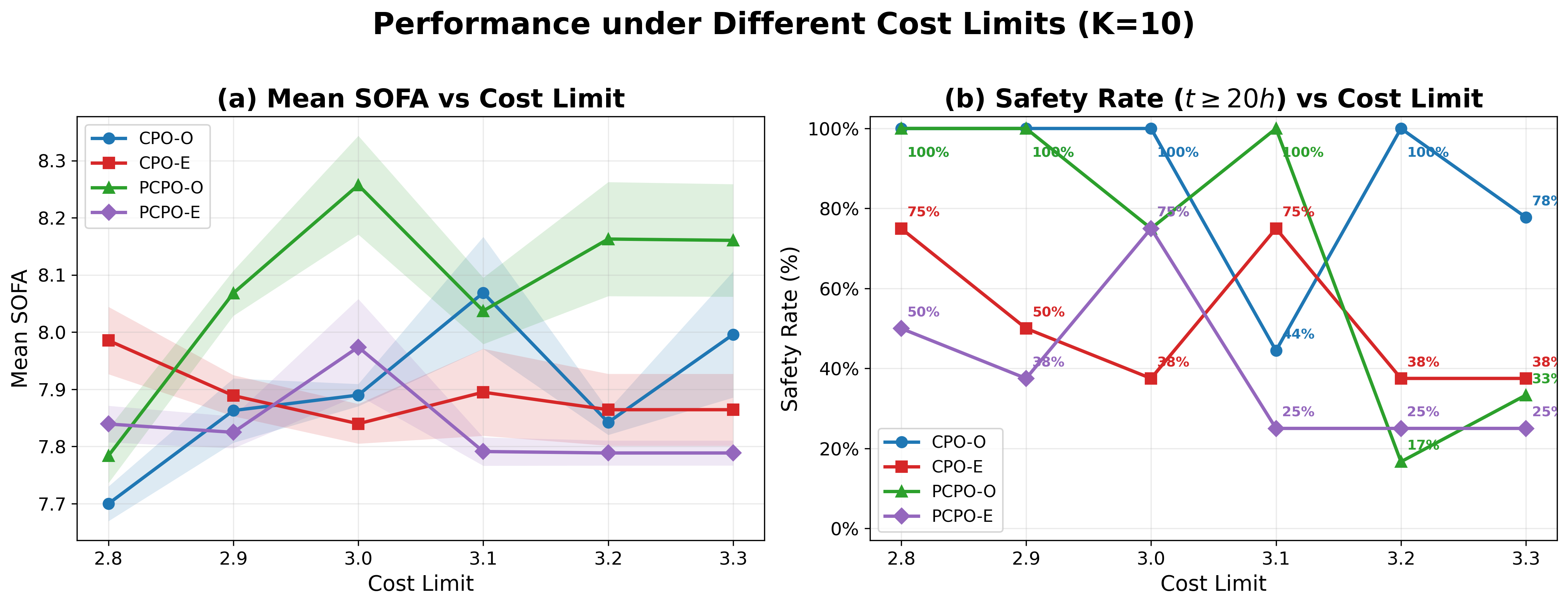}
\caption{Cost-limit sensitivity analysis for $\mathsf{CPO}$ and $\mathsf{PCPO}$ at $K=10$. 
Panel (a) shows mean SOFA as the safety cost limit varies from $2.8$ to $3.3$, while panel (b) shows the post-$20$h lactate safety rate. 
Markers indicate mean values and shaded regions indicate variability across evaluation trajectories.}
\label{fig:j5_cost_limit_sweep}
\end{figure}

\noindent
Figure~\ref{fig:j5_cost_limit_sweep} examines how the safety cost limit affects the performance of $\mathsf{CPO}$ and $\mathsf{PCPO}$ at $K=10$. This cost-limit sweep is based on a separate set of evaluation runs from those used in Figure~\ref{fig:comprehensive}, and therefore the numerical values should be interpreted as sensitivity results rather than direct seed-matched comparisons.
Panel (a) shows that the mean SOFA response is not monotone in the cost limit, indicating that changing the constraint threshold can alter both policy behavior and physiological outcomes in a nontrivial way. 
Among the four variants, $\mathsf{PCPO}$-$\mathsf{E}$ maintains relatively low and stable SOFA values across the tested cost limits, while $\mathsf{PCPO}$-$\mathsf{O}$ has higher SOFA in several settings. 
$\mathsf{CPO}$-$\mathsf{O}$ achieves low SOFA at some cost limits but is more variable, whereas $\mathsf{CPO}$-$\mathsf{E}$ remains comparatively stable.

Panel (b) shows that the safety rate is highly sensitive to the cost limit and differs substantially across methods. 
$\mathsf{CPO}$-$\mathsf{O}$ achieves high safety rates at several cost limits, including near-perfect safety at some settings, but its safety rate drops at other thresholds. 
$\mathsf{CPO}$-$\mathsf{E}$ shows more moderate safety rates across the sweep. 
For $\mathsf{PCPO}$, the adaptive variant achieves high safety at lower cost limits but becomes less stable as the cost limit changes, whereas the equidistant variant remains lower in most settings. 
Together, these results suggest that the cost limit should be selected jointly with the algorithm and interaction-timing scheme, rather than treated as a fixed implementation detail.

\subsection{Results for Cluster-Level Patient Model} 
\label{appendix:cluster_level_results}
\begin{figure}[htbp]
\centering
\includegraphics[width=1\textwidth]{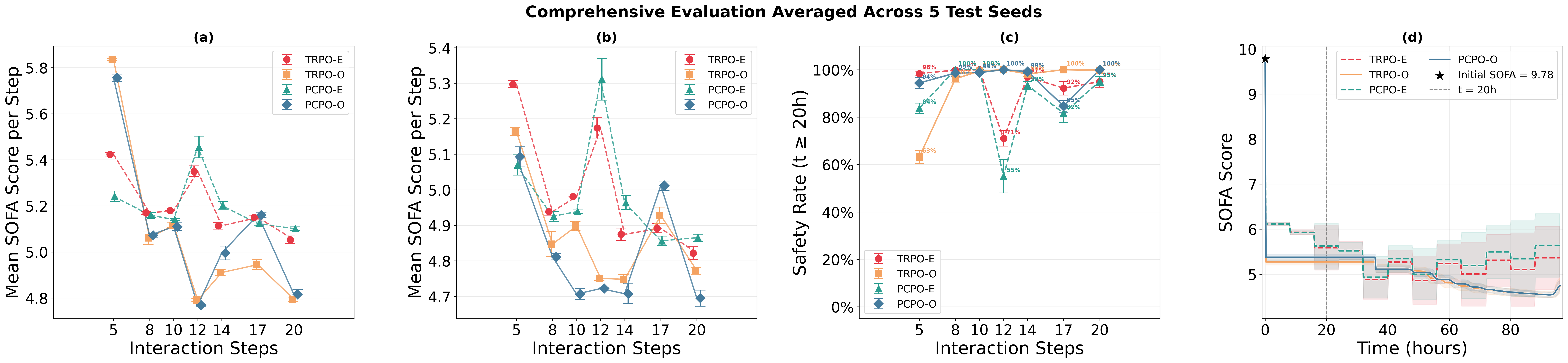}
\caption{Additional evaluation of $\mathsf{PCPO}$ and $\mathsf{TRPO}$ using a cluster-level PINN dynamics model. 
Panels (a) and (b) report the mean SOFA score per step under two evaluation summaries, panel (c) reports the post-20h lactate safety rate across interaction budgets, and panel (d) shows the mean SOFA trajectory over time. 
For this cluster-level evaluation, safety is computed using a stricter test threshold, where a trajectory is considered safe if lactate remains below 3.5 mmol/L after $t \geq 20$h. 
Error bars and shaded regions indicate variability across test seeds.}
\label{fig:j6_cluster_pcpo_trpo}
\end{figure}

\noindent
Figure~\ref{fig:j6_cluster_pcpo_trpo} provides an additional robustness evaluation using a cluster-level PINN dynamics model rather than a single-patient PINN. 
Patient clusters are formed greedily by selecting centers from high-density regions, where density is defined by the Mean Absolute Error (MAE) of state rollouts between two PINN models. Each center is assigned its 10 nearest neighbors, while centers are enforced to remain a minimum distance apart and only clusters satisfying a predefined distance threshold are retained. Specifically, the $\mathsf{PCPO}$ and $\mathsf{TRPO}$ policies are trained under the dynamics induced by a PINN model learned from patients within each cluster, while the cluster-center patient is used as the initial-state source during training and evaluation. 
This setting tests whether the qualitative behavior of the adaptive-time and fixed-time policies remains meaningful when the learned environment model is constructed from a patient cluster.
Since all methods satisfy the original lactate safety threshold of 8.5 mmol/L under this cluster-level model, we use a stricter test-time lactate threshold of 3.5 mmol/L in panel (c) to obtain a more informative comparison of safety behavior. 
The SOFA results in panels (a), (b), and (d) show that all four variants achieve relatively low SOFA scores under the cluster dynamics. 
Across most interaction budgets, the adaptive-time methods, especially $\mathsf{PCPO}$-$\mathsf{O}$, remain competitive with or better than their fixed-time counterparts in terms of SOFA control. 
At the same time, panel (c) shows that $\mathsf{PCPO}$-$\mathsf{O}$ maintains high post-20h safety rates under the stricter lactate threshold across interaction budgets. 
Overall, these results provide additional evidence that the proposed adaptive constrained policy remains stable under a cluster-level patient dynamics model, while the stricter lactate threshold reveals safety differences that are hidden under the original 8.5 mmol/L evaluation threshold.

\section{Experiments Compute Resources}
\label{appendix: computeresources}
All experiments were conducted on a MacBook equipped with an Apple M4 chip and 32GB of unified memory.

\noindent
\section{Broader Impact}
\label{appendix: broaderimpact}
The proposed framework aims to improve treatment decision-making in high-stakes clinical settings by jointly optimizing treatment administration and clinical interaction timing under trajectory-level safety constraints. This is particularly impactful in domains such as sepsis management in the ICU, where treatment decisions must balance efficacy against strict physiological safety requirements. The primary benefit lies in learning treatment policies that provably improve both safety and clinical effectiveness over standard interaction schemes, with finite-sample feasibility guarantees that ensure safety properties are preserved when learning from finite retrospective data. The safety constraints are grounded in clinically interpretable biomarkers such as blood lactate, making the framework more suitable for clinical decision-support than prior continuous-time treatment learning approaches that lack trajectory-level guarantees.

However, like all machine learning methods applied to healthcare, there are risks. The current framework assumes full state observability and synchronous treatment implementation, and safety guarantees apply to converged policies rather than during exploration. Improper deployment without adequate clinical oversight could introduce patient safety risks in settings where observations are partial or interventions asynchronous. We strongly emphasize that this framework is designed as a decision-support tool, not a substitute for clinical judgment, and responsible adoption requires rigorous prospective validation and close collaboration with healthcare professionals before any real-world deployment is considered.
\noindent



\end{document}